\documentclass[10pt]{article} % For LaTeX2e
\usepackage[preprint]{tmlr}
\usepackage{xcolor}
\definecolor{citationblue}{RGB}{0,70,140}
\definecolor{refgreen}{RGB}{0, 158, 115}
\usepackage[
    colorlinks=true,
    citecolor=blue,
    linkcolor=refgreen,
    urlcolor=blue
]{hyperref}
\usepackage{amsmath,amsfonts,bm}

\def\eqref#1{equation~\ref{#1}}
\def\1{\bm{1}}

\DeclareMathAlphabet{\mathsfit}{\encodingdefault}{\sfdefault}{m}{sl}
\SetMathAlphabet{\mathsfit}{bold}{\encodingdefault}{\sfdefault}{bx}{n}

\usepackage{graphicx}
\usepackage{url}
\usepackage{subcaption}
\usepackage{booktabs}
\usepackage{tabularx}
\usepackage{pifont}
\usepackage{multirow}
\usepackage{adjustbox}
\newcommand{\cmark}{\textcolor{green!60!black}{\ding{51}}}  % ✓ in dark green
\newcommand{\xmark}{\textcolor{red}{\ding{55}}}

\newtheorem{theorem}{Theorem}[section]
\newtheorem{lemma}[theorem]{Lemma}
\newtheorem{definition}{Definition}
\newtheorem{assumption}{Assumption}

\newtheorem{proof}{Proof}
\usepackage[ruled,vlined,linesnumbered]{algorithm2e}
\SetKwInput{KwInput}{Input}

\title{Long-Memory Reservoir Computing for Data-Scarce Dengue Forecasting}

\author{\name Rahul Goswami
  \email rahul.goswami@iitg.ac.in \\
  \addr Department of Mathematics,
  Indian Institute of Technology Guwahati, India \\
  SAFIR, Sorbonne University Abu Dhabi,
  United Arab Emirates
  \AND
  \name Shinjini Paul
  \email s.paul@math.leidenuniv.nl \\
  \addr Leiden University,
  Leiden, South Holland, Netherlands
  \AND
  \name Palash Ghosh
  \email palash.ghosh@iitg.ac.in \\
  \addr Department of Mathematics,
  Indian Institute of Technology Guwahati, India
  \AND
  \name Tanujit Chakraborty
  \email tanujit.chakraborty@sorbonne.ae \\
  \addr SAFIR, Sorbonne University Abu Dhabi,
  United Arab Emirates \\
  Sorbonne Center for Artificial Intelligence,
  Sorbonne University, Paris, France   
}

\def\month{MM}  % Insert correct month for camera-ready version
\def\year{YYYY} % Insert correct year for camera-ready version
\def\openreview{\url{https://openreview.net/forum?id=XXXX}} % Insert correct link to OpenReview for camera-ready version

\makeatletter
\AtBeginDocument{%
  \@ifundefined{headrulewidth}{}{%
    \renewcommand{\headrulewidth}{0pt}%
  }%
}
\makeatother
\begin{document}

\maketitle

\begin{abstract}
Accurate dengue forecasting is crucial for public health planning, but remains challenging because incidence series are often short, noisy, non-stationary, nonlinear, and often affected by long-range temporal dependence. Fractional differencing in Autoregressive Fractionally Integrated Moving Average (ARFIMA) helps balance non-stationarity and persistence, but its linear structure limits its ability to capture nonlinear dynamics. Deep neural networks can model nonlinear patterns, but usually require large training samples and do not explicitly encode statistical long memory. Echo State Networks (ESNs), a widely used reservoir computing framework, are attractive in this setting because they retain nonlinear recurrent dynamics while training only a simple readout, making them suitable for data-scarce scenarios. However, standard ESNs lack long-term memory from a time-series perspective. This study proposes a long-memory reservoir computing framework that integrates dedicated long-memory and short-memory ESN reservoirs with a ridge-regression readout. We introduce two variants: Fractional ESN (fESN), which incorporates fractional-differencing dynamics into the reservoir to encode long-range dependence directly, and Wavelet ESN (wESN), which extracts stable low-frequency components through wavelet smoothing before modeling them with a memory-aware reservoir. We establish theoretical guarantees for closed-loop reservoir dynamics, showing that standard ESNs induce short-memory processes under mild conditions, whereas the proposed long-memory reservoirs generate polynomially decaying dependence consistent with statistical long memory. Across multiple dengue datasets and forecasting horizons, fESN and wESN outperform statistical and deep learning baselines. Combining conformal prediction with fESN and wESN provides distribution-free calibrated uncertainty intervals for operational public health decision-making. The `memory-esn' Python package offers an implementation of our proposed approaches.

\end{abstract}

\section{Introduction}
Dengue remains a major public health concern in many tropical and subtropical regions, where timely forecasting can support preparedness, resource allocation, vector-control planning, and early warning interventions \citep{halstead2007dengue, wearing2006ecological, bhatt2013global}. Most infections cause a sudden onset of high fever, severe headache, muscle and joint pain, and skin rash. However, the illness can sometimes progress to dengue hemorrhagic fever (a severe form involving plasma leakage and bleeding) or dengue shock syndrome (a critical drop in blood pressure that can lead to organ failure and death if untreated) \citep{iqbal2025clinical}. The disease is prevalent in more than 100 countries, particularly in tropical and subtropical regions, placing roughly half of the world's population at risk, with Asia alone accounting for about 70\% of the global dengue burden \citep{WHO2025Dengue, MalariaConsortium2024}. Transmission occurs mainly through the bites of infected female Aedes aegypti mosquitoes, as well as Aedes albopictus, with the spread driven by a combination of environmental conditions, climate change, urbanization, globalization, mosquito ecology, and human movement \citep{whitehorn2015comparative}. Over the past five decades, the number of reported cases has increased by more than thirty-fold, including an eightfold global rise between 2000 and 2019. In 2024, the Americas alone reported more than 10.6 million cases, and India's annual total rose from about half a million in 2000 to more than five million in 2019 \citep{salehi2025global}. Large outbreaks can overwhelm healthcare systems, and in severe cases without prompt treatment, the case fatality rate can approach 20\%. Beyond the health impact, dengue imposes a substantial socioeconomic burden on many endemic countries through medical costs, lost productivity, and emergency response demands \citep{rehan2022dengue}. These trends underscore the urgent need for accurate, localized, and timely forecasts of dengue incidence as early warning tools to prevent and control outbreaks \citep{chakraborty2019forecasting, hii2012forecast}. Such forecasts enable proactive public health planning, targeted vector control measures, efficient resource allocation, and timely risk communication. 

This work is motivated by the absence of a single forecasting approach that simultaneously accommodates four salient features of dengue incidence series: long-range dependence, non-stationarity, nonlinear dynamics, and limited sample sizes. In practice, epidemiologists and public health analysts rely on classical linear models such as Autoregressive Integrated Moving Average (ARIMA) to forecast dengue incidence \citep{chen2025assessing,hasan2024dengue}. Despite the inherent simplicity and interpretability of these models, their mathematical formulation relies on several assumptions of linearity that are routinely violated in dengue data \citep{tang2022qualitative}. 
% ARIMA models can address some non-stationarity via differencing, but they still assume short memory, where correlations between current and past observations decay rapidly with lag , and linear relationships. However, dengue incidence series arise from a complex nonlinear interplay of climate variability, human mobility, mosquito ecology, virus serotype interactions, and socioeconomic conditions. 
% A key reason classical models remain dominant is that they are built around the principle of stationarity, which is central to both theoretical guarantees and practical forecasting. A time series is stationary when its statistical characteristics remain invariant over time; in many practical applications, the weaker notion of stationarity requiring only time invariant mean and autocovariance suffices. Stationarity plays a central role for two reasons. First, the limiting distributions of parameter estimators for models describing the data-generating process are typically guaranteed only under stationarity assumptions. Second, a stationary model produces series whose statistical structure does not evolve over time, enabling reliable long-horizon forecasting by iteratively propagating model-generated values forward. Consequently, transforming a non-stationary series into a stationary one is standard practice, most commonly via integer differencing.
ARIMA addresses the issue of nonstationarity via differencing; however, differencing carries a significant drawback. Integer-order differencing eliminates long-range dependence (or ``memory''), with autocorrelations that decay slowly and preserve the influence of past conditions over extended horizons, which is inherent in many real-world epidemic datasets \citep{schaffer2021interrupted}. This creates a fundamental dilemma: for meaningful estimation and stable long-term forecasting, the data-generating process should be stationary, yet it must also retain the intrinsic long-memory structure present in the observed data. This issue was addressed by \citet{GrangerJoyeux1980} via introducing fractional differencing \citep{Hosking1981}, which simultaneously achieves stationarity and preserves long memory. Autoregressive Fractionally Integrated Moving Average (ARFIMA) can represent slowly decaying correlations through fractional differencing; however, in practice, these models are constrained by rigid parametric forms and struggle with the nonlinear nature of real-world dengue data \citep{devianto2022hybrid}. This mismatch between model assumptions and the complexity of the data calls for more flexible, data-driven methods that can handle nonlinearity and capture long-term dependencies without restrictive structural assumptions.

A large class of deep learning models has been introduced that implements latent transitions through hidden states and observation maps using neural networks, learning parameters by backpropagation through time \citep{baydin2018automatic}. The most prominent examples are recurrent neural networks (RNNs) \citep{rumelhart1986learning} and their gated variants, Long Short–Term Memory (LSTM) \citep{hochreiter1997long}, and Gated Recurrent Units (GRUs) \citep{cho2014learning}. These models relax linearity and stationarity assumptions, and, in practice, they often outperform classical statistical baselines for dengue incidence forecasting \citep{zhao2023deep, llerena2021forecasting, panja2023epicasting}. However, the common intuition that a persistent hidden state confers genuine long-range dependence has recently been questioned. Under mild regularity conditions, the update equations of standard RNNs and LSTMs define geometrically ergodic network processes whose correlations decay exponentially, i.e., statistically speaking, they exhibit short memory \citep{zhao2020rnn}. Related analyses for GRUs report similar conclusions, even though gating alleviates vanishing gradients and can yield empirical gains in some tasks \citep{yang2025memory}. This limitation is salient for dengue, where multi-seasonal dynamics of mosquitoes and viruses and slowly varying socio-environmental drivers induce persistence; if a forecaster `forgets' too quickly, it risks discarding informative long-horizon signals.

To address the short-memory issue in RNNs and LSTMs, \citet{zhao2020rnn} adopted the classical idea of embedding a fractionally integrated (power-law) memory filter \citep{Hosking1981, granger1980introduction} within the recurrent architecture. This yielded memory-augmented variants such as the memory-augmented RNN (MRNNF) and the memory-augmented LSTM (MLSTMF), in which a learnable ``memory order'' controls the persistence of the impulse response. Analogously, the embedded filter endows the network with a polynomially decaying influence of past inputs, enabling dependence that extends well beyond the short horizons accessible to vanilla RNNs and LSTMs. The key idea in \citet{zhao2020rnn} is to maintain a memory state and update it with the fractionally differentiated input, denoted as the memory input, and the previous memory state, allowing the network to retain information from the distant past. The updates are similar to the usual hidden-state update in a vanilla RNN. Similarly, memory-augmented GRUs (namely, MGRUF) were introduced by allowing the fractional differencing parameter to adapt to changing temporal structure \citep{yang2025memory}. Although both fixed- and dynamic memory-augmented recurrent models can, in principle, capture nonlinear dynamics and long-range dependence, they require large training sets to achieve stable estimation. In dengue surveillance, counts are typically aggregated at weekly or monthly resolution, producing short samples and coarse temporal granularity, which together impede reliable estimation and generalization of high-capacity networks \citep{xu2020forecast}.

To address these limitations, this research aims to develop long-memory reservoir computing frameworks that enable fast training and robust processing of limited time series data (e.g., small sample datasets such as epidemic time series). Echo state networks (ESNs) \citep{jaeger2001echo} retain nonlinear recurrent dynamics while training only a ridge-regression readout, making them computationally efficient, stable, and better suited to data-scarce dengue forecasting. ESNs have become popular for time-series forecasting because they provide a lightweight method to learn nonlinear dynamical patterns while avoiding full backpropagation through recurrent weights \citep{banerjee2022predicting}. \citet{pathak2018model} used reservoir computing for model-free and hybrid prediction of chaotic and spatiotemporally chaotic systems, demonstrating its value when the governing dynamics are only partially known. Recent applications of ESNs to understand complex systems in ecology and physiology can be found in \citep{yang2026self}. Epidemic applications of ESNs include predicting burst synchronization and multistable dynamics \citep{roy2022model, ghosh2021reservoir}, highlighting their usefulness for complex-network time series and nonlinear dynamical regimes. Although recent reservoir-memory models store and retrieve dynamical attractors \citep{kong2024reservoir} or use attention over reservoir states to improve time-series classification \citep{ma2021echo}, they treat memory mainly as an architectural representation mechanism. Relatedly, \citet{carroll2022optimizing} emphasized that the fading-memory length of a reservoir computer is a key tunable dynamical property; however, this notion concerns how long transient input effects persist in a stable reservoir, whereas our work targets statistical long memory through polynomially decaying dependence \citep{geweke1983estimation}. In contrast, our focus is on long memory in the statistical time-series sense, characterized by slowly decaying polynomial dependence. Our aim is to encode fractional long-range dependence within ESNs while preserving their data efficiency, making them suitable for short, nonlinear, and persistent dengue incidence series.

\subsection*{Our Contributions} The key contributions from our study are as follows.
\begin{itemize}
\item {\bf Methodological Contribution:} We introduce a long-memory reservoir computing framework for data-scarce dengue forecasting. We propose two concrete variants of the framework, namely Fractional ESN (fESN) and Wavelet ESN (wESN). The fESN model injects a fractional-difference memory input into the memory reservoir, allowing the network to encode long-range dependence through polynomially decaying fractional-filter coefficients. The wESN model (particularly useful for high-frequency data or seasonal data) first applies wavelet smoothing to extract persistent low-frequency structure and then constructs the memory input from the smoothed signal. These two variants provide complementary mechanisms for jointly handling long-range dependence, non-stationarity, nonlinear dynamics, and small-sample constraints in dengue forecasting.

\item {\bf Theoretical Contribution:} We develop theoretical guarantees for the memory properties of ESN-based generative dynamics. Our main theoretical contributions are as follows: (a) 
Closed-loop ESNs are geometrically ergodic under mild conditions and therefore induce short-memory processes (Theorem~\ref{thm:esn-geo-ergodic} and~\ref{thm:esn_nonlinear}); (b) Vanilla ESNs cannot generate long-memory network processes (Theorem~\ref{thm:vanilla-no-lm}); and (c) Our proposed memory-augmented variants (fESN and wESN) recover genuine long-range dependence under mild conditions (Theorem~\ref{thm:fesn-lm} and~\ref{thm:wesn-lm}). These results provide a theoretical explanation for why explicit memory reservoirs are necessary for forecasting dengue series with persistent temporal dependence in data scarcity.

\item {\bf Empirical Contribution:} We provide a comprehensive empirical evaluation using nine dengue incidence datasets across multiple forecast horizons, where data are collected on monthly/weekly frequencies. The proposed fESN and wESN models are compared against statistical, recurrent, attention-based, memory-augmented recurrent, and reservoir-based baselines using four forecasting error metrics. The results show that the proposed models achieve the best average ranks across the main metrics, with fESN attaining the strongest overall performance and wESN providing closely comparable accuracy. Diebold--Mariano test \citep{diebold2002comparing} further confirms statistically significant forecasting gains over most competing methods.

\item {\bf Uncertainty Quantification and Validation of Theoretical results via Simulation:} We complement point forecasting with uncertainty quantification \citep{vovk2005conformal}, robustness analysis, and simulation diagnostics. Using split conformal prediction (a model-agnostic approach), we construct calibrated prediction intervals for ESN, fESN, and wESN forecasts and show that fESN often provides a favorable balance between empirical coverage and interval width. An ablation study demonstrates that wESN is robust to the choice of wavelet family and decomposition level, motivating the use of a simple Haar wavelet with one decomposition level. Finally, an ARFIMA-based simulation study verifies the theoretical memory claims by contrasting the exponential correlation decay of vanilla ESNs with the polynomial decay induced by the proposed long-memory reservoirs.

\end{itemize}

The remainder of this paper is organized as follows. Section~\ref{sec:motivating_example} presents a motivating example based on the dengue incidence series from Colombia, to illustrate the long-memory and data-scarcity problem in epidemic time-series forecasting. Section~\ref{sec:preliminaries} outlines the notations, definitions, and ESN architecture. Section~\ref{sec:prop} proposes two new variants of long-memory ESNs. Section~\ref{sec:asymptotics} provides the asymptotic results. Section~\ref{sec:exp} outlines the experimental procedures and results, along with conformal prediction. Section~\ref{sec:simulation} presents a simulation study, and Section~\ref{sec:discussion} concludes the paper with limitations and future research directions.

\section{Motivating Example: Colombia Dengue Incidence} \label{sec:motivating_example}

Dengue incidence data exhibit a distinctive combination of long-range dependence, non-stationary patterns, nonlinear dynamics, and limited sample sizes resulting from coarse temporal granularity. Since cases are typically reported on a weekly or monthly basis, only fifty-two or twelve observations are available each year. These structural patterns are evident in the weekly dengue incidence cases of Colombia reported during 2005 to 2016. As highlighted in Figure~\ref{fig:colombia}, panel (A) shows the historical dengue time series and illustrates its non-stationary behavior (also see Table~\ref{tab:epidemic_summary}). The series displays prolonged periods of elevated and reduced dengue activity, with outbreaks emerging and diminishing over extended intervals rather than fluctuating around a stable mean. Following a sudden rise or decline in case counts, the system does not revert quickly to baseline conditions; instead, the effects of these shocks persist for many weeks, creating a pronounced long-memory pattern in the underlying transmission dynamics (seasonal also in this case). This persistence is further highlighted in panel (B) of Figure~\ref{fig:colombia}, where the autocorrelation function (ACF) decays slowly with increasing lag and remains statistically significant across a wide range of weeks. Such a gradual decline confirms that dengue dynamics are governed by long-term dependence rather than short-lived fluctuations. Additionally, the series exhibits nonlinear behavior (as evidenced by statistical test results in Table~\ref{tab:epidemic_summary}), reflecting complex interactions within the transmission process. Panel (C) of Figure~\ref{fig:colombia} shows that, even after removing the dominant linear seasonal effect, captured by the 52-week lag, the residuals continue to vary systematically in ways indicative of regime-dependent responses or saturation effects. 

\begin{figure}
    \centering
    \includegraphics[width=0.98\linewidth]{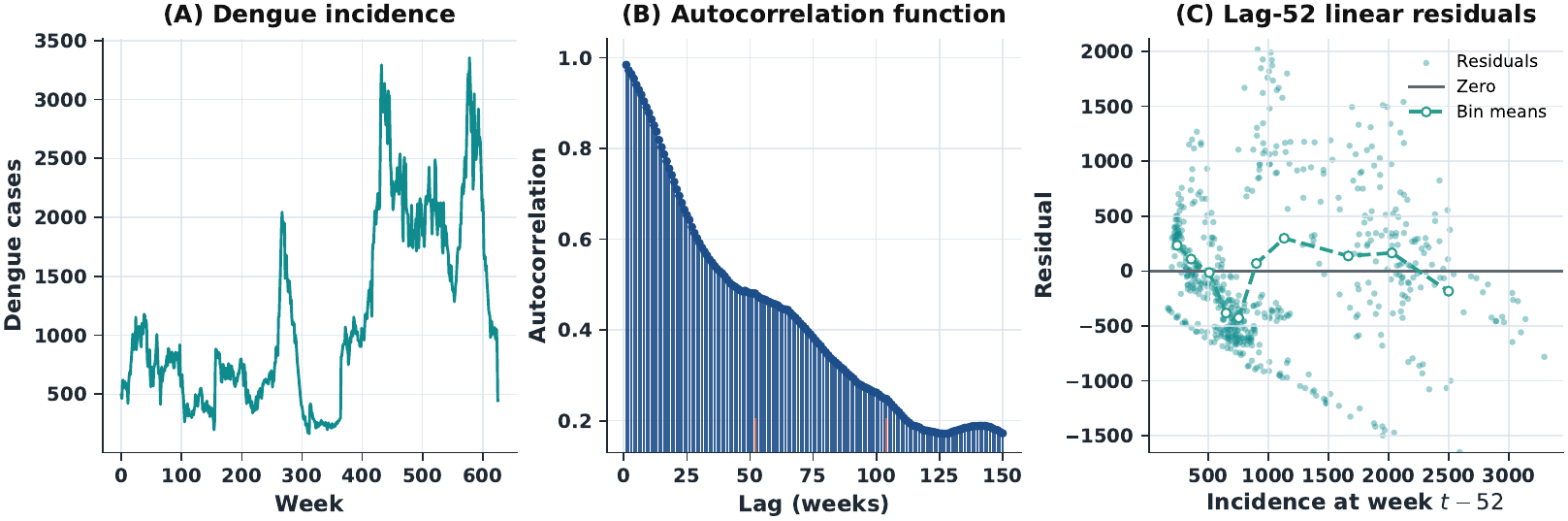}
    \caption{Weekly dengue incidence (Colombia). (A) Raw series shows large, clustered outbreaks and pronounced seasonality. (B) Autocorrelation up to 150 lags (vertical ticks at seasonal multiples of 52). The ACF decays slowly and remains positive far into the tail, consistent with long memory (power law persistence rather than the rapid exponential decay of short-memory processes). (C) Residuals from the linear mean. The bin–averaged residuals (dots, dashed) depart systematically from zero, indicating state-dependent (nonlinear) mean dynamics at the annual horizon.}
    \label{fig:colombia}
\end{figure}

For instance, seasons with very high incidence in the previous year often lead to outcomes that deviate from linear expectations, reflecting the influence of susceptibility depletion, vector ecology, or other nonlinear transmission dynamics. Such state-dependent patterns violate the affine conditional mean assumed by classical linear models and cannot be expressed as a simple linear superposition of past values. These intertwined features of dengue incidence dynamics pose a significant challenge for standard time series forecasting techniques, which typically address them in isolation rather than jointly. Classical ARIMA and ARFIMA models address non-stationarity or long-range dependence only partially: integer differencing destroys persistence, while fractional differencing preserves it but remains essentially linear and cannot capture nonlinear dengue transmission dynamics. Deep recurrent models can learn nonlinear patterns, but standard RNN, LSTM, and GRU-type architectures lack genuine statistical long memory under geometric-ergodicity arguments \citep{zhao2020rnn, yang2025memory}, while memory-augmented variants such as MRNN, MLSTM, and MGRU are often too ``data-hungry'' for short epidemiological series \citep{ahmadi2023statistical, panja2023epicasting, pathak2026deep}.

This mismatch between the statistical properties of the dengue incidence data and the assumptions underlying existing approaches underscores the need for a forecasting framework that can jointly capture long-range dependence, non-stationarity, nonlinear dynamics, and stable learning in small-sample settings. Motivated by these limitations, we propose long-memory ESNs (via fractional differencing and Wavelet decomposition) that incorporate an evolving memory reservoir capable of modeling long-memory behavior and non-stationarity while simultaneously accommodating nonlinear state dependence and maintaining robustness in low-sample environments.

\section{Preliminaries} \label{sec:preliminaries}

\subsection{Notation and Problem Setup}
In this study, the dengue incidence data for a region reflect a univariate time series, and we aim to develop a forecasting framework that can accurately predict the future dynamics of the epidemic incidence. In the mathematical notation, we use univariate (scalar) representations, and multivariate extensions (vector-valued quantities) are denoted in bold. Let $\{u(t)\}_{t=1}^T$ denote the dengue incidence cases, observed at time $t=1,\dots,T$. Given the historical information set $\mathcal{F}_T=\{u(1),\dots,u(T)\}$, our objective is to generate $H$–step–ahead forecasts $\left\{\hat{u}(T + h)\right\}_{h = 1}^H$. We denote the backshift operator by $B$, which is given by $B(u(t)) = u(t-1)$. Following \citet{zhao2020rnn}, we model the data‐generating mechanism of a recurrent network with input $u(t)$, model output $z(t)$, and target $y(t)$ as
\begin{equation}
\label{eq:network_process}
y(t) \;=\; z(t) \;+\; \epsilon(t), \qquad t\in\mathbb{Z},
\end{equation}
where $\{\epsilon(t)\}$ is an i.i.d.\ noise sequence. We refer to the generated target sequence, defined by Eqn.~\ref{eq:network_process}, as the network process.  Further, for the theoretical studies in Section~\ref{sec:asymptotics}, we consider a closed-loop 1-step recurrent setup without loss of generality, i.e., without any exogenous inputs and $u(t) = y(t-1)$. We denote the spectral radius of a matrix by $\rho(\cdot)$ and the uniform distribution by $\mathcal{U}[\cdot,\cdot]$.

\subsection{Echo State Networks (ESNs)}
ESNs are recurrent models in the reservoir computing family in which the recurrent reservoir is randomly initialized and kept fixed, while only a linear readout is trained \citep{jaeger2001echo}. Let \(u(t)\in\mathbb{R}\) denote the input at time \(t\), \(y(t)\in\mathbb{R}\) the forecasting target, and $\mathbf{x}(t)\in\mathbb{R}^{p}$ the reservoir state, where \(p\) is the reservoir size. A standard ESN evolves according to
\begin{align}
\label{eq:esn-loss}
\ell(t) &= \bigl\lVert z(t)-y(t)\bigr\rVert^2_2, \\
\label{eq:esn-readout}
z(t) &= g\!\bigl(\,\mathbf{W}_{\!\mathrm{out}}\,\mathbf{x}(t)\,\bigr),\\
\label{eq:esn-update}
\mathbf{x}(t) &= \tau\!\bigl(\,\mathbf{W}_{\!xu}\,u(t)\;+\;\mathbf{W}_{\!xx}\,\mathbf{x}(t-1) + \eta(t)\,\bigr), 
\end{align}
for \(t=1,\dots, T\), where \(\tau(\cdot)\) is an elementwise nonlinear activation function (e.g., $\tanh$), $g(\cdot)$ is the output activation (typically the identity for regression), $\mathbf{W}_{\!xu}\in\mathbb{R}^{p\times 1}$ is the input weight matrix, $\mathbf{W}_{\!xx}\in\mathbb{R}^{p\times p}$ is the recurrent reservoir matrix, $\mathbf{W}_{\!\mathrm{out}}\in\mathbb{R}^{1\times p}$ is learned from data, and $\eta(t) \sim \mathcal{N}(0, \sigma^2 \mathbf{I_p})$ is an optional reservoir noise term. We omit bias terms for simplicity and adopt this convention for all recurrent formulations used in the paper. We use the zero initial condition $\mathbf{x}(0)=\mathbf{0}_{p}$ and set $u(t)=0$ for $t\le0$ to initialize the dynamics. The matrices $\mathbf{W}_{\!xx}$ and $\mathbf{W}_{\!xu}$ are fixed after random initialization, and $\mathbf{W}_{\!xx}$ is commonly rescaled so that its spectral radius satisfies $\rho(\mathbf{W}_{\!xx})<1$, which is a standard sufficient design condition for the echo state property (ESP).

During training, the reservoir is run on the observed input to collect states $\{\mathbf{x}(t)\}$. After a washout of $T_0$ steps, only the readout $\mathbf{W}_{\!\mathrm{out}}$ is learned by minimizing
\begin{equation*}
\label{eq:esn:emp-risk}
\frac{1}{T-T_0}\sum_{t=T_0+1}^{T}\ell(t),
\end{equation*}
usually via least squares or ridge regression. An optional ``direct'' input–to–output skip when $g$ is the identity function can be included by augmenting the readout with the current input
\[
z(t)\;=\;g\!\Bigl(\,
\underbrace{\begin{bmatrix}\mathbf{W}_{\!\mathrm{out}}; & w_u\end{bmatrix}}_{\in\mathbb{R}^{1\times (p+1)}}
\begin{bmatrix}\mathbf{x}(t)\\ u(t)\end{bmatrix}\Bigr),
\]
in which case the design matrix stacks \(\{[\mathbf{x}(t);u(t)]\}\) over \(t=T_0+1,\dots,T\) and $w_u$ indicates the weight of the skip connection. Thus, ESNs retain nonlinear recurrent dynamics through the fixed high-dimensional reservoir, while avoiding backpropagation through recurrent weights.

For this reason, ESNs are computationally efficient and attractive in limited-data forecasting settings. However, as shown in Section~\ref{sec:asymptotics}, closed-loop ESNs are geometrically ergodic under suitable stability conditions and therefore exhibit short-memory behavior (refer to Theorem~\ref{thm:esn-geo-ergodic} and~\ref{thm:esn_nonlinear}). Consequently, while standard ESNs can capture nonlinear short-term dynamics, they cannot reproduce the slowly decaying dependence structure of long-memory processes (e.g., dengue epidemic time series in Section \ref{sec:motivating_example}). This limitation motivates the long-memory ESN architectures proposed in the next section.

\section{Proposed Method} \label{sec:prop}

\begin{figure}[ht!]
  \centering
  \includegraphics[]{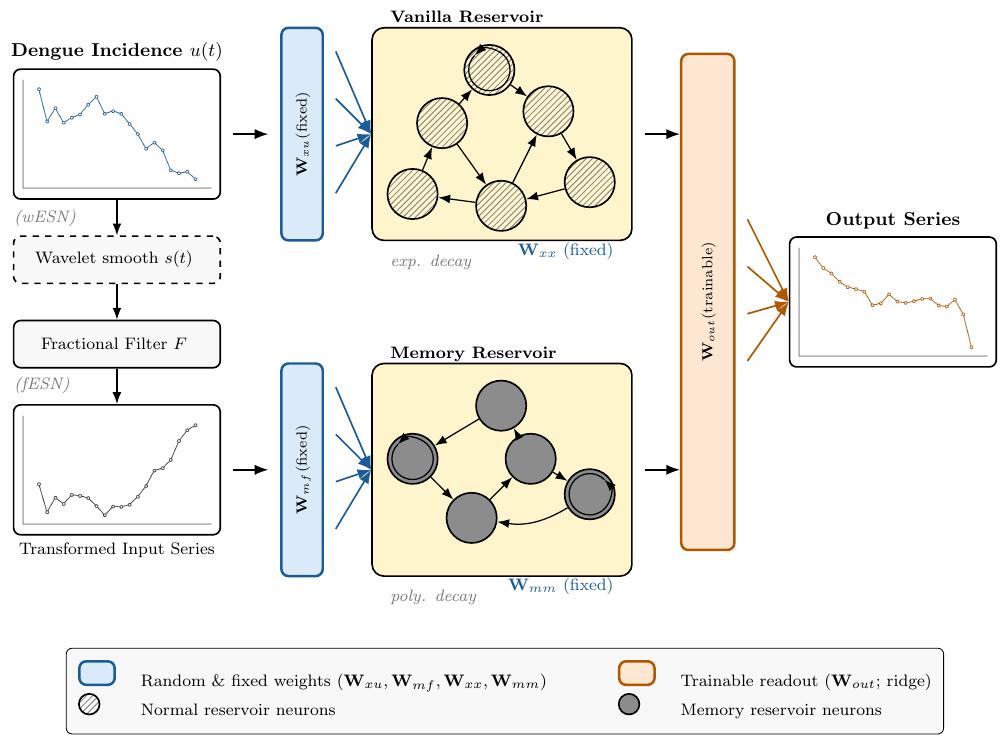}
  \caption{Flow diagram for {\it fESN} and {\it wESN}: Blue components denote \emph{random, fixed} weights (reservoir and input couplings). The orange component denotes the \emph{trainable} linear readout learned via ridge regression. The memory path includes an optional wavelet smoothing \(s(t)\) (for wESN) followed by fractional/memory filtering \(f(t)\); the standard path feeds the conventional reservoir. For fESN, it does not include the Wavelet smooth part and instead applies fractional differencing directly to the original time series. The standard reservoir captures \emph{short-term} dynamics (exponential decay), whereas the memory reservoir in fESN and wESN sustains \emph{long-term} dynamics (polynomial decay). States from both reservoirs are concatenated and mapped to the output. Direct connection from inputs is not represented.}
  \label{fig:maesn_arch}
\end{figure}

\subsection{Motivation}
RNNs and their variants like LSTM and GRUs can incorporate memory filters directly by concatenating the memory-filtered input with the previous memory state to form the current memory state and the weights are learned by backpropagation through time \citep{zhao2020rnn, yang2025memory}. ESNs, in contrast, do not update reservoir weights by backpropagation; they are fixed (random), and the reservoir is typically large. This makes it nontrivial to incorporate memory in the same way as in RNNs. To resolve this, we propose a parallel reservoir that ingests a memory input, rather than concatenating features. 
The memory input is injected through its own random input weights and propagated in a dedicated memory reservoir. Further, the memory state is updated by an equation analogous to Eqn.~\ref{eq:esn-update}, using separate memory-reservoir weights. This design integrates explicit long-memory filtering with standard ESN dynamics while avoiding backpropagation through recurrent weights, thereby preserving the computational efficiency and small-sample suitability of reservoir computing.

Motivated by these, we propose two variants of ESNs for modeling long-range dependence in univariate time series:
\emph{Fractional Echo State Network} (fESN) 
and \emph{Wavelet Echo State Network} (wESN). Both models extend the standard ESN by introducing a dedicated memory mechanism that captures persistent dependencies, while keeping the internal weights fixed after random initialization. We construct two coupled reservoirs for fESN and wESN to address this task. The first, the vanilla reservoir (named to make the difference explicit), captures short-term nonlinear dynamics directly from the raw input sequence. The second, termed the memory reservoir, is driven by memory input and is specifically designed to retain and represent long-range temporal dependencies. The key difference between the two models is how the input is fused in the memory reservoir. In wESN, the input is first smoothed by a wavelet transformation before fractional filtering is applied. The subsequent subsections describe each model in detail, followed by a unified section outlining the procedures for weight initialization and output weight training. The overall architecture of the proposed framework is illustrated in Figure~\ref{fig:maesn_arch}.

\subsection{Fractional Echo State Network (fESN)} \label{sec:fESN}
ESNs maintain an internal state according to Eqn.~\ref{eq:esn-update} that is updated by the input sequence. However, this state forgets past information rapidly. To address this limitation and enhance memory capacity, a dedicated state is introduced to retain input information over larger time horizons. A fractional filter for a univariate series $u(t)$ is defined by
$ F(u(t); d) = ((1 - B)^d - 1)u(t)$ where $d \in (0,1/2)$ is the fractional-differencing parameter. This filter is commonly used to incorporate long memory in time series analysis \citep{GrangerJoyeux1980, zhao2020rnn, yang2025memory}. Due to its ability to retain information from the distant past, it can be interpreted as a memory filter and is particularly suitable for recurrent processes. The filter acts as a soft-attention mechanism on past inputs, in which the attention weights decay polynomially. The soft-attention weights $\omega_k(d)$ are given by the differencing operator $(1-B)^d$, which can be expressed through its binomial expansion as 
\begin{equation} \label{eq:soft_weight}
(1 - B)^d = \sum_{k=0}^{\infty} \omega_k(d) B^k, \quad \omega_k(d) = \frac{\Gamma(-d+k)}{k!\Gamma(-d)},
\end{equation}
where $\Gamma(\cdot)$ denotes the gamma function. Since the series is infinite, in practice, we often neglect the weights for larger $k$ by setting a cutoff at some $k=K$. Hence, we define the cutoff-based truncated memory filter $F_{K}(u(t); d)$, whose soft weights are defined by
\begin{equation*}
    \omega_k(d) = \begin{cases}
        \frac{\Gamma(-d+k)}{k!\Gamma(-d)} & k \leq K\\
        0 & k > K.
    \end{cases}
\end{equation*}
We refer to $u(t)$ as the input and $F(u(t);d)$ as the memory input produced by the memory filter $F$ with fractional differencing parameter $d$. When necessary, we use its truncated version $F_{K}(u(t);d)$.

At time $t$, the network maintains $\mathbf{x}(t)\in\mathbb{R}^p$ (vanilla reservoir state) and $\mathbf{m}(t)\in\mathbb{R}^q$ (memory reservoir state). The main reservoir evolves from the raw input and its own past state:
\begin{equation}
\mathbf{x}(t) \;=\; \tanh\!\Bigl(\mathbf{W}_{\!xx}\,\mathbf{x}(t-1) + \mathbf{W}_{\!xu}\,u(t) + \boldsymbol{\eta}_x(t) \Bigr), \quad \boldsymbol{\eta}_x(t)\sim\mathcal{N}(\mathbf{0},\,\sigma_x^2\mathbf{I}_p)
\label{eq:fesn-main-update}
\end{equation}
where $\mathbf{W}_{\!xx}\in\mathbb{R}^{p\times p}$ is the recurrent weight matrix and $\mathbf{W}_{\!xu}\in\mathbb{R}^{p\times 1}$ the input weight vector (both are fixed after random initialization). The memory reservoir does not take $u(t)$ directly.  Instead, it receives a memory input that emulates fractional differencing of order $d\in(0,1/2)$.  For a window length $K$, define $f(t) \;=\; F_{K}(u(t);d)$ whose coefficients decay polynomially as $k^{-d-1}$ \citep{Hosking1981}. Thus, $f(t)$ retains information from the distant past, in contrast to exponential forgetting exhibited by the standard reservoir. The memory reservoir evolves as
\begin{equation}
\mathbf{m}(t) \;=\; \tanh\!\Bigl(\mathbf{W}_{\!mm}\,\mathbf{m}(t-1) + \mathbf{W}_{\!mf}\,f(t)+ \boldsymbol{\eta}_m(t) \Bigr), \quad \boldsymbol{\eta}_m(t)\sim\mathcal{N}(\mathbf{0},\,\sigma_m^2\mathbf{I}_q),
\label{eq:fesn-memory-update}
\end{equation}
where $\mathbf{W}_{\!mm}\in\mathbb{R}^{q\times q}$ is the recurrent weight matrix of the memory reservoir and $W_{mf}\in\mathbb{R}^{q\times 1}$ is a vector of the filter input weights. We define the input to the model as $\mathbf{u}'(t) = [u(t);f(t)]$ and states
as $\mathbf{x}'(t) = [\mathbf{m}(t)^\top;\mathbf{x}(t)^\top]$. Here $\mathbf{x}(t)$ is derived from input, as shown in Eqn.~\ref{eq:fesn-main-update}. The model output for the $h^{th}$ horizon is computed as:
\begin{align}
\label{eq:output_fesn}
 z_h(t) 
&= \mathbf{W}_{\!\mathrm{out}}^{(h)}\begin{bmatrix}
\mathbf{x}'(t)^\top;  \mathbf{u}'(t)^\top 
\end{bmatrix}^\top,
\qquad
\mathbf{W}_{\!\mathrm{out}}^{(h)} \in \mathbb{R}^{1 \times (p+q+2)},
\end{align} 
where $\mathbf{W}_{\!\mathrm{out}}^{(h)}$ denotes the output weight matrix corresponding to the $h^{th}$ horizon. The fESN therefore augments a standard ESN reservoir with a memory reservoir that is driven by a fractional–difference filter.  This architecture enables the network to capture both short–term nonlinear effects (via $\mathbf{x}(t)$) and persistent long–range dependencies (via $\mathbf{m}(t)$ and $f(t)$), while maintaining fixed internal weights after initialization. The loss at each time step, for predicting horizon h, is given by: $\ell_{h}(t) = \| z_h(t) - u(t + h)\|^2_2$, where $u(t+h)$ denotes the observed ground truth for horizon $h$.

\subsection{Wavelet Echo State Network (wESN)} \label{sec:wESN}

wESN extends the fESN by introducing a wavelet-based smoothing step before fractional memory filtering. The key idea is that raw input signals $u(t)$ may contain high-frequency noise or seasonal fluctuations that obscure the underlying long-range structure and nonstationarity. By first extracting a smooth approximation using the Maximal Overlap Discrete Wavelet Transform (MODWT) \citep{percival2000wavelet}, the model emphasizes persistent low-frequency components that are essential for capturing long memory.

The MODWT is a shift-invariant variant of the discrete wavelet transform (DWT) \citep{mallat2002theory}. Unlike the classical DWT, which downsamples after filtering and hence reduces signal length, the MODWT keeps all $T$ coefficients at each level. This ensures that transformed sequences remain aligned with the original decomposition time series. Let $\{\alpha(m)\}$ and $\{\xi(m)\}$ denote the scaling (low pass) and wavelet (high pass) filters of a chosen wavelet family respectively. At decomposition level $j$, these filters are dilated by inserting $2^{j-1}-1$ zeros between nonzero taps:
\begin{equation*}
 \alpha^{(j)}(m) =
\begin{cases}
\alpha(k), & m = 2^{j-1}k, \\[4pt]
0, & \text{otherwise},
\end{cases}
\qquad
\xi^{(j)}(m) =
\begin{cases}
\xi(k), & m = 2^{j-1}k, \\[4pt]
0, & \text{otherwise}.
\end{cases} 
\end{equation*}
Given the approximation coefficients $v^{(j-1)}(t)$ from the previous level 
(with $v^{(0)}(t)=u(t)$), the level--$j$ MODWT computes
\begin{equation*}
  v^{(j)}(t) = \sum_{m=0}^{L_j-1} \alpha^{(j)}(m)\;v^{(j-1)}(t-m \bmod T) 
\qquad \text{and} \quad
w^{(j)}(t) = \sum_{m=0}^{L_j-1} \xi^{(j)}(m)\;v^{(j-1)}(t-m \bmod T),  
\end{equation*}
where $v^{(j)}(t)$ are the \emph{smooth} coefficients, $w^{(j)}(t)$ are the 
\emph{detail} coefficients, and $L_j$ is the length of the dilated filter at level $j$. Both $v^{(j)}$ and $w^{(j)}$ retain length $T$, ensuring shift--invariance. For example, we consider the Haar filters
\begin{equation*}
   \alpha = \Bigl[\tfrac{1}{\sqrt2}, \tfrac{1}{\sqrt2}\Bigr],
\qquad
\xi = \Bigl[\tfrac{1}{\sqrt2}, -\tfrac{1}{\sqrt2}\Bigr]; 
\end{equation*}
and at level 1, the MODWT yields
\begin{equation*}
  v^{(1)}(t) = \tfrac1{\sqrt2}\,u(t) + \tfrac1{\sqrt{2}}\,u(t-1 \bmod T),  \qquad 
  w^{(1)}(t) = \tfrac1{\sqrt2}\,u(t) - \tfrac1{\sqrt2}\,u(t-1 \bmod T).
\end{equation*}
Thus, $v^{(1)}(t)$ is a local average (smoothed signal) and $w^{(1)}(t)$ captures local differences. In wESN, we retain only the smooth component, $s(t) = v^{(1)}(t)$ as the effective input.

The wESN employs two interacting reservoirs that are updated in parallel. The main reservoir models follow the rule of vanilla reservoir (See Eqn.~\ref{eq:esn-update}), while the memory reservoir captures long memory using the fractional filter:
\begin{equation}
f(t) \;=\; F_{K}(s(t);d).
\label{eq:fractional-filter}
\end{equation}
The essential difference between fESN and wESN is that the fractional memory in wESN is constructed from the smoothed wavelet signal $s(t)$ rather than the raw input $u(t)$. This ensures that long-range dependencies are estimated from low-frequency dynamics, while high-frequency noise or local fluctuations are suppressed.

\subsection{Weight Initialization and Training}  
\label{subsec:training}

Both fESN and wESN follow the mechanism of reservoir computing in which the recurrent and input weight matrices are randomly initialized once and then kept fixed throughout training. The model learns only the output weights, resulting in faster and more stable training compared to gradient-based recurrent architectures. This is also the reason ESNs and our proposed variants are lightweight models and effective in data-scarce situations. 

The reservoir matrix $\mathbf{W}_{\!xx}\in\mathbb{R}^{p\times p}$ is initialized with i.i.d.\ entries drawn from the uniform distribution $\mathcal{U}\!\left[-1/2,\,1/2\right]$ and then sparsified by randomly setting a fixed proportion $\varphi_x$ of its entries to zero. Finally, it is rescaled so that its spectral radius equals a prescribed value
\(\rho_x\in(0,1)\), i.e.,
\[
\mathbf{W}_{\!xx}\;\leftarrow\;\frac{\rho_x}{\rho(\mathbf{W}_{\!xx})}\,\mathbf{W}_{\!xx}.
\]
The input weights are
\(\mathbf{W}_{\!xu}\in\mathbb{R}^{p\times 1}\) with entries taken from \(\mathcal{U}[-1,1]\) and scaled by a factor $\psi_x$. For the memory reservoir, the recurrent weights \(\mathbf{W}_{\!mm}\in\mathbb{R}^{q\times q}\) are generated
analogously with entries drawn from \(\mathcal{U}[-1/2,1/2]\), sparsified at proportion \(\varphi_m\), and globally rescaled to
spectral radius \(\rho_m\in(0,1)\) by
$$
\mathbf{W}_{\!mm}\;\leftarrow\;\frac{\rho_m}{\rho(\mathbf{W}_{\!mm})}\,\mathbf{W}_{\!mm}.
$$
The filter–input weights are \(\mathbf{W}_{\!mf}\in\mathbb{R}^{q\times 1}\) with entries from
\(\mathcal{U}[-1,1]\) and scaled by a factor $\psi_m$. After running the reservoirs on the observed series, we form a single feature matrix shared across horizons. For each \(t=1, 2, 3, \dots, T\), define
\begin{equation*}
  \boldsymbol{\phi}(t)
=\begin{bmatrix}\mathbf{x}'(t)^\top \,; \mathbf{u}'(t)^\top\end{bmatrix}
\in\mathbb{R}^{1 \times d'}
\end{equation*}
with \(d'=p+q+2\). These vectors are stacked row-wise after discarding an initial set of states. It is a common practice in reservoir-based computing to eliminate transient states, so we define the ratio of the discarded states as the washout ratio, defined and denoted by $\zeta = \frac{T_0}{T}$, which is a tunable parameter to get $T_0 = {\zeta T}$. The stacked rows can be represented as follows:
\begin{equation*}
   \mathbf{Z}
=\begin{bmatrix}
\boldsymbol{\phi}(T_0);\boldsymbol{\phi}(T_0+1); \dots ; \boldsymbol{\phi}(T)\end{bmatrix}^\top
\in\mathbb{R}^{n\times d'},\quad n=T-T_0+1. 
\end{equation*}
For a forecasting horizon $h$, we collect the target values in
$
\mathbf{u}^{(h)}=\big[u({T_0} + h), u({T_0+1} + h), \dots,u({T} + h)\big]^\top\in\mathbb{R}^{n}.
$
For each \(h\in\{1,\dots,H\}\), we estimate a readout via ridge regression by solving the following:
\begin{equation*}
   \widehat{\mathbf{W}}_{out}^{(h)}
=\arg\min_{\mathbf{W}\in\mathbb{R}^{d'}}
\ \bigl\|\mathbf{u}^{(h)}-\mathbf{Z}\mathbf{W}\bigr\|_2^2
\;+\;\lambda\,\|\mathbf{W}\|_2^2, 
\end{equation*}
where $\lambda>0$ is the regularization parameter. The estimated output weights can be transposed to match the dimension for Eqn.~\ref{eq:output_fesn}. The first-order optimality conditions give the normal equations
\begin{equation*}
    \Bigl(\mathbf{Z}^\top\mathbf{Z}+\lambda\,\mathbf{I}_{d'}\Bigr)\,
\widehat{\mathbf{W}}^{(h)}
=\mathbf{Z}^\top\mathbf{u}^{(h)}.
\end{equation*}
Let \(\mathbf{G}=\mathbf{Z}^\top\mathbf{Z}+\lambda\,\mathbf{I}_d\) and we compute
its Cholesky factorization \(\mathbf{G}=\mathbf{R}^\top\mathbf{R}\) with
\(\mathbf{R}\in\mathbb{R}^{d'\times d'}\) being upper–triangular. For each horizon \(h\), the ridge solution is obtained by solving the pair of triangular systems
\[
\mathbf{R}^\top\mathbf{b}^{(h)}=\mathbf{Z}^\top\mathbf{u}^{(h)},
\qquad
\mathbf{R}\,\widehat{\mathbf{W}}^{(h)}=\mathbf{b}^{(h)},
\]
which yields the unique ridge solution. Since matrix \(\mathbf{Z}\) (hence \(\mathbf{G}\) and \(\mathbf{R}\)) does not
depend on $h$, the same Cholesky factor \(\mathbf{R}\) is reused for all horizons, while the right–hand side \(\mathbf{Z}^\top\mathbf{u}^{(h)}\)
changes with \(h\).  
To select the regularization parameter $\lambda$, we employ leave–one–out cross–validation (LOOCV). In linear ridge regression, the LOOCV error can be computed in closed form without retraining the model multiple times, which makes hyperparameter tuning highly efficient \citep{golub1979generalized}. This classical statistical trick avoids expensive re-estimation loops and further contributes to the computational advantages of our architecture. The detailed discussion on the LOOCV trick is deferred to Appendix~\ref{app:loocv}.  

\section{Theoretical Properties}
\label{sec:asymptotics}
In this section, we analyze the asymptotic properties of the proposed models. We begin by defining long memory, geometric ergodicity, and long memory network processes. Then, we present theoretical guarantees showing that a closed-loop (generative) ESN defines a short-memory process, and that fESN and wESN exhibit statistical long-memory behavior. Recent studies have established that RNNs, LSTMs, and GRUs are short-memory under mild conditions by showing their geometric ergodicity property \citep{zhao2020rnn, yang2025memory}. Extending their results for ESNs is not immediate since the reservoir weights are randomized. For simplicity, we consider a standard ESN without a direct input link. The same setting is adopted for our proposed long memory ESNs, omitting direct connections from both the raw input and the memory input. Also, we consider a univariate input to the model, representing dengue incidence over time (as in the empirical evaluation), without any exogenous variables. We begin by defining the notion of statistical long memory from a time series perspective.

\begin{definition}[Long memory]\label{def:long_memory}
Let $\{U(t):t\in\mathbb{Z}\}$ be a weakly stationary univariate process with autocovariance
$\gamma_U(k)$ and spectral density
\begin{equation*}
f_U(\lambda)=\frac{1}{2\pi}\sum_{k=-\infty}^{\infty}\gamma_U(k)\,e^{-ik\lambda},
\qquad \lambda\in[-\pi,\pi].
\end{equation*}
We say that the process $\{U(t)\}$ exhibits \emph{long memory} if $\sum_{k=-\infty}^{\infty}|\gamma_U(k)|=\infty$, and
\emph{short memory} otherwise, i.e., if $0<\sum_{k=-\infty}^{\infty}|\gamma_U(k)|<\infty$.
Equivalently, as $|\lambda|\to 0$, $f_U(\lambda)\to\infty$ for long memory, while
$f_U(\lambda)\to c_0\in(0,\infty)$ for short memory processes.
\end{definition}

Among long-memory models, the ARFIMA($p,d,q$) specification is one of the most widely used \citep{GrangerJoyeux1980, Hosking1981}. An ARFIMA($p,d,q$) process $U(t)$ is given by
\begin{equation*}
(1-B)^d U(t) = V(t),
\end{equation*}
where $V(t)$ follows an ARMA($p,q$) process representing the short-run dependence structure, and $d$ denotes the fractional differencing parameter. %In practice, one usually assumes $-1/2 < d < 1/2$ so that $U(t)$ remains stationary and invertible.
As established by \citet{Hosking1981}, when $k \to \infty$, the coefficients $\{\omega_k(d)\}$ from Eqn.~\ref{eq:soft_weight} satisfy $\omega_k(d) \sim k^{-d-1}$, while the lag-$k$ autocorrelation behaves as $\rho_U(k) \sim k^{2d-1}$. Moreover, \cite{McLeodHipel1978} noted that for $0 < d < 1/2$, the ARFIMA($p,d,q$) process $U(t)$ displays hyperbolically decaying dependence with pronounced persistence, reflecting long-memory behavior. This gradual polynomial decay distinguishes long-memory dynamics from short-memory models such as ARMA, whose coefficients decline at an exponential rate.

\begin{definition}[Geometric Ergodicity] \label{def:geometric_ergodicity}
In the context of a temporal homogeneous Markov chain $\{U(t)\}$ with state space $(S,\mathcal{F})$, where $\mathcal{F}$ is countably generated, we say $\{U(t)\}$ is geometrically ergodic if there exists a constant $0 < \rho < 1$ and a probability measure $\pi$ on $\mathcal{F}$ such that for every $u \in S$,
\begin{equation*}\label{eq:geom_ergodic}
\rho^{-s}\,\bigl\lVert P^s(u,\cdot)\;-\;\pi \bigr\rVert_{TV} \;\longrightarrow\;0
\quad\text{as }s \to \infty,
\end{equation*}
where $P^s(u,A) = \Pr\bigl(U(t+s) \in A \mid U(t) = u\bigr)$ denotes the $s$‐step transition probability and $\lVert \cdot\rVert_{TV}$ is the total variation norm.
\end{definition}

For a geometrically ergodic process $\{U(t)\}$, the Harris theorem \citep{meyn2012markov} implies that its autocorrelation function $\rho_U(k)$ decays exponentially fast as 
$k \to \infty$, i.e.,
$\rho_U(k)\sim\rho^{\,k}, \; 0 < \rho < 1$. Consequently, the corresponding auto-covariances $\gamma_U(k)$ are absolutely summable, i.e., $\sum_{k=-\infty}^{\infty} |\gamma_U(k)| < \infty$, since $\mathrm{Var}(U(t)) < \infty$. This exponential decay of correlations ensures that any geometrically ergodic process exhibits short memory \citep{meyn2012markov}. Next, we define a long-memory network process which plays a central role in our study.
\begin{definition}[Long‐Memory Network Process]
Consider the process defined by
\begin{equation*}
    y(t)=\sum_{k=0}^\infty \mathbf{A}_k\,u(t - k)+\varepsilon(t),
\end{equation*}
where $\mathbf{A}_k = \{(\mathbf{A}_k)_{ij}\}$ are coefficient matrices and $\varepsilon(t)$ is a noise term. The process $\{y(t)\}$ is said to exhibit long memory if $\exists \ i, j$ such that
\begin{equation*}
   (\mathbf{A}_k)_{i,j} \;\sim\;k^{-d-1}
\quad\text{as }k\to\infty, 
\end{equation*}
for some memory parameter $d\in(0,1/2)$. In this case, $y(t)$ depends on inputs from the distant past with only polynomial decay in influence.
\end{definition}
To proceed with the theoretical analysis, we impose the following assumptions.
\begin{assumption}\label{ass:network1}
   The density of the noise process $\varepsilon(t)$ is continuous and strictly positive on $\mathbb{R}$. Moreover, for some integer $\kappa\ge 2$, $\mathbb{E}|\varepsilon(t)|^\kappa<\infty$.
\end{assumption}

\begin{assumption}\label{ass:esp}
   The spectral radius of each reservoir matrix (e.g., $\mathbf{W}_{xx}$ and $\mathbf{W}_{mm}$) is strictly less than $1$.
\end{assumption}

Under Assumption~\ref{ass:network1}, the noise process $\varepsilon(t)$ has a continuous positive density everywhere on $\mathbb{R}$ and  $\varepsilon(t)$ has finite $\kappa$-th moments for some $\kappa \ge 2$. These conditions ensure that the Markov chain induced by the network process (see Eqn.~\ref{eq:network_process}) is aperiodic and irreducible (roughly, the Markov chain can move between any two states in finite time with positive probability), which implies ergodicity. Such assumptions are common in Markov chain theory and allow the use of Harris' Theorem \citep{meyn2012markov}, which connects drift conditions to geometric ergodicity.

Assumption~\ref{ass:esp} is necessary for ESP, under the standard compactness condition introduced by \cite{jaeger2001echo}. ESP states that the infinite history of past inputs uniquely determines the reservoir state. Equivalently, no two distinct infinite input sequences can lead to the same reservoir state. When the state transition map is continuous in both the state and input, the properties of uniform state contraction, state forgetting, and input forgetting are all equivalent to the network possessing the ESP. 

% \begin{remark}
% \citet{jaeger2001echo} showed that the echo state property holds even when the spectral radius exceeds 1, provided that the input is persistently nonzero. This condition is not appropriate for dengue incidence data because during non-seasonal periods, the series often takes the value zero, violating the persistence requirement.
% \end{remark}

A recurrent network can be abstracted as a temporal homogeneous Markov chain, following \citet{zhao2020rnn}. Concretely, let Eqn.~\ref{eq:network_process} denote the data-generating process. Let $y(t)\in\mathbb{R}$ be the output of the process, which is assumed to be closed-loop. Hence, we have that the input at $t^{th}$ step is the output of $(t-1)^{th}$ step, i.e., $u(t) = y(t-1)$. Let $\mathbf{x}(t)\in\mathbb{R}^p$ be the hidden state and $\varepsilon(t)\in\mathbb{R}$ be i.i.d. noise at time $t$. Then, the system can be written as
\begin{equation*}
  \begin{bmatrix}
y(t)\\
\mathbf{x}(t)
\end{bmatrix}
=
\mathcal{M}\bigl(y(t-1),\,\mathbf{x}(t-1)\bigr)
+
\begin{bmatrix}
\varepsilon(t)\\
\mathbf{0}_p
\end{bmatrix},  
\end{equation*}
where $\mathcal{M}:\mathbb{R}^{p+1}\to\mathbb{R}^{p+1}$ is the state‐transition function and $\mathbf{0}_p$ is the $p$-dimensional zero vector. This representation is called a recurrent network process. A linear echo state network process can be defined by taking the output and activation function in Eqn.~\ref{eq:esn-readout} and~\ref{eq:esn-update} as identity, and is given by
\begin{equation} \label{eq:closed_loop_esn_linear}
    \mathbf{x}(t)=\mathbf{W}_{\!xx}\,x(t-1)+\mathbf{W}_{\!xy}y(t-1) + \eta(t),\qquad
    y(t)=\mathbf{W}_{\!out}\,x(t)+\varepsilon(t),
\end{equation} 
where $\mathbf{W}_{\!xx} \in \mathbb{R}^{p \times p}$, $\mathbf{W}_{\!xy} \in \mathbb{R}^{p \times 1}$ and $\mathbf{W}_{\!out} \in \mathbb{R}^{1\times p}$. The state and output evolve continuously as
% \begin{equation}
% \label{eq:esn-linear-block}
% \begin{bmatrix}y(t)\\[2pt]\mathbf{x}(t)\end{bmatrix}
% =
% \underbrace{\begin{bmatrix}
% \mathbf{W}_{out}\mathbf{W}_{\!xy}, & \mathbf{W}_{\mathrm{out}}\mathbf{W}_{\!xx} \\
% \mathbf{W}_{\!xy}, & \mathbf{W}_{\!xx}\\
% \end{bmatrix}}_{\mathbf{M}_{\mathrm{ESN}}}
% \begin{bmatrix}y(t-1)\\[2pt]\mathbf{x}(t-1)\end{bmatrix}
% +\begin{bmatrix}\varepsilon(t)\\ \mathbf{0}_p\end{bmatrix},
% \end{equation}
% If the matrix $\mathbf{M}_{ESN}$ has spectral radius less than 1, then the ESN process is geometrically ergodic and hence exhibits short memory. The result follows directly from Theorem 2 of \citet{zhao2020rnn}. The following lemma proves that the echo state network is derived by $\mathbf W_{xx}+\mathbf W_{xy}\mathbf W_{\mathrm{out}}$.
\begin{equation}
\label{eq:esn-linear-block}
\begin{bmatrix}y(t)\\[2pt]\mathbf{x}(t)\end{bmatrix}
=
\underbrace{\begin{bmatrix}
\mathbf{W}_{out}\mathbf{W}_{\!xy}, & \mathbf{W}_{\mathrm{out}}\mathbf{W}_{\!xx} \\
\mathbf{W}_{\!xy}, & \mathbf{W}_{\!xx}\\
\end{bmatrix}}_{\mathbf{M}_{\mathrm{ESN}}}
\begin{bmatrix}y(t-1)\\[2pt]\mathbf{x}(t-1)\end{bmatrix}
+\begin{bmatrix}\varepsilon(t)\\ \eta(t)\end{bmatrix}.
\end{equation}
If the matrix $\mathbf{M}_{ESN}$ has spectral radius less than 1, then the ESN process is geometrically ergodic and hence exhibits short memory. The result follows directly from Theorem 2 of \citet{zhao2020rnn}. 
%The following lemma proves that the echo state network is derived by $\mathbf W_{xx}+\mathbf W_{xy}\mathbf W_{\mathrm{out}}$.

\begin{lemma}[Spectral radius of the ESN block]\label{lem:block-spectrum}
Let $p\in\mathbb N$, $\mathbf W_{\mathrm{out}}\in\mathbb R^{1\times p}$, $\mathbf W_{xy}\in\mathbb R^{p\times 1}$, and $\mathbf W_{xx}\in\mathbb R^{p\times p}$. Define
\[
\mathbf M_{\mathrm{ESN}}=
\begin{bmatrix}
\mathbf W_{\mathrm{out}}\mathbf W_{xy} & \mathbf W_{\mathrm{out}}\mathbf W_{xx} \\
\mathbf W_{xy} & \mathbf W_{xx}
\end{bmatrix},
\qquad
\mathbf F=\mathbf W_{xx}+\mathbf W_{xy}\mathbf W_{\mathrm{out}}\in\mathbb R^{p\times p}.
\]
Then, $\mathbf M_{\mathrm{ESN}}$ is similar to
\(
\begin{bmatrix}
\mathbf F & \mathbf W_{xy}\\
\mathbf 0 & 0
\end{bmatrix}.
\)
Consequently, the eigenvalues of  $M_{\mathrm{ESN}}$ are those of $\mathbf F\text{ together with }0$ and, in particular,
$\rho(\mathbf M_{\mathrm{ESN}})=\rho(\mathbf F).$
\end{lemma}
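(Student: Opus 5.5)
The plan is to exhibit the similarity explicitly with a unipotent block matrix that "subtracts $\mathbf W_{\mathrm{out}}$ times the state from the output coordinate". The key observation is that the first block row of $\mathbf M_{\mathrm{ESN}}$ is exactly $\mathbf W_{\mathrm{out}}$ times the second block row, since $[\mathbf W_{\mathrm{out}}\mathbf W_{xy},\ \mathbf W_{\mathrm{out}}\mathbf W_{xx}] = \mathbf W_{\mathrm{out}}[\mathbf W_{xy},\ \mathbf W_{xx}]$. A row operation will therefore zero out one block row, and the compensating column operation will produce $\mathbf F$.

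To put the zero row at the bottom, I first reorder coordinates as $(\mathbf{x},y)$, i.e.\ conjugate by the permutation swapping the blocks. This gives
\[
\widetilde{\mathbf M}=\begin{bmatrix}\mathbf W_{xx} & \mathbf W_{xy}\\ \mathbf W_{\mathrm{out}}\mathbf W_{xx} & \mathbf W_{\mathrm{out}}\mathbf W_{xy}\end{bmatrix}.
\]
Next, take
\[
\mathbf S=\begin{bmatrix}\mathbf I_p & \mathbf 0\\ -\mathbf W_{\mathrm{out}} & 1\end{bmatrix},
\qquad
\mathbf S^{-1}=\begin{bmatrix}\mathbf I_p & \mathbf 0\\ \mathbf W_{\mathrm{out}} & 1\end{bmatrix},
\]
and compute $\mathbf S\widetilde{\mathbf M}\mathbf S^{-1}$ in two steps.

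Left multiplication by $\mathbf S$ replaces the bottom row by (bottom row) $-\,\mathbf W_{\mathrm{out}}\cdot$(top row), which is zero, giving
\[
\begin{bmatrix}\mathbf W_{xx} & \mathbf W_{xy}\\ \mathbf 0 & 0\end{bmatrix}.
\]
Right multiplication by $\mathbf S^{-1}$ then adds $\mathbf W_{xy}\mathbf W_{\mathrm{out}}$ to the top-left block and leaves the other blocks unchanged, giving
\[
\begin{bmatrix}\mathbf F & \mathbf W_{xy}\\ \mathbf 0 & 0\end{bmatrix}.
\]
Composing the permutation with $\mathbf S$ yields a single invertible $\mathbf P$ with $\mathbf P\mathbf M_{\mathrm{ESN}}\mathbf P^{-1}$ equal to the claimed matrix. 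Explicitly, $\mathbf P$ maps $(y,\mathbf x)\mapsto(\mathbf x,\ y-\mathbf W_{\mathrm{out}}\mathbf x)$.

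For the consequences, similar matrices share characteristic polynomials. The target matrix is block upper triangular, so its characteristic polynomial is $\det(\lambda\mathbf I_p-\mathbf F)\cdot\lambda$. The eigenvalues of $\mathbf M_{\mathrm{ESN}}$ are therefore those of $\mathbf F$ together with $0$, counted with multiplicity. Since $0$ contributes nothing to the maximum modulus and $\mathbf F$ has at least one eigenvalue when $p\ge1$, we get $\rho(\mathbf M_{\mathrm{ESN}})=\rho(\mathbf F)$.

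There is no real obstacle here. The only point needing care is the bookkeeping of block dimensions and of the ordering convention, since the statement writes the triangular form with $\mathbf F$ in the top-left block while $\mathbf M_{\mathrm{ESN}}$ has the scalar $y$ first. I will handle this with the explicit permutation, or equivalently by directly verifying $\mathbf P\mathbf M_{\mathrm{ESN}}=\begin{bmatrix}\mathbf F & \mathbf W_{xy}\\ \mathbf 0 & 0\end{bmatrix}\mathbf P$ block by block.
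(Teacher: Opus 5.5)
Your proposal is correct and follows essentially the same route as the paper: the paper conjugates by the single matrix $S$ implementing $(y,\mathbf x)\mapsto(\mathbf x,\,y-\mathbf W_{\mathrm{out}}\mathbf x)$, which is exactly your composite $\mathbf P$, and then reads the eigenvalues off the block upper-triangular characteristic polynomial $\det(\lambda\mathbf I_p-\mathbf F)\cdot\lambda$. Factoring $\mathbf P$ into the block swap followed by the unipotent shear, as you do, only makes the cancellation of the bottom row slightly more transparent; that row vanishes because the first block row of $\mathbf M_{\mathrm{ESN}}$ is $\mathbf W_{\mathrm{out}}$ times the second.
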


Lemma~\ref{lem:block-spectrum} implies that the dynamics of the ESN are entirely determined by $\mathbf{F}$. In the following, we show that for the ESP to hold, the spectral radius
of $F$ must be less than 1.

% \begin{lemma}[ESP of closed loop ESN]\label{lem:esp-iff}
% Let $p\in\mathbb N$. Consider the closed-loop \emph{linear} ESN with univariate readout $y(t)\in\mathbb R$ and reservoir state $\mathbf{x}(t)\in\mathbb R^p$ as given in Eqn.~\ref{eq:closed_loop_esn_linear}. Define closed-loop state matrix $F$ as in Lemma~\ref{lem:block-spectrum} and fix an arbitrary noise path $\{\varepsilon(t)\}_{t\ge 0}$ applied to the initial states $\mathbf{x}_1(0)$ and $\mathbf{x}_2(0)$. Define the differences $\Delta\mathbf{x}(t)=\mathbf{x}_1(t)-\mathbf{x}_2(t)$ and $\Delta y(t)=y_1(t)-y_2(t)$. Then, we have $
% \Delta\mathbf{x}(t)=\mathbf{F}^{\,t}\,\Delta\mathbf{x}(0)$ and $\Delta y(t)=\mathbf{W}_{\mathrm{out}}\,\Delta\mathbf{x}(t)$. Then, the following are equivalent:
% \begin{enumerate}
% \item[(i)] \emph{(ESP)} For the fixed noise path $\{\varepsilon(t)\}$ and all initial states,
% $\Delta\mathbf{x}(t)\to \mathbf{0}$ and $\Delta y(t)\to 0$ as $t\to\infty$.
% \item[(ii)] All eigenvalues of $F$ lie strictly inside the unit disc, i.e., $\rho(F) < 1.$
% \end{enumerate}
% In particular, by Lemma~\ref{lem:block-spectrum}, $\rho(\mathbf{M}_{\mathrm{ESN}})=\rho(\mathbf{F})$, so ESP holds if and only if $\rho(\mathbf{M}_{\mathrm{ESN}})<1$.
% \end{lemma}
\begin{lemma}[ESP of closed loop ESN]\label{lem:esp-iff}
Let $p\in\mathbb N$, consider the closed-loop \emph{linear} ESN with univariate readout $y(t)\in\mathbb R$ and reservoir state $\mathbf{x}(t)\in\mathbb R^p$ as given in Eqn.~\ref{eq:closed_loop_esn_linear}. Define closed-loop state matrix $F$ as in Lemma~\ref{lem:block-spectrum} and fix arbitrary noise paths $\{\varepsilon(t)\}_{t\ge 0}$ and $\{\eta(t)\}_{t\ge 0}$ applied to the initial states $\mathbf{x}_1(0)$ and $\mathbf{x}_2(0)$. Define the differences $\Delta\mathbf{x}(t)=\mathbf{x}_1(t)-\mathbf{x}_2(t)$ and $\Delta y(t)=y_1(t)-y_2(t)$. Then, we have $
\Delta\mathbf{x}(t)=\mathbf{F}^{\,t}\,\Delta\mathbf{x}(0)$ and $\Delta y(t)=\mathbf{W}_{\mathrm{out}}\,\Delta\mathbf{x}(t)$. Then, the following are equivalent:
\begin{enumerate}
\item[(i)] \emph{(ESP)} For the fixed noise paths and all initial states,
$\Delta\mathbf{x}(t)\to \mathbf{0}$ and $\Delta y(t)\to 0$ as $t\to\infty$.
\item[(ii)] All eigenvalues of $F$ lie strictly inside the unit disc, i.e., $\rho(F) < 1.$
\end{enumerate}
In particular, by Lemma~\ref{lem:block-spectrum}, $\rho(\mathbf{M}_{\mathrm{ESN}})=\rho(\mathbf{F})$, so ESP holds if and only if $\rho(\mathbf{M}_{\mathrm{ESN}})<1$.
\end{lemma}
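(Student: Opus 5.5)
The plan is to reduce everything to a deterministic linear recursion for the differences and then apply the standard characterization of when matrix powers converge to zero.

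\textbf{Difference dynamics.} Run two copies of the closed-loop system in Eqn.~\ref{eq:closed_loop_esn_linear}. They use the same noise realizations $\{\varepsilon(t)\}$ and $\{\eta(t)\}$ but start from $\mathbf{x}_1(0)$ and $\mathbf{x}_2(0)$. The closed-loop substitution gives
\[
y_i(t-1)=\mathbf W_{\mathrm{out}}\mathbf x_i(t-1)+\varepsilon(t-1),
\]
so
\[
\mathbf x_i(t)=(\mathbf W_{xx}+\mathbf W_{xy}\mathbf W_{\mathrm{out}})\mathbf x_i(t-1)+\mathbf W_{xy}\varepsilon(t-1)+\eta(t).
\]
Subtracting the two copies cancels both noise terms exactly, because the paths are shared. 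This leaves $\Delta\mathbf x(t)=\mathbf F\Delta\mathbf x(t-1)$, and induction gives $\Delta\mathbf x(t)=\mathbf F^{t}\Delta\mathbf x(0)$. Subtracting the readout equations gives $\Delta y(t)=\mathbf W_{\mathrm{out}}\Delta\mathbf x(t)$.

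One indexing point needs care. The initial output $y_i(0)$ should be defined via the readout, so that it is consistent with $\mathbf x_i(0)$. Otherwise an arbitrary $\Delta y(0)$ would add a term $\mathbf F^{t-1}\mathbf W_{xy}\Delta y(0)$. That term still vanishes under (ii), but it complicates the statement, so I would fix the convention explicitly.

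\textbf{(ii)$\Rightarrow$(i).} If $\rho(\mathbf F)<1$, Gelfand's formula (or the Jordan form) gives, for any $\rho(\mathbf F)<r<1$, a constant $C$ with $\|\mathbf F^t\|\le C r^t$. Hence $\Delta\mathbf x(t)\to\mathbf 0$ geometrically. Boundedness of $\mathbf W_{\mathrm{out}}$ then gives $\Delta y(t)\to 0$.

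\textbf{(i)$\Rightarrow$(ii).} Suppose $\mathbf F^t\Delta\mathbf x(0)\to\mathbf 0$ for every initial difference. Choosing $\Delta\mathbf x(0)=\mathbf e_j$ for each $j$ shows that every column of $\mathbf F^t$ tends to zero, so $\mathbf F^t\to\mathbf 0$.

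Now take an eigenpair $\mathbf F\mathbf v=\lambda\mathbf v$ with $\mathbf v\in\mathbb C^p$. Then $\mathbf F^t\mathbf v=\lambda^t\mathbf v\to\mathbf 0$, which forces $|\lambda|<1$. This is the one step needing attention, since $\mathbf v$ may be complex while the initial states are real. The fix is to apply linearity to $\operatorname{Re}\mathbf v$ and $\operatorname{Im}\mathbf v$ separately, or simply to use $\mathbf F^t\to\mathbf 0$ as a matrix. Either way, $\rho(\mathbf F)<1$.

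\textbf{Final clause.} The statement $\rho(\mathbf M_{\mathrm{ESN}})=\rho(\mathbf F)$ follows from Lemma~\ref{lem:block-spectrum}, which gives the equivalence with $\rho(\mathbf M_{\mathrm{ESN}})<1$.

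I expect no serious obstacle here. The only delicate points are the complex-eigenvector step in (i)$\Rightarrow$(ii) and the initial-output convention, both handled above.
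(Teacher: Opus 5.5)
Your proposal is correct and follows essentially the same route as the paper. The shared noise cancels in the difference recursion to give $\Delta\mathbf{x}(t)=\mathbf{F}^{t}\Delta\mathbf{x}(0)$, (i)$\Leftrightarrow$(ii) then reduces to $\mathbf{F}^{t}\to\mathbf{0}\iff\rho(\mathbf{F})<1$, and the final clause comes from Lemma~\ref{lem:block-spectrum}. Your handling of the possibly complex eigenvector (first deducing $\mathbf{F}^{t}\to\mathbf{0}$ from the real basis vectors $\mathbf{e}_j$) and your convention $y_i(0)=\mathbf{W}_{\mathrm{out}}\mathbf{x}_i(0)+\varepsilon(0)$ make explicit two points the paper leaves implicit: the need for a real $\Delta\mathbf{x}(0)$ in its contrapositive step, and the elimination of $\Delta y(0)$ at $t=1$.
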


ESP asserts that under a fixed driving noise, the long-run behavior of the network is independent of the initial condition. For a linear closed-loop ESN, the difference between two trajectories driven by the same noise evolves autonomously as a homogeneous linear system. Thus, the ESP is equivalent to asymptotic stability of the system, i.e., its state-transition matrix has spectral radius $<1$.

Assumption~\ref{ass:esp} controls the open-loop reservoir ($\rho(\mathbf{W}_{xx})<1$), which is sufficient for the ESP in the input-driven case. Closing the loop changes the
effective state matrix to $\mathbf{F}=\mathbf{W}_{\!xx}+\mathbf{W}_{\!xy}\mathbf{W}_{\!\mathrm{out}}$. In closed loop, the ESP is exactly equivalent to
$\rho(\mathbf{F})<1$, and by Lemma~\ref{lem:block-spectrum}, $\rho(\mathbf{M}_{\mathrm{ESN}})<1$.
Thus, using $\rho(\mathbf{M}_{\mathrm{ESN}})<1$ (or $\rho(\mathbf{F})<1$) is a justified and necessary stability requirement for the closed ESN.

We now show that if $\mathbf{M}_{\mathrm{ESN}}$ has a spectral radius strictly less than 1, linear ESN is geometrically ergodic and thus possesses short memory.

\begin{theorem}\label{thm:esn-geo-ergodic}
Consider the closed ESN in Eqn.~\ref{eq:esn-linear-block} and suppose that $
\rho\!\left(\mathbf{M}_{\mathrm{ESN}}\right)<1$. Under Assumption~\ref{ass:network1},
the Markov chain
\begin{equation*}
   \mathbf{Y}(t)=\big(y(t);\,\mathbf{x}(t)^\top\big)^\top \in \mathbb{R}^{1+p} 
\end{equation*}
is geometrically ergodic.
\end{theorem}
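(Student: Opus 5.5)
The plan is to treat $\mathbf{Y}(t)$ as a linear vector autoregression $\mathbf{Y}(t)=\mathbf{M}_{\mathrm{ESN}}\mathbf{Y}(t-1)+\mathbf{e}(t)$ with i.i.d.\ innovations $\mathbf{e}(t)=(\varepsilon(t);\eta(t)^\top)^\top$, and to verify the hypotheses of the Harris/drift theorem \citep{meyn2012markov}. Concretely, I would establish three ingredients: (a) a Foster--Lyapunov geometric drift condition toward a compact set; (b) $\psi$-irreducibility and aperiodicity; (c) compact sets being small (petite). Geometric ergodicity then follows from Theorem 15.0.1 of \citet{meyn2012markov}. This mirrors Theorem 2 of \citet{zhao2020rnn}, so the main effort is adapting the minorization step to our block structure.

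For the drift, since $\rho(\mathbf{M}_{\mathrm{ESN}})<1$, which by Lemma~\ref{lem:block-spectrum} is the same as $\rho(\mathbf{F})<1$, there exist a matrix norm $\|\cdot\|_*$ and a $\beta<1$ with $\|\mathbf{M}_{\mathrm{ESN}}\|_*\le\beta$. One explicit choice is the norm induced by $\sum_{s\ge0}\beta^{-s}|\mathbf{M}^s v|$ for $\beta\in(\rho,1)$. Take $V(\mathbf{Y})=1+\|\mathbf{Y}\|_*$. Then
\[
\mathbb{E}[V(\mathbf{Y}(t))\mid \mathbf{Y}(t-1)=\mathbf{y}]\le 1+\beta\|\mathbf{y}\|_*+\mathbb{E}\|\mathbf{e}\|_*,
\]
and the last term is finite because $\kappa\ge 2$ and $\eta$ is Gaussian. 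Hence $PV\le \beta' V + b\,\mathbf{1}_C$ for some $\beta'\in(\beta,1)$ and a large closed ball $C$.

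For irreducibility, I would avoid relying on the noise in the $\mathbf{x}$-coordinates alone, since the paper sometimes sets $\eta\equiv0$. The approach is to look at the $(p+1)$-step transition: $\mathbf{Y}(t+p+1)=\mathbf{M}^{p+1}\mathbf{Y}(t)+\sum_{j=0}^{p}\mathbf{M}^j\mathbf{e}(t+p+1-j)$. When $\eta$ is present with covariance $\sigma^2\mathbf{I}$ and $\varepsilon$ has a positive continuous density, $\mathbf{e}$ already has a positive continuous density on $\mathbb{R}^{1+p}$. The one-step kernel then has a density bounded below on compacts, uniformly over starting points in compact sets. This gives Lebesgue-irreducibility, aperiodicity (because the density is positive at the same step from every point), and smallness of compact sets, which makes the chain a $T$-chain. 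In the degenerate case $\eta\equiv0$, I would instead use that $\mathbf{x}(t)=\mathbf{F}\mathbf{x}(t-1)+\mathbf{W}_{xy}\varepsilon(t-1)$. A controllability condition on $(\mathbf{F},\mathbf{W}_{xy})$ then makes the $p$-fold sum of noise terms absolutely continuous with positive density. I expect this to be the main obstacle: without reservoir noise the noise enters only through a rank-one channel, so a rank/controllability assumption (generic for random reservoir weights) must be invoked or added.

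Combining the drift inequality with the small set $C$, the Meyn--Tweedie theorem yields geometric ergodicity of $\mathbf{Y}(t)$, with a unique invariant measure $\pi$ having $\int V\,d\pi<\infty$.
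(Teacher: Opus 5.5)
Your proposal is correct and follows essentially the paper's argument. Both derive a Foster--Lyapunov drift condition from $\rho(\mathbf{M}_{\mathrm{ESN}})<1$, obtain Lebesgue-irreducibility and small compact sets from the positive joint density of $(\varepsilon(t);\eta(t))$, and conclude via a Tweedie/Meyn--Tweedie criterion.

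The differences are only technical:
\begin{itemize}
\item Your adapted norm makes $\mathbf{M}_{\mathrm{ESN}}$ a one-step contraction, so $V=1+\|\mathbf{Y}\|_*$ works directly. The paper instead passes to the $s$-step chain with $\|\mathbf{M}_{\mathrm{ESN}}^s\|<1$, uses $\psi(\mathbf{y})=1+\|\mathbf{y}\|^\kappa$, and transfers the result back via Tj{\o}stheim's Lemma 3.1.
\item You get smallness of compact sets by direct minorization, whereas the paper uses the Feller property together with irreducibility.
\item Your remark that the case $\eta\equiv 0$ needs an extra controllability condition is correct. The paper's step ``the joint density is positive'' silently assumes $\sigma>0$, which Assumption~1 alone does not provide.
\end{itemize}
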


Now, we extend this result to a nonlinear ESN with a continuous and bounded activation function.

\begin{theorem} \label{thm:esn_nonlinear}
Suppose the output and activation functions $g(\cdot)$ and $\tau(\cdot)$ respectively in Eqn.~\ref{eq:esn-readout}-~\ref{eq:esn-update} are continuous and bounded. If Assumption~\ref{ass:network1} holds, then the ESN process is geometrically ergodic and has short memory.
\end{theorem}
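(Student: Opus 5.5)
We plan to follow the strategy of \citet{zhao2020rnn} for nonlinear RNNs, adapted to the closed-loop ESN. Write the process as a Markov chain $\mathbf{Y}(t)=(y(t);\mathbf{x}(t)^\top)^\top$ with
\[
\mathbf{x}(t)=\tau\bigl(\mathbf{W}_{xx}\mathbf{x}(t-1)+\mathbf{W}_{xu}y(t-1)+\eta(t)\bigr),\qquad y(t)=g\bigl(\mathbf{W}_{\mathrm{out}}\mathbf{x}(t)\bigr)+\varepsilon(t).
\]
The driving noise is i.i.d., so the chain is time-homogeneous with a transition kernel that is continuous in the current state; this makes it weak Feller. The key structural fact is that $\tau$ and $g$ are bounded. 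Hence $\mathbf{x}(t)$ lies in a fixed compact set $\mathcal{K}=[-\|\tau\|_\infty,\|\tau\|_\infty]^p$ for all $t\ge1$, and $|z(t)|\le \|g\|_\infty$ uniformly, regardless of $y(t-1)$. Unlike Theorem~\ref{thm:esn-geo-ergodic}, no spectral condition is needed, because boundedness replaces contraction.

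\textbf{Step 1 (drift).} Take the Lyapunov function $V(y,\mathbf{x})=1+|y|^\kappa$, with $\kappa$ from Assumption~\ref{ass:network1}. Then
\[
\mathbb{E}[V(\mathbf{Y}(t))\mid \mathbf{Y}(t-1)]\le 1+2^{\kappa-1}\bigl(\|g\|_\infty^\kappa+\mathbb{E}|\varepsilon|^\kappa\bigr)=:C<\infty,
\]
uniformly in the previous state. This gives the geometric drift inequality $PV\le \beta V+b\mathbf{1}_{C_0}$ for any $\beta\in(0,1)$, with $C_0=\{V\le C/\beta\}$. The set $C_0$ is a bounded set in $\mathbb{R}\times\mathcal{K}$.

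\textbf{Step 2 (irreducibility and small sets).} This is the main obstacle. The state component $\mathbf{x}(t)$ is a deterministic function of $(y(t-1),\mathbf{x}(t-1))$ when $\eta\equiv0$, so the chain on $\mathbb{R}^{1+p}$ has no density, and Lebesgue irreducibility on the full space fails. I would handle this in one of two ways. The first is to assume the reservoir noise $\eta$ is present, as the paper allows. If in addition $\tau$ is a homeomorphism onto its range (e.g.\ $\tanh$), then $(y(t),\mathbf{x}(t))$ has a density that is bounded below on compacts of $\mathbb{R}\times\tau(\mathbb{R})^p$, using the positive continuous density of $\varepsilon$ from Assumption~\ref{ass:network1}. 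Every compact set is then small, and the chain is $\psi$-irreducible and aperiodic. The second way, if $\eta$ is absent, is to work with a two-step (or $(p+1)$-step) kernel. Alternatively one can follow \citet{zhao2020rnn} and note that $y(t)$ alone, given the bounded past, has a density bounded below by $\inf_{|s|\le\|g\|_\infty}f_\varepsilon(y-s)>0$ on compacts. This yields a minorization $P(\mathbf{Y},\cdot)\ge \delta\,\nu(\cdot)$ for $\mathbf{Y}\in C_0$, with $\nu$ the law of $(\varepsilon$-perturbed$)$ outputs restricted to a compact. That is precisely a Doeblin-type small-set condition on $C_0$.

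\textbf{Step 3 (conclusion).} Combining the drift from Step~1 and the small-set minorization from Step~2, Harris' theorem \citep{meyn2012markov} gives geometric ergodicity, with convergence rate $\rho<1$ in $V$-norm and hence in total variation. The $V$-uniform bound with $\kappa\ge2$ guarantees $\mathbb{E}_\pi y(t)^2<\infty$. Geometric mixing then implies $|\gamma_y(k)|\le c\rho^k$. Thus $\sum_k|\gamma_y(k)|<\infty$, which is short memory in the sense of Definition~\ref{def:long_memory}. I expect the delicate point to be making the minorization in Step~2 rigorous for the degenerate reservoir coordinate. I would first try the $V$-uniform/Doeblin route using the output noise alone.
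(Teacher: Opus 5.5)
Your Steps 1 and 3 are sound. In substance they are what the paper obtains by citation. The paper's proof only checks the growth bound $\|\mathcal{M}(y,\mathbf{x})\|_{\ell_1}\le a_0\|(y,\mathbf{x}^\top)^\top\|_{\ell_1}+M_1+M_2$ from the boundedness of $g$ and $\tau$. It then hands the drift, small-set and Tweedie machinery to Theorem~1 of \citet{zhao2020rnn}. Unpacking that machinery with $V=1+|y|^\kappa$ is a reasonable, more self-contained version of the same argument; this choice of $V$ is legitimate because $\mathbf{x}$ stays in a bounded set. One small point on the covariance bound: derive it from $V$-norm (or $\sqrt{V}$-norm) ergodicity rather than from total-variation mixing, since only $\kappa\ge2$ moments are available.

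The gap is in Step~2, specifically the output-noise-only route you say you would try first. If $\eta\equiv0$, the one-step law $P((y',\mathbf{x}'),\cdot)$ is concentrated on the fibre $\mathbb{R}\times\{\tau(\mathbf{W}_{xx}\mathbf{x}'+\mathbf{W}_{xu}y')\}$. Distinct starting points in $C_0$ therefore generally have mutually singular one-step laws, and no common minorizing measure on $\mathbb{R}^{1+p}$ exists. A lower bound on the conditional density of $y(t)$ minorizes only the $y$-marginal, which is not what Harris' theorem needs. Moving to a multi-step kernel requires a forward-accessibility (controllability) condition that does not follow from the hypotheses. Without it the claim actually fails. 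Take $p=2$, $\mathbf{W}_{xx}=\mathrm{diag}(0,\tfrac12)$, $\mathbf{W}_{xu}=(1,0)^\top$ and $\tau=\tanh$. Then $x_2(t)=\tanh(x_2(t-1)/2)$ evolves deterministically and never reaches $0$ from $x_2(0)\neq0$. Every invariant law puts $x_2=0$ almost surely, so $\lVert P^t(\mathbf{Y},\cdot)-\pi\rVert_{TV}=1$ for all $t$.

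You must therefore commit to your first option and keep reservoir noise with a positive density. That is also the setting in which the paper's argument works. The paper writes $\mathbf{Y}(t)=\mathcal{M}(\mathbf{Y}(t-1))+\mathbf{E}(t)$ with $\mathbf{E}(t)=[\varepsilon(t);\eta(t)^\top]^\top$ added \emph{after} $\tau$. For $\sigma>0$ the noise then has a positive continuous density on all of $\mathbb{R}^{1+p}$, and nothing about $\tau$ beyond boundedness and continuity is used. With $\eta$ inside $\tau$, as in your version, you need two further things:
\begin{itemize}
\item $\tau$ must be a $C^1$ diffeomorphism onto its range, as $\tanh$ is. A mere homeomorphism can push a Gaussian to a singular law.
\item You should take the state space to be $\mathbb{R}\times\tau(\mathbb{R})^p$. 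Then $C_0$, which constrains only $y$, is bounded, and the uniform minorization $P(\mathbf{Y},\cdot)\ge\delta\,\nu(\cdot)$ on $C_0$ holds.
\end{itemize}
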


Now, we show that ESN cannot exhibit a long memory network process.

\begin{theorem}[Geometric tail of a vanilla ESN]\label{thm:vanilla-no-lm}
Under Assumption~\ref{ass:esp}, a linear ESN process cannot generate a long-memory network process according to Definition~\ref{def:long_memory}. In particular, for any input $u(t)$ with $\mathbb{E}|u(t)|^2<\infty$, the output is a linear time-invariant filter with absolutely summable impulse response.
\end{theorem}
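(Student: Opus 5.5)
We prove Theorem~\ref{thm:vanilla-no-lm} by unrolling the linear ESN recursion into an explicit moving-average representation, and then showing that its coefficients decay geometrically. With $\tau$ and $g$ the identity (and the optional reservoir noise $\eta(t)$ set aside for the moment), the update in Eqn.~\ref{eq:esn-update} reads $\mathbf{x}(t)=\mathbf{W}_{xx}\mathbf{x}(t-1)+\mathbf{W}_{xu}u(t)$. Iterating this $n$ times gives
\[
\mathbf{x}(t)=\mathbf{W}_{xx}^{\,n}\mathbf{x}(t-n)+\sum_{k=0}^{n-1}\mathbf{W}_{xx}^{\,k}\mathbf{W}_{xu}\,u(t-k).
\]
Taking $n\to\infty$ and composing with the readout $z(t)=\mathbf{W}_{\mathrm{out}}\mathbf{x}(t)$ yields the representation
\[
y(t)=\sum_{k=0}^{\infty}\mathbf{A}_k\,u(t-k)+\varepsilon(t),\qquad \mathbf{A}_k=\mathbf{W}_{\mathrm{out}}\mathbf{W}_{xx}^{\,k}\mathbf{W}_{xu}.
\]
This is exactly the form used in the definition of a long-memory network process. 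If $\eta(t)$ is kept, it contributes the additional term $\sum_k \mathbf{W}_{\mathrm{out}}\mathbf{W}_{xx}^{\,k}\eta(t-k)$, and the same argument below applies to it.

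The key step is a geometric bound on $\mathbf{W}_{xx}^{\,k}$. By Assumption~\ref{ass:esp}, $\rho(\mathbf{W}_{xx})<1$. Fix $r$ with $\rho(\mathbf{W}_{xx})<r<1$. Gelfand's formula $\lim_k\|\mathbf{W}_{xx}^{\,k}\|^{1/k}=\rho(\mathbf{W}_{xx})$ then gives a constant $C$ such that $\|\mathbf{W}_{xx}^{\,k}\|\le C r^k$ for all $k\ge 0$. Alternatively, one can use an induced norm in which $\|\mathbf{W}_{xx}\|<1$, or the Jordan form, which gives the bound $k^{p-1}\rho^k$. Consequently
\[
|\mathbf{A}_k|\le C\,\|\mathbf{W}_{\mathrm{out}}\|\,\|\mathbf{W}_{xu}\|\,r^k,
\]
so $\sum_k|\mathbf{A}_k|<\infty$, and the impulse response is absolutely summable. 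Time-invariance holds because $\mathbf{A}_k$ does not depend on $t$.

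Next we justify the limit $n\to\infty$ and the well-definedness of the series. We will show that $\mathbf{W}_{xx}^{\,n}\mathbf{x}(t-n)\to0$ in $L^2$. This follows from the geometric bound together with $\sup_t\mathbb{E}\|\mathbf{x}(t)\|^2<\infty$, which in turn follows from the same bound and $\sup_t\mathbb{E}|u(t)|^2<\infty$, via Minkowski's inequality:
\[
\Bigl\|\sum_k \mathbf{A}_k u(t-k)\Bigr\|_{L^2}\le \sum_k|\mathbf{A}_k|\,\sup_s\|u(s)\|_{L^2}<\infty.
\]
The same estimate shows that the series converges in $L^2$, so $y(t)$ is well defined with finite second moment. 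With the zero initial condition used in the paper, the remainder term is simply absent, and the finite sum converges to the stationary representation.

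Finally, we rule out long memory. A long-memory network process requires $(\mathbf{A}_k)_{ij}\sim k^{-d-1}$ for some $d\in(0,1/2)$. But $|\mathbf{A}_k|\,k^{d+1}\le C' r^k k^{d+1}\to0$, which contradicts the asymptotic equivalence, since that equivalence would force $|\mathbf{A}_k|\,k^{d+1}\to c\neq0$. For the second-order sense of Definition~\ref{def:long_memory}, take $u$ weakly stationary (the relevant case under the definition). The output autocovariance is $\gamma_y(h)=\sum_{j,k}\mathbf{A}_j\mathbf{A}_k\gamma_u(h+j-k)+\sigma_\varepsilon^2\mathbb{1}\{h=0\}$, and its spectral density is $|A(e^{-i\lambda})|^2 f_u(\lambda)+\sigma_\varepsilon^2/(2\pi)$. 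The transfer function $A(e^{-i\lambda})$ is bounded and continuous because the coefficients are absolutely summable. So the filter cannot create a pole at $\lambda=0$: any divergence of $f_y$ at the origin is inherited from $u$ and is not generated by the network. In particular, for short-memory or white inputs, $\sum_h|\gamma_y(h)|\le(\sum_k|\mathbf{A}_k|)^2\sum_h|\gamma_u(h)|+\sigma_\varepsilon^2<\infty$.

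The main obstacle is the interpretive step. One has to make precise that "generate" means the network's own impulse response cannot be of long-memory type, and keep this separate from memory already present in an exogenous input. The geometric bound itself is routine once Gelfand's formula is invoked.
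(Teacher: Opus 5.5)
Your proposal is correct and follows the same route as the paper's proof. Both unroll the open-loop linear recursion to get $A_k=\mathbf{W}_{\mathrm{out}}\mathbf{W}_{xx}^{\,k}\mathbf{W}_{xu}$, use $\rho(\mathbf{W}_{xx})<1$ to obtain $\|\mathbf{W}_{xx}^{\,k}\|\le Cr^k$, and conclude geometric decay and absolute summability. Your extra steps are sound and make explicit what the paper leaves implicit: Gelfand's formula for the constant, the $L^2$ justification of the limit, the explicit contradiction with $(\mathbf{A}_k)_{ij}\sim k^{-d-1}$, and the spectral-density remark separating memory inherited from the input from memory generated by the filter.
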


Next, we will demonstrate that fESN exhibits a long memory network process. Intuitively, the reservoir paths provide exponentially decaying kernels and convolving them with the fractional weights ${\omega_k(d)}$ preserves the regularly varying polynomial tail $k^{-d-1}$. The preliminary finite moving average $s(t)$ simply rescales the tail.

\begin{theorem}[fESN has polynomial impulse tail]
\label{thm:fesn-lm}
Let $d\in(0,1/2)$ and consider the fESN process
\begin{equation*}
  \mathbf{x}(t)=\mathbf{W}_{xx}\mathbf{x}(t-1)+\mathbf{W}_{xu}\,u(t) + \eta_x(t),\qquad
\mathbf{m}(t)=\mathbf{W}_{mm}\mathbf{m}(t-1)+\mathbf{W}_{mf}\,\big((1-B)^d-1\big)u(t) + \eta_m(t).
\end{equation*}
Under Assumptions~\ref{ass:network1} and ~\ref{ass:esp}, $\mathbf{W}_{\mathrm{out},x}\,\mathbf{x}(t)+\mathbf{W}_{\mathrm{out},m}\,\mathbf{m}(t) + \varepsilon(t)$ admits the expansion $\sum_{k\ge0} A_k\,u(t-k)$, where  $A_k=C_k+D_k$
with $\|C_k\|=O(\alpha^k)$ for some $\alpha\in(0,1)$ and $\|D_k\|=O(k^{-(d+1)})$. Consequently, fESN is a long memory network process.
\end{theorem}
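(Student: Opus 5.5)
The plan is to unroll both linear recursions into moving-average form and read off the coefficients. Assumption~\ref{ass:esp} gives $\rho(\mathbf{W}_{xx})<1$ and $\rho(\mathbf{W}_{mm})<1$. Gelfand's formula then yields, for any $\alpha\in(\max\{\rho(\mathbf{W}_{xx}),\rho(\mathbf{W}_{mm})\},1)$, a constant $c$ with $\|\mathbf{W}_{xx}^j\|,\|\mathbf{W}_{mm}^j\|\le c\,\alpha^j$ for all $j\ge0$. Using $\mathbb{E}|u(t)|^2<\infty$ together with these bounds, the series
\[
\mathbf{x}(t)=\sum_{j\ge0}\mathbf{W}_{xx}^j\big(\mathbf{W}_{xu}u(t-j)+\eta_x(t-j)\big),\qquad
\mathbf{m}(t)=\sum_{j\ge0}\mathbf{W}_{mm}^j\big(\mathbf{W}_{mf}f(t-j)+\eta_m(t-j)\big)
\]
converge in $L^2$, with $f(t)=\sum_{l\ge1}\omega_l(d)u(t-l)$. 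The noise terms $\eta_x,\eta_m,\varepsilon$ do not depend on $u$, so they are collected into an additive noise part. This is consistent with the Long-Memory Network Process definition, where $\varepsilon$ absorbs all non-input terms; their moments are controlled by Assumption~\ref{ass:network1} and the Gaussianity of $\eta$.

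Next I identify the coefficients. The vanilla path gives $C_k=\mathbf{W}_{\mathrm{out},x}\mathbf{W}_{xx}^k\mathbf{W}_{xu}$, so $\|C_k\|\le c'\alpha^k$ immediately. The memory path contributes the discrete convolution
\[
D_k=\sum_{j=0}^{k-1}\beta_j\,\omega_{k-j}(d),\qquad \beta_j:=\mathbf{W}_{\mathrm{out},m}\mathbf{W}_{mm}^j\mathbf{W}_{mf},
\]
where $|\beta_j|\le c''\alpha^j$. Interchanging the double sum is justified by absolute summability, since $\sum_j|\beta_j|<\infty$ and $\sum_l|\omega_l(d)|<\infty$ for $d>0$.

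The main step is the tail of $D_k$. Recall $\omega_l(d)\sim l^{-d-1}/\Gamma(-d)$. I split the sum at $j=\lfloor k/2\rfloor$. For $j\le k/2$ we have $k-j\ge k/2$, so $|\omega_{k-j}(d)|\le C(k/2)^{-d-1}$, and this part is bounded by $C 2^{d+1}k^{-d-1}\sum_j|\beta_j|$. For $j>k/2$, bound $|\omega_{k-j}|$ by $\sup_l|\omega_l|<\infty$; this part is then $O(\alpha^{k/2})=o(k^{-d-1})$. Together these give $|D_k|=O(k^{-(d+1)})$, and hence $A_k=C_k+D_k$ has the claimed decomposition.

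For the ``consequently'' clause I need an exact rate, not just an upper bound. Dominated convergence gives $k^{d+1}D_k\to\Gamma(-d)^{-1}\sum_{j\ge0}\beta_j$, because $k^{d+1}\omega_{k-j}(d)\to1/\Gamma(-d)$ for each fixed $j$, with domination from the split above. So $A_k\sim \kappa\,k^{-d-1}$ with $\kappa=\Gamma(-d)^{-1}\mathbf{W}_{\mathrm{out},m}(\mathbf{I}-\mathbf{W}_{mm})^{-1}\mathbf{W}_{mf}$, since $C_k$ is exponentially negligible. The obstacle is that $\kappa$ may vanish. It vanishes only on the measure-zero set of readout weights orthogonal to $(\mathbf{I}-\mathbf{W}_{mm})^{-1}\mathbf{W}_{mf}$. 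I would therefore state the conclusion for $\kappa\neq0$, which is the generic case under random initialization. With that nondegeneracy, $(A_k)\sim k^{-d-1}$ and fESN is a long-memory network process by definition.
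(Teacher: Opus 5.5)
Your proof is correct. It follows the same skeleton as the paper's: unroll both linear recursions, then split $A_k$ into the vanilla-reservoir kernel $C_k=\mathbf{W}_{\mathrm{out},x}\mathbf{W}_{xx}^k\mathbf{W}_{xu}$ and a memory-path kernel $D_k$.

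You are more careful than the paper in three places.

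\begin{itemize}
\item \emph{The coefficient itself.} The paper writes $D_k$ as the operator expression $\mathbf{W}_{\mathrm{out},m}\mathbf{W}_{mm}^k\mathbf{W}_{mf}\bigl((1-B)^d-1\bigr)$ and simply asserts that it is ``dominated by'' the $k^{-d-1}$ weights. You identify the actual coefficient as the convolution $\sum_{j<k}\beta_j\,\omega_{k-j}(d)$ and prove the $O(k^{-(d+1)})$ bound by splitting at $k/2$.

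\item \emph{The asymptotic rate.} You then obtain the exact rate $D_k\sim\kappa k^{-d-1}$ by dominated convergence. This is what the ``consequently'' clause actually requires. An $O(k^{-(d+1)})$ upper bound is also satisfied by exponentially decaying, or even identically zero, sequences. For example, $\mathbf{W}_{\mathrm{out},m}=0$ reduces fESN to a vanilla ESN, and the paper's proof passes from the upper bound to ``decays at a polynomial rate'' without justification.

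\item \emph{The nondegeneracy condition.} Your hypothesis $\mathbf{W}_{\mathrm{out},m}(\mathbf{I}-\mathbf{W}_{mm})^{-1}\mathbf{W}_{mf}\neq 0$ is a genuinely missing assumption in the theorem as stated, not an artifact of your method. Keep it explicit. $\mathbf{W}_{\mathrm{out}}$ is fitted by ridge regression rather than drawn at random, so ``generic under random initialization'' is only a heuristic.
\end{itemize}

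One caution about what you have actually proved. It is long memory in the sense of the Long-Memory Network Process definition, i.e.\ $A_k\sim\kappa k^{-d-1}$. That sequence is absolutely summable, so for a short-memory input it does not yield non-summable autocovariances in the sense of Definition~\ref{def:long_memory}. The paper's proof cites Definition~\ref{def:long_memory} at this step, and you should not carry that conflation into your write-up.
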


Finally, we show that wESN can also generate long-memory behavior. 

\begin{theorem}
\label{thm:wesn-lm}
Under Assumption~\ref{ass:network1} and Theorem~\ref{thm:fesn-lm} holds, finite–length smooth \(s(t)=\sum_r g_r u(t-r)\), and \(d\in(0,\tfrac12)\), the coefficients of the wESN input–output expansion satisfy \(A_k\sim \widetilde K\,k^{-\tilde{d}-1}\). Hence, wESN realizes long memory.
\end{theorem}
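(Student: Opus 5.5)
The plan is to reduce wESN to the fESN expansion of Theorem~\ref{thm:fesn-lm}, applied with $u(t)$ replaced by the smoothed input $s(t)$ on the memory path, and then to show that composing with a finite filter preserves the polynomial tail. Work with the linearized closed-form dynamics used in Theorem~\ref{thm:fesn-lm}. Assumption~\ref{ass:esp} allows the recursions to be unrolled:
\[
\mathbf{x}(t)=\sum_{j\ge0}\mathbf{W}_{xx}^j\big(\mathbf{W}_{xu}u(t-j)+\eta_x(t-j)\big),\qquad
\mathbf{m}(t)=\sum_{j\ge0}\mathbf{W}_{mm}^j\big(\mathbf{W}_{mf}f(t-j)+\eta_m(t-j)\big),
\]
with $f(t)=\sum_{l\ge1}\omega_l(d)\,s(t-l)$ and $s(t)=\sum_{r=0}^{R}g_r u(t-r)$. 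The noise terms do not involve $u$, so they go into the error term, exactly as in Theorem~\ref{thm:fesn-lm}. The vanilla path then contributes $C_k=\mathbf{W}_{\mathrm{out},x}\mathbf{W}_{xx}^k\mathbf{W}_{xu}$, which is $O(\alpha^k)$ for any $\alpha\in(\rho(\mathbf{W}_{xx}),1)$ by Gelfand's formula. This is the same step as before.

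For the memory path, define $\beta_n=\mathbf{W}_{\mathrm{out},m}\mathbf{W}_{mm}^n\mathbf{W}_{mf}$, so that $|\beta_n|=O(\alpha^n)$. The $s$-coefficients of the memory output are $\tilde D_k=\sum_{j+l=k,\,l\ge1}\beta_j\,\omega_l(d)$. Theorem~\ref{thm:fesn-lm} already gives $\tilde D_k=O(k^{-d-1})$, but for the sharper statement $\sim$ I would prove the exact asymptotics of the convolution directly. Split the sum at $j\le k/2$ and $j>k/2$. On the first part, $\omega_{k-j}(d)/\omega_k(d)\to1$ uniformly for fixed $j$, and dominated convergence with the weights $|\beta_j|$ gives $\big(\sum_{j\ge0}\beta_j\big)\,\omega_k(d)\,(1+o(1))$. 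The second part is $O(\alpha^{k/2})$, which is $o(k^{-d-1})$. Since $\omega_k(d)\sim k^{-d-1}/\Gamma(-d)$, this yields $\tilde D_k\sim \beta_\ast k^{-d-1}/\Gamma(-d)$, where $\beta_\ast=\mathbf{W}_{\mathrm{out},m}(\mathbf{I}-\mathbf{W}_{mm})^{-1}\mathbf{W}_{mf}$.

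Next, substitute $s(t)=\sum_r g_r u(t-r)$. The $u$-coefficients become $D_k=\sum_{r=0}^{R}g_r\tilde D_{k-r}$. Because $R$ is finite and $(k-r)^{-d-1}/k^{-d-1}\to1$, we get $D_k\sim G\,\beta_\ast k^{-d-1}/\Gamma(-d)$ with $G=\sum_r g_r$. For Haar we have $G=\sqrt2$, which is the sense in which the smoothing "rescales the tail." Combining this with the geometric $C_k$ (and with the fact that $\sum_r g_r C_{k-r}$ is still geometric if the vanilla path also sees $s$) gives $A_k=C_k+D_k\sim\widetilde K k^{-\tilde d-1}$ with $\tilde d=d$ and $\widetilde K=G\beta_\ast/\Gamma(-d)$. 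The long-memory network process definition then applies directly.

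The main obstacle is nondegeneracy: the conclusion $\sim$ requires $\widetilde K\neq0$, that is, $G\neq0$ and $\beta_\ast\neq0$. I would state this as a genericity condition. $G\neq0$ holds for any low-pass scaling filter, including Haar. $\beta_\ast\neq0$ holds for almost every random draw of $\mathbf{W}_{mf}$, since it is a nonzero linear functional of $\mathbf{W}_{mf}$ whenever $\mathbf{W}_{\mathrm{out},m}\neq0$. If $\beta_\ast=0$, only the $O(k^{-d-1})$ bound of Theorem~\ref{thm:fesn-lm} survives, so the statement should be read under this nondegeneracy. A secondary point is the circular ``$\bmod T$'' indexing in the MODWT. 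I would handle it by working on $t\in\mathbb{Z}$, where the filter is the causal finite filter stated in the theorem.
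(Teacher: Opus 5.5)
Your argument is correct. It shares the paper's skeleton: unroll both recursions and write $A_k=C_k+D_k$, where $C_k=\mathbf{W}_{\mathrm{out},x}\mathbf{W}_{xx}^{k}\mathbf{W}_{xu}$ is geometric and $D_k$ is the convolution of $\mathbf{W}_{mm}$-powers, the weights $\omega_r(d)$ and the smoothing taps. It differs at the decisive step.

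The paper only bounds that convolution termwise, $\|D_k^{(1)}\|\le\sum_\ell\sum_q O(\beta^q)\,O((k-\ell-q)^{-(d+1)})=O(k^{-(d+1)})$. It then concludes $\|A_k\|=O(k^{-(\tilde d+1)})$ without ever identifying $\tilde d$. A one-sided $O$-bound is also satisfied by geometrically decaying sequences, so it does not deliver the equivalence $A_k\sim\widetilde K k^{-\tilde d-1}$ that the statement asserts.

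Your split at $k/2$ plus dominated convergence gives the exact tail of a geometrically summable sequence convolved with a regularly varying one. This yields $\tilde d=d$ and $\widetilde K=G\beta_\ast/\Gamma(-d)$. It also exposes the nondegeneracy $G\beta_\ast\neq0$ that the statement tacitly requires: with $\mathbf{W}_{\mathrm{out},m}=\mathbf{0}$, wESN is a vanilla ESN and the tail is geometric. The paper's shorter route buys only the upper bound.

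Two details to tighten:
\begin{itemize}
\item Dominated convergence needs $\sup_k\sup_{j\le k/2}\omega_{k-j}(d)/\omega_k(d)<\infty$. This holds because $\omega_{l+1}(d)/\omega_l(d)=(l-d)/(l+1)\in(0,1)$ for $l\ge1$, so the ratio is at most $\omega_{\lceil k/2\rceil}(d)/\omega_k(d)\to2^{d+1}$.
\item $\mathbf{W}_{\mathrm{out},m}$ is fitted after $\mathbf{W}_{mf}$ is drawn, so the ``almost every draw'' claim for $\beta_\ast\neq0$ is not automatic. Impose $\beta_\ast\neq0$ as an explicit hypothesis instead.
\end{itemize}

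As in the paper, the conclusion is long memory in the network-process sense. A $k^{-d-1}$ tail is absolutely summable, so by itself it does not give non-summable autocovariances in the sense of Definition~\ref{def:long_memory}.
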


Theorems~\ref{thm:esn-geo-ergodic}--\ref{thm:wesn-lm} establish the central theoretical contrast of this paper: standard ESNs are stable but inherently short-memory, whereas the proposed fESN and wESN architectures introduce polynomially decaying impulse responses consistent with statistical long memory. Thus, explicit memory reservoirs in fESN and wESN provide a mechanism that enables reservoir computing to model persistent dengue dynamics. Section~\ref{sec:simulation} shows the simulation study to provide a controlled diagnostic of the memory mechanism described in Theorems~\ref{thm:esn-geo-ergodic}--\ref{thm:wesn-lm}. All proofs are provided in Appendix~\ref{app:proofs}.

\section{Experimental Evaluation} \label{sec:exp}
This section evaluates the proposed fESN and wESN models on benchmark dengue forecasting datasets \citep{clarke2024global, panja2023epicasting}, whose lengths range from 108 to 1196 observations owing to monthly or weekly reporting frequencies. These datasets exhibit long-range dependence and limited sample sizes, making them suitable testbeds for data-scarce epidemic forecasting. We compare the proposed models against state-of-the-art statistical, neural, and reservoir-based benchmarks, followed by statistical significance testing, an ablation study, and uncertainty quantification for probabilistic forecasting.

\subsection{Dataset} \label{subsec:dataset}
In this study, we consider nine dengue incidence datasets from Asia, South America, and North America, thereby capturing a wide spectrum of climatic conditions and transmission environments. The Asian locations include major urban centers such as Bangkok, Ahmedabad (India), Hong Kong, Singapore, and the Philippines, which contain well-established dengue surveillance systems. Among the regions in South America, we consider the dengue dataset from Iquitos (Peru), Colombia, and Venezuela. In contrast, the North American region includes dengue cases from the hyperendemic area of San Juan (Puerto Rico). Each of these datasets differs in both temporal length and resolution. The Philippines dataset, for example, contains monthly dengue cases aggregated across all 17 administrative regions and expressed per 100,000 population, spanning from 2008 to 2016, whereas the San Juan dataset consists of weekly observations from 1990 to 2013. Most datasets cover a temporal horizon of approximately 10-15 years, providing sufficient duration to capture multiple epidemic cycles as well as the influence of multi-year climate oscillations such as El Niño and La Niña. Overall, six datasets provide dengue incidence counts in weekly frequency (424-1,196 observations), enabling the detection of rapid epidemic fluctuations, while the remaining three provide monthly data.

To analyze the global characteristics of the dengue datasets, we examine several key statistical features, including stationarity, nonlinearity, seasonality, and long-range dependence. The stationarity of a time series, indicating that its statistical properties remain constant over time, is assessed using the Kwiatkowski-Phillips-Schmidt-Shin (KPSS) test, which evaluates the null hypothesis that the series is stationary \citep{tseries}. Nonlinearity, a fundamental characteristic of a time series that influences the choice of an appropriate modeling framework, is determined using Terasvirta’s neural network test, which considers the null hypothesis that the observed series is linear \citep{terasvirta1993power}. Seasonality, reflecting recurrent temporal patterns, is detected using the Ollech and Webel combined test \citep{seastests}. To evaluate long-range dependence, we estimate the fractional differencing parameter $d$ using the Whittle estimator implemented in the `WhittleEst' function of the \texttt{longmemo} package \citep{longmemo}. The results of these statistical evaluations, summarized in Table~\ref{tab:epidemic_summary}, indicate that all dengue incidence datasets exhibit statistically significant long-range dependence, with estimated $d$ values ranging from $0.34$ to $0.49$ and corresponding $p$-values satisfying $p < 2.22 \times 10^{-16}$; the datasets are mostly non-stationary, except for the Iquitos and San Juan datasets. The p-values from Teraesvirta’s test suggest that all datasets are nonlinear except those from Hong Kong, Singapore, and Venezuela. Additionally, the datasets from Colombia, Hong Kong, Iquitos, and Venezuela display seasonal fluctuations at varying temporal scales. We additionally report the corrected rescaled-range estimate of the Hurst exponent, $H^{'}$, computed from the raw series using the \texttt{hurstexp} function of the \texttt{pracma} package \citep{pracma}, which suggests long-memory behavior for all datasets.

\begin{table}[ht]
\centering
\caption{Global characteristics of dengue datasets. In the table, \cmark indicates the presence of a feature, while \xmark $\;$ indicates its absence.}
\label{tab:epidemic_summary}
\begin{adjustbox}{max width=\textwidth}
\begin{tabular}{lllrcccccc}
\toprule
Dataset & Time span & Frequency & Length  & $d$ & $H^{'}$ & Long-term dependence & Non-stationary & Seasonal & Nonlinear  \\
\midrule
Ahmedabad    & 2005-2012 & Weekly  &  424 & 0.49 & 0.73 & \cmark & \cmark & \xmark & \cmark  \\
Bangkok      & 2003-2017 & Monthly &  180 & 0.49 & 0.70 &\cmark & \cmark & \xmark & \cmark  \\
Colombia     & 2005-2016 & Weekly  &  626 & 0.49 & 1.00 &\cmark & \cmark & \cmark & \cmark  \\
Hong Kong    & 2002-2017  & Monthly  & 192  & 0.34 & 0.89 &\cmark & \cmark & \cmark & \xmark \\
Iquitos      & 2002-2013 & Weekly  &  598 & 0.49 &  0.67 &\cmark & \xmark & \cmark & \cmark  \\
Philippines  & 2008-2016 & Monthly &  108 & 0.40 &  1.02  &\cmark & \cmark & \xmark & \cmark  \\
San Juan     & 1990-2013 & Weekly  & 1196 & 0.49 &  0.82  &\cmark & \xmark & \xmark & \cmark  \\
Singapore   & 2000-2015  & Weekly  & 838  & 0.49 &  0.92  &\cmark & \cmark & \xmark & \xmark \\
Venezuela    & 2002-2014 & Weekly  &  660 & 0.49 &  0.89  &\cmark & \cmark & \cmark & \xmark  \\
\bottomrule
\end{tabular}
\end{adjustbox}
\end{table}

\subsection{Baseline Models} \label{sec:baseline}

To evaluate the forecasting performance of the proposed fESN and wESN architectures, we compare them against a diverse collection of baseline statistical methods, state-of-the-art neural architectures, and long-memory models. Among the statistical frameworks, we consider the modified Exponential Smoothing (ETS) \citep{hyndman2008forecasting} and ARFIMA models \citep{GrangerJoyeux1980}, which capture long-term linear dependence through their fractional differencing. The deep learning techniques include MLP-based models such as Neural Basis Expansion Analysis for Time Series (N-BEATS) \citep{oreshkin2019n} and Time Series Mixer (TSMixer) and recurrent architectures such as, vanilla RNN \citep{rumelhart1986learning}, LSTM \citep{hochreiter1997long} and GRU \citep{cho2014learning}. In addition to the base models, we consider their memory‐augmented deep learning counterparts such as MRNNF, MLSTMF \citep{zhao2020rnn}, and MGRUF \citep{yang2025memory}, which explicitly incorporate long‐range information via fractional or gating‐based mechanisms with a non-dynamic differencing parameter. Our evaluation also includes attention-based transformer models, namely the Temporal Fusion Transformer (TFT) \citep{lim2021temporal}, which dynamically weights historical information across multiple time scales, and the Wavelet Transformer (wTransformer) \citep{sasal2022transformers}, which applies MODWT-based decomposition to model long-term dynamics. Additionally, we analyze the benefits of the memory-augmented mechanism in the proposed models by comparing their performance with the (Vanilla) base ESN model \citep{jaeger2001echo}. Overall, the baseline models span the spectrum from linear statistical models to deep, recurrent, and attention‐based approaches, providing a comprehensive evaluation of our proposed models.

\subsection{Experimental Setup} \label{sec:exp_setup}
To validate the generalizability of the proposed fESN and wESN frameworks, we perform rolling-window forecast evaluations across three distinct forecast horizons (short-term, medium-term, and long-term). During training, models use direct multi-step-ahead forecasting, in which the model observes the previous $L$ values to predict the next $H$ values, with the look-back window $L$ set equal to the forecast horizon $H$. For the weekly dataset, the forecast horizon $H$ spans 13, 26, and 52 weeks for short, medium, and long-term periods, while for the monthly dataset it spans 3, 6, and 12 months, respectively. In addition, hyperparameter tuning for the proposed frameworks is performed via Bayesian optimization (described in Appendix~\ref{subsec:hyper}) using a validation set of length $2H$  for each dataset. To ensure reliable performance estimates, each forecasting model is executed five times, accounting for the stochastic nature of model training. Furthermore, we assess the performance of the proposed models and benchmark forecasters using four key evaluation metrics, namely Mean Absolute Error (MAE), Mean Absolute Scaled Error (MASE), Root Mean Square Error (RMSE), and symmetric Mean Absolute Percentage Error (sMAPE) \citep{hyndman2018forecasting}. MAE, a scale-dependent measure, is robust to outliers, whereas the squared-error-based RMSE penalizes large deviations more heavily. The scale-independent MASE facilitates comparison across series, with values below 1 indicating improvement over a naive benchmark. The percentage-based sMAPE enables comparison across datasets with differing scales. The mathematical formulation of these metrics is discussed in Appendix~\ref{subsec:metric}. By definition, lower values of these metrics indicate better forecasting performance.

\subsection{Empirical Results} \label{sec:result}

\begin{figure}[ht!]
    \centering
    \includegraphics[width=\linewidth]{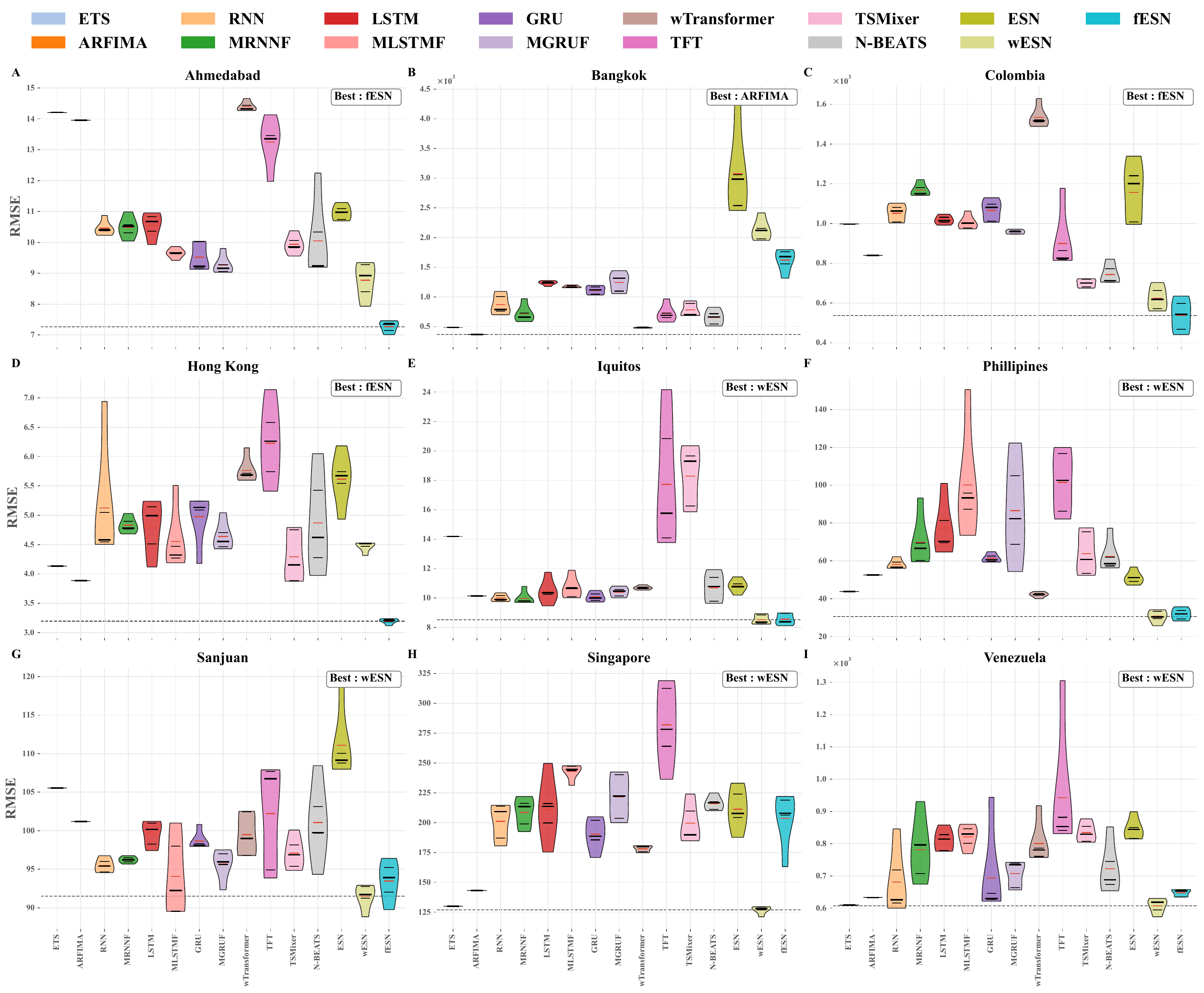}
    \caption{Box-plot comparison of long-term forecast performance (using RMSE) for the proposed and benchmark models across dengue datasets. In each box, the red horizontal line denotes the mean, the thick black horizontal line denotes the median, and the two thin black horizontal lines denote the first and third quartiles.}
    \label{fig:long_rmse}
\end{figure}
The long-term forecasting performance, summarized in Figure~\ref{fig:long_rmse} using the RMSE metric, shows that the proposed fESN and wESN models achieve the best results across most epidemic datasets. For the Ahmedabad series, fESN attains the lowest median RMSE, followed by wESN. In contrast, for the Iquitos, San Juan, and Philippines datasets, wESN most accurately captures the long-term dependencies inherent in dengue incidence dynamics. For the non-stationary dengue cases of Singapore and Venezuela, wESN outperforms the benchmark forecasters, whereas in the Colombia and Hong Kong datasets, fESN delivers competitive performance alongside wESN. Although ESN and its memory-augmented variants offer advantages such as efficient training, low computational cost, and stable reservoir dynamics, their performance can deteriorate when historical observations are scarce with sudden peaks. This limitation is particularly evident for the Bangkok dataset. Its small sample size enables the ARFIMA model, designed to capture long-range dependence with limited data, to outperform reservoir-based approaches. The training mechanism of the proposed models is consistent with that of ESN, where a portion of the initial state trajectory is discarded according to the washout ratio $\zeta$, and the readout matrix $\mathbf{W}_{\!\text{out}}$ is estimated only from the remaining observations. For the Bangkok series, a sharp increase in dengue incidence during the validation period leads to a high $\zeta$ value, further reducing the effective sample available for estimating $\mathbf{W}_{\!\text{out}}$. The challenges posed by such low-sample scenarios, including Bangkok and the Philippines, are discussed in detail in Appendix~\ref{para:results:para_limited_data}. 
Overall, the empirical results across all forecast horizons and evaluation metrics, summarized in Appendix~\ref{app:results}, indicate that the proposed fESN and wESN frameworks consistently deliver the best performance across most forecasting tasks. An important observation is that memory-augmented deep recurrent models, such as MRNN, MLSTM, and MGRU variants, do not always translate their theoretical long-memory advantage into superior forecasting performance under data scarcity. In several cases, the simpler linear ARFIMA model remains competitive because it explicitly captures long-range dependence with fewer trainable parameters. The proposed fESN and wESN models provide a compromise between these two regimes: they retain the data efficiency and computational simplicity of reservoir computing, while introducing explicit long-memory mechanisms capable of modeling nonlinear epidemic dynamics.

To assess the statistical significance of the observed performance improvements, we conduct the non-parametric Multiple Comparisons with the Best (MCB) test at the $95\%$ confidence level \citep{makridakis2022m5}, which evaluates the average ranks of all competing models based on their error measurements and computes the corresponding critical distances \citep{nemenyi1963distribution}. The results of the MCB test, presented in Figure~\ref{fig:mcb-all-metrics} for RMSE, MASE, and sMAPE, show that the fESN framework attains the lowest average rank across all metrics, implying its superiority. Its critical distance, therefore, defines the reference value (shaded area) of the test. The wESN framework provides closely comparable performance, achieving the second-lowest rank. Among the baseline models, ARFIMA ranks third, consistently following the proposed frameworks, reflecting its ability to capture long-term dependence even in limited-sample settings. ESN ranks fourth, followed by the ETS and TSMixer models. Although ARFIMA, ESN, ETS, and TSMixer do not exhibit statistically significant differences from fESN according to the MCB test, their average ranks remain higher than those of the proposed frameworks. For most other baseline models, their critical distances lie well above the reference threshold, indicating that their performance differs significantly from that of fESN and wESN.

Additionally, we perform the Diebold-Mariano (DM) test for the proposed fESN and wESN models \citep{diebold2002comparing}. This distribution-free test checks the null hypothesis that two competing forecasters have equal predictive accuracy. The results of this test for a long-term horizon, reported in Tables \ref{tab:dm_hlong_fesn} and \ref{tab:dm_hlong_wesn} for fESN and wESN models, respectively, indicate that the observed differences are statistically significant in most cases except between ESN and fESN for the Philippines dengue dataset. This is primarily attributed to the limited sample size. However, for wESN, the difference is statistically significant, and its performance is also better than that of ESN. The DM test result over the other two horizons, discussed in Appendix~\ref{app:results}, demonstrates similar patterns.

\begin{figure}[ht!]
  \centering
  \begin{subfigure}[b]{0.32\textwidth}
    \includegraphics[width=\linewidth]{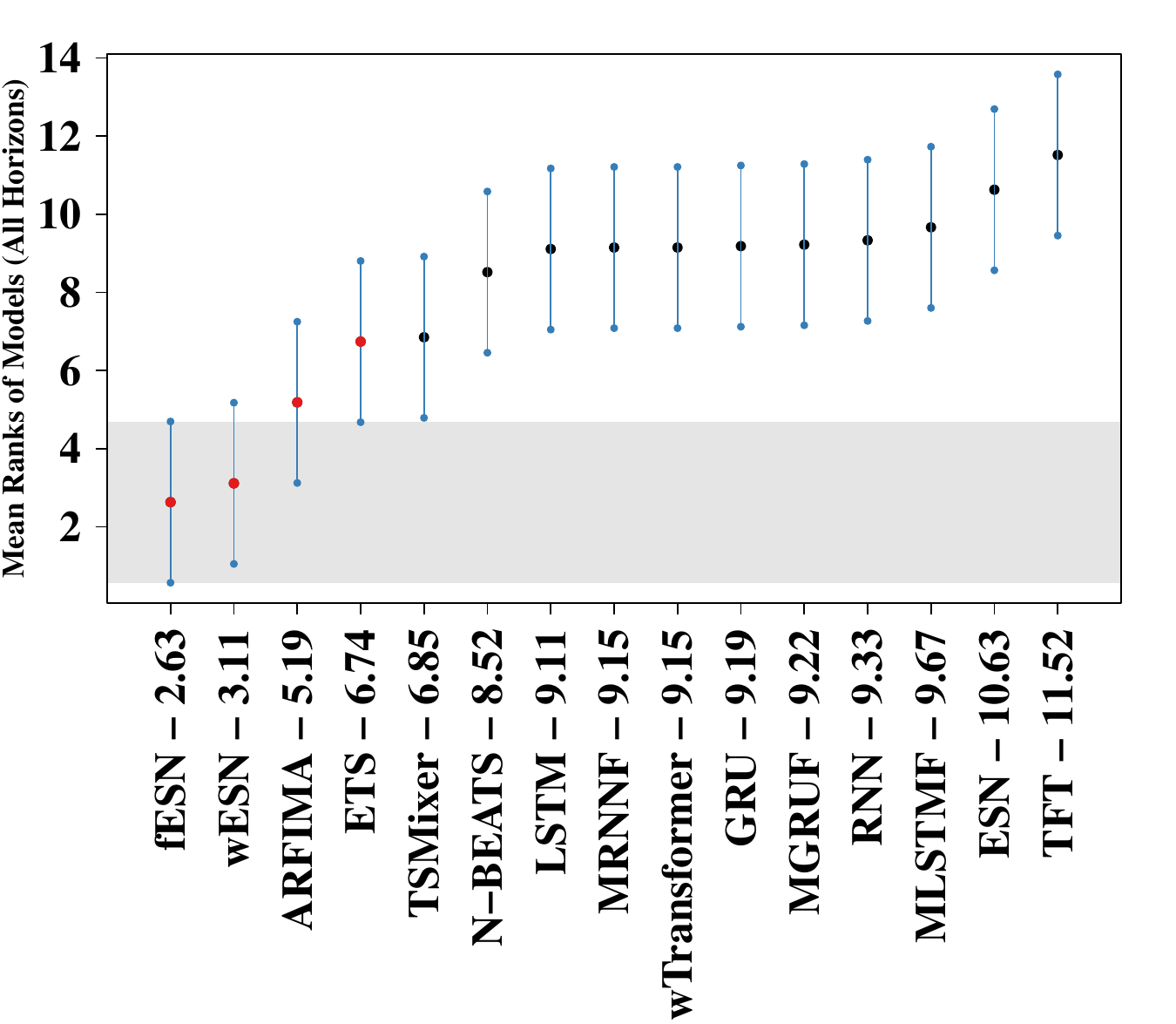}
    \caption{RMSE}
    \label{subfig:mcb-rmse}
  \end{subfigure}
  \hfill
  \begin{subfigure}[b]{0.32\textwidth}
    \includegraphics[width=\linewidth]{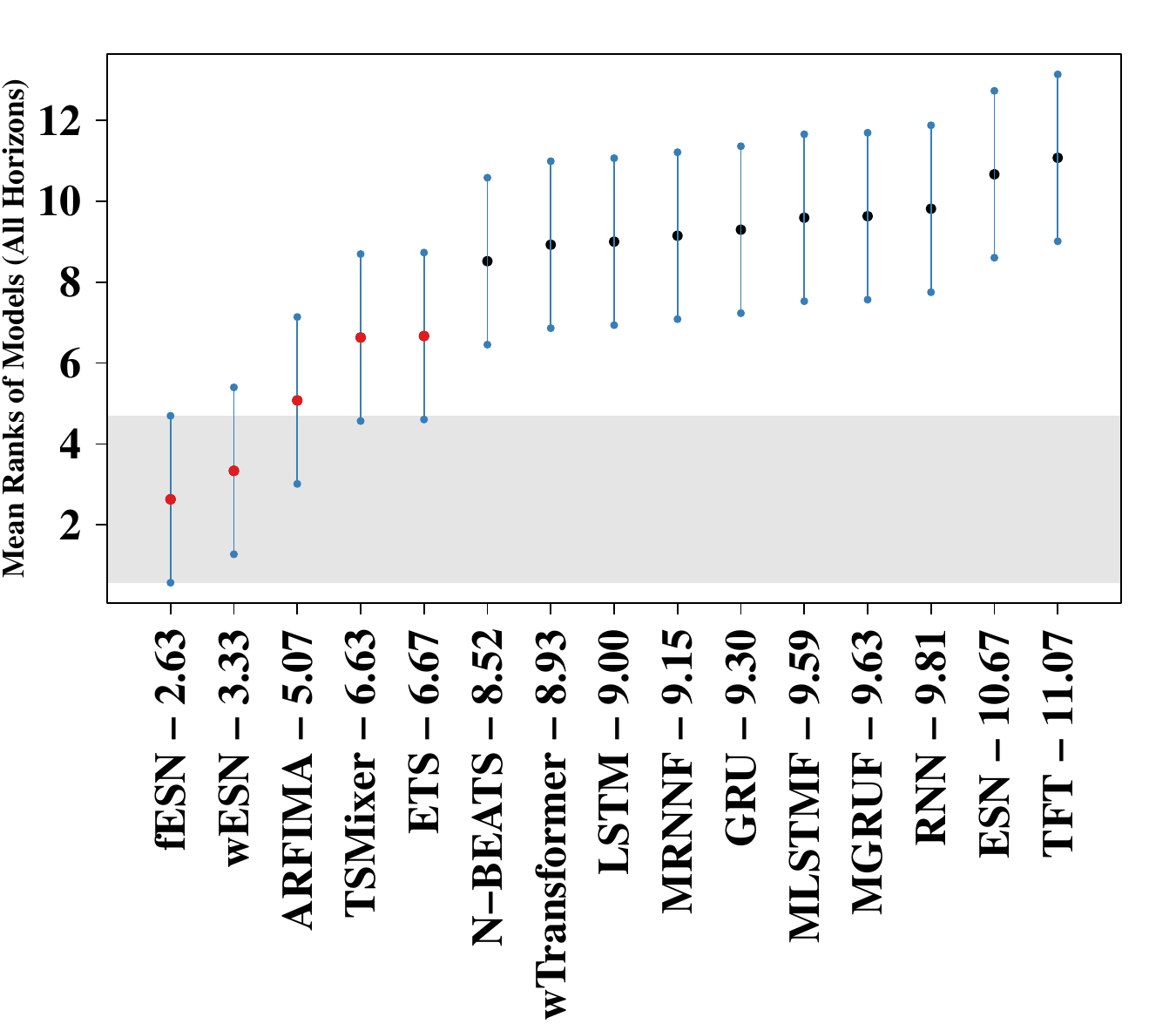}
    \caption{MASE}
    \label{subfig:mcb-mae}
  \end{subfigure}
\hfill
  \begin{subfigure}[b]{0.32\textwidth}
    \includegraphics[width=\linewidth]{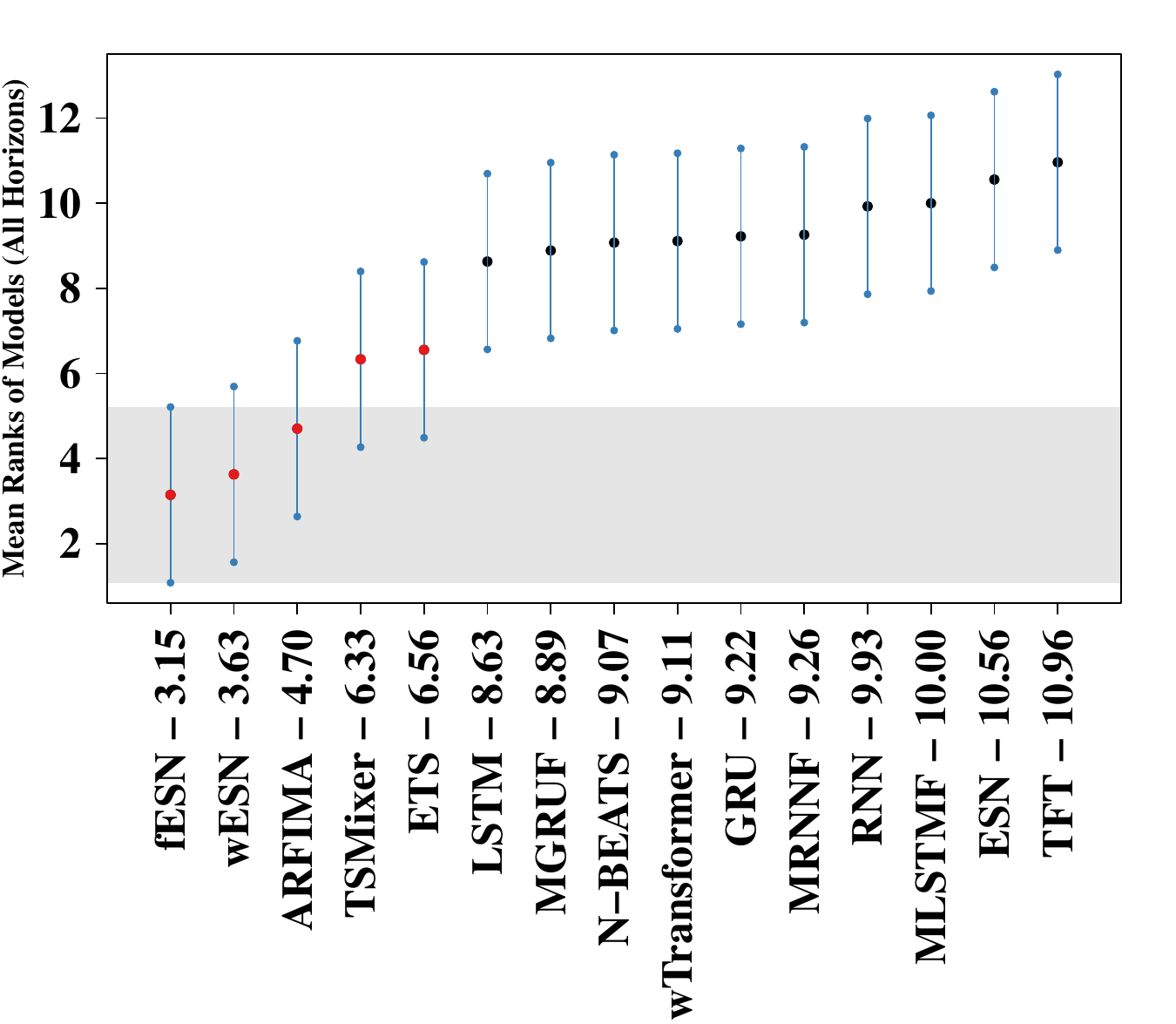}
    \caption{sMAPE}
    \label{subfig:mcb-smape}
  \end{subfigure}
  \caption{MCB test results comparing the proposed models with baseline approaches using (a) RMSE, (b) MASE, and (c) sMAPE for dengue incidence forecasting. The Y-axis shows the average ranks of the models, while the X-axis provides each model with its corresponding average rank. For example, the label `fESN - 2.63' indicates that the fESN model obtained an average rank of 2.63 based on the RMSE metric. 
}
  \label{fig:mcb-all-metrics}
\end{figure}

\begin{table}[ht!]
  \centering
  \caption{DM test p-values comparing fESN against baseline models across datasets for long-term forecasting. Significance levels: \textbf{p<0.05}, \underline{p<0.10}.}
  \label{tab:dm_hlong_fesn}
  \begin{adjustbox}{width=\textwidth,center}
  \begin{tabular}{@{\extracolsep{5pt}}l|ccccccccccccc}
  \toprule
  Dataset & ETS & ARFIMA & RNN & MRNN & LSTM & MLSTM & GRU & MGRU & wTransformer & TFT & TSMixer & N-BEATS & ESN \\
  \midrule
Ahmedabad & \textbf{\textless 0.001} & \textbf{0.002} & \textbf{\textless 0.001} & \textbf{0.001} & \textbf{\textless 0.001} & \textbf{0.002} & \textbf{\textless 0.001} & \textbf{\textless 0.001} & \textbf{\textless 0.001} & \textbf{\textless 0.001} & \textbf{\textless 0.001} & \textbf{\textless 0.001} & \textbf{\textless 0.030} \\
Bangkok & \textbf{\textless 0.013} & 0.184 & 0.149 & \textbf{\textless 0.047} & 0.689 & 0.589 & 0.432 & 0.791 & \textbf{\textless 0.013} & \underline{0.050} & \textbf{\textless 0.015} & \textbf{\textless 0.015} & \textbf{\textless 0.001} \\
Columbia & \textbf{\textless 0.001} & \textbf{\textless 0.035} & \textbf{\textless 0.001} & \textbf{\textless 0.001} & \textbf{\textless 0.001} & \textbf{\textless 0.001} & \textbf{\textless 0.001} & \textbf{\textless 0.001} & \textbf{\textless 0.001} & \textbf{\textless 0.001} & \textbf{\textless 0.001} & \textbf{\textless 0.001} & \textbf{\textless 0.001} \\
Hong Kong & \underline{0.053} & \underline{0.094} & \textbf{\textless 0.001} & \textbf{0.001} & \textbf{0.004} & \textbf{0.001} & \textbf{\textless 0.012} & \textbf{\textless 0.013} & \textbf{\textless 0.001} & \textbf{0.008} & \textbf{0.006} & \textbf{0.003} & \underline{0.067} \\
Iquitos & \textbf{\textless 0.001} & \underline{0.075} & \textbf{\textless 0.001} & \textbf{\textless 0.001} & \textbf{\textless 0.001} & \textbf{\textless 0.001} & \textbf{\textless 0.001} & \textbf{\textless 0.001} & \textbf{\textless 0.001} & \textbf{\textless 0.001} & \textbf{\textless 0.001} & \textbf{\textless 0.001} & \textbf{\textless 0.021} \\
Phillipines & \textbf{\textless 0.039} & \underline{0.088} & \textbf{\textless 0.001} & \textbf{\textless 0.001} & \textbf{\textless 0.001} & \textbf{\textless 0.001} & \textbf{\textless 0.001} & \textbf{\textless 0.001} & 0.129 & \textbf{\textless 0.001} & \textbf{\textless 0.001} & \textbf{\textless 0.001} & 0.103 \\
Sanjuan & \textbf{\textless 0.001} & \textbf{\textless 0.001} & \textbf{\textless 0.001} & \textbf{\textless 0.001} & \textbf{\textless 0.001} & \textbf{\textless 0.001} & \textbf{\textless 0.001} & \textbf{\textless 0.001} & \textbf{\textless 0.001} & \textbf{\textless 0.001} & \textbf{\textless 0.001} & \textbf{\textless 0.001} & \textbf{\textless 0.001} \\
Singapore & \textbf{\textless 0.001} & \textbf{0.003} & 0.196 & 0.157 & 0.138 & \underline{0.052} & \textbf{\textless 0.019} & \underline{0.064} & \textbf{\textless 0.001} & \textbf{\textless 0.001} & \textbf{\textless 0.001} & \textbf{\textless 0.032} & \textbf{\textless 0.001} \\
Venezuela & 0.145 & 0.462 & \textbf{\textless 0.001} & \textbf{\textless 0.001} & \textbf{\textless 0.001} & \textbf{\textless 0.001} & \textbf{\textless 0.028} & \underline{0.058} & \textbf{\textless 0.001} & \textbf{\textless 0.001} & \textbf{\textless 0.001} & \textbf{\textless 0.012} & \textbf{\textless 0.001} \\
  \bottomrule
  \end{tabular}
  \end{adjustbox}
\end{table}

\begin{table}[ht!]
  \centering
  \caption{DM test p-values comparing wESN against baseline models across datasets for long-term forecasting. Significance levels: \textbf{p<0.05}, \underline{p<0.10}.}
  \label{tab:dm_hlong_wesn}
  \begin{adjustbox}{width=\textwidth,center}
  \begin{tabular}{@{\extracolsep{5pt}}l|ccccccccccccc}
  \toprule
  Dataset & ETS & ARFIMA & RNN & MRNN & LSTM & MLSTM & GRU & MGRU & wTransformer & TFT & TSMixer & N-BEATS & ESN \\
  \midrule
Ahmedabad & \textbf{\textless 0.001} & \textbf{0.003} & 0.117 & 0.260 & 0.112 & 0.634 & 0.630 & 0.788 & \textbf{\textless 0.001} & \textbf{\textless 0.001} & 0.364 & \textbf{\textless 0.001} & \textbf{\textless 0.001} \\
Bangkok & \textbf{0.003} & 0.123 & \textbf{\textless 0.023} & \textbf{0.010} & 0.136 & 0.105 & \underline{0.077} & 0.170 & \textbf{0.004} & \textbf{0.005} & \textbf{0.002} & \textbf{0.003} & \textbf{0.001} \\
Columbia & \textbf{\textless 0.001} & \textbf{\textless 0.001} & \textbf{\textless 0.001} & \textbf{\textless 0.001} & \textbf{\textless 0.001} & \textbf{\textless 0.001} & \textbf{\textless 0.001} & \textbf{\textless 0.001} & \textbf{\textless 0.001} & \textbf{\textless 0.001} & \textbf{\textless 0.001} & \textbf{\textless 0.001} & \textbf{\textless 0.001} \\
Hong Kong & 0.384 & 0.259 & 0.760 & 0.903 & 0.772 & 0.312 & 0.870 & 0.985 & 0.492 & 0.307 & 0.117 & 0.324 & \underline{0.081} \\
Iquitos & \textbf{\textless 0.001} & 0.297 & \textbf{\textless 0.001} & \textbf{\textless 0.001} & \textbf{\textless 0.001} & \textbf{\textless 0.001} & \textbf{\textless 0.001} & \textbf{\textless 0.001} & \textbf{\textless 0.001} & \textbf{\textless 0.001} & \textbf{\textless 0.001} & \textbf{\textless 0.001} & \underline{0.080} \\
Phillipines & \textbf{\textless 0.021} & 0.213 & \textbf{\textless 0.001} & \textbf{\textless 0.001} & \textbf{\textless 0.001} & \textbf{\textless 0.001} & \textbf{\textless 0.001} & \textbf{\textless 0.001} & \underline{0.060} & \textbf{\textless 0.001} & \textbf{\textless 0.001} & \textbf{\textless 0.001} & \textbf{\textless 0.023} \\
Sanjuan & \textbf{\textless 0.001} & \textbf{\textless 0.001} & \textbf{\textless 0.001} & \textbf{\textless 0.001} & \textbf{\textless 0.001} & \textbf{\textless 0.001} & \textbf{\textless 0.001} & \textbf{\textless 0.001} & \textbf{\textless 0.001} & \textbf{\textless 0.001} & \textbf{\textless 0.001} & \textbf{\textless 0.001} & \textbf{\textless 0.001} \\
Singapore & 0.884 & \underline{0.064} & \textbf{\textless 0.001} & \textbf{\textless 0.001} & \textbf{\textless 0.001} & \textbf{\textless 0.001} & \textbf{\textless 0.001} & \textbf{\textless 0.001} & \textbf{\textless 0.001} & \textbf{\textless 0.001} & \textbf{\textless 0.001} & \textbf{\textless 0.001} & \textbf{\textless 0.001} \\
Venezuela & 0.829 & 0.765 & \textbf{0.008} & \textbf{\textless 0.001} & \textbf{\textless 0.001} & \textbf{\textless 0.001} & \textbf{0.001} & \textbf{\textless 0.001} & \textbf{\textless 0.001} & \textbf{\textless 0.001} & \textbf{\textless 0.001} & \textbf{\textless 0.001} & \textbf{\textless 0.001} \\
  \bottomrule
  \end{tabular}
  \end{adjustbox}
\end{table}

\subsection{Ablation Study} 
We conduct an ablation study to evaluate the robustness of the proposed wESN model with respect to the choice of wavelet filter and decomposition level. We examine four widely used wavelet families, namely Biorthogonal (Bior), Symlets (Sym), and Daubechies filters of order 1 (Haar) and order 2 (Db2), each applied at three different decomposition levels. The results for the three forecasting horizons reported in Appendix~\ref{subsec:wesn_extended} indicate that wESN achieves consistent performance across all filter-level combinations. Hence, we conclude that the wESN model is robust to the choice of wavelet transform. Motivated by this stability, we adopt the Haar wavelet with a single decomposition level in all our empirical experiments, as it provides the simplest representation and lowest computational cost without compromising forecast performance.

\subsection{Uncertainty Quantification} \label{sec:conformal}
Rather than estimating uncertainty by repeatedly changing ESN initializations, we use conformal prediction \citep{vovk2005conformal} because it is model-agnostic and distribution-free, providing calibrated prediction intervals with reliable empirical coverage for fESN and wESN forecasts. We quantify forecast uncertainty for the ESN, fESN, and wESN models using the model-agnostic time series conformal prediction approach \citep{xu2023conformal, zaffran2022adaptive}. Finite-sample prediction bands for the $H$-step-ahead forecasts are constructed using split conformal prediction, with a held-out validation set used exclusively for calibration. Table~\ref{tab:summary_long} reports the coverage, mean width, and Winkler score \citep{gneiting2007strictly} of the prediction intervals for the ESN, fESN, and wESN models for the long-term forecast horizon. For the Philippines dengue dataset, the conformal prediction intervals (long-term horizon) with a 90\% target coverage shown in Figure~\ref{fig:phillipines_conf} indicate that all three models achieve comparable coverage, while fESN produces the narrowest intervals, in contrast to the wider bands of ESN and wESN. A similar pattern is observed for the Ahmedabad dataset (Figure~\ref{fig:ahmbedabad_conf}), where ESN attains a coverage of $0.73$, while fESN and wESN achieve higher coverage of $0.81$ and $0.90$, respectively. Despite higher coverage, fESN maintains a more compact interval, with an average width of $18.47$ compared to $23.38$ for wESN. Results for the remaining forecast horizons, summarized in Appendix~\ref{subsec:conformal}, further confirm these trends. The fESN framework consistently offers the most favorable balance between coverage and interval width, while ESN typically requires wider bands to maintain adequate coverage, and wESN exhibits greater variability owing to occasional prediction deviations.

% In your document
\begin{figure}[t]
  \centering

  \centering
  \includegraphics[width=\linewidth]{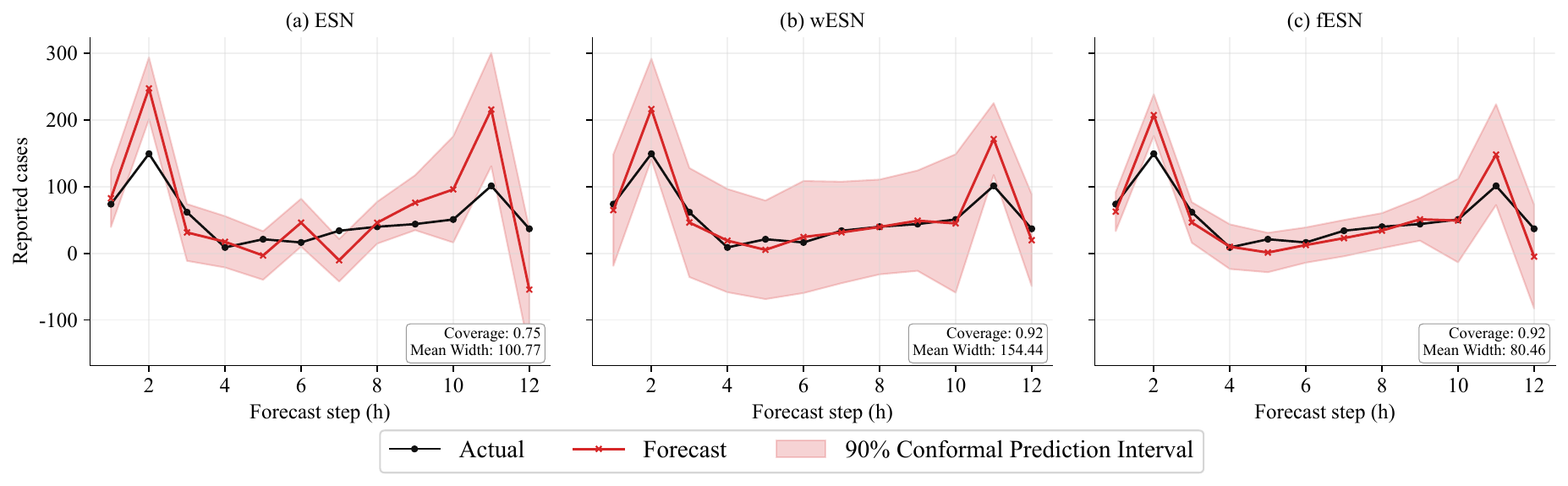}

  \caption{Conformal prediction interval of the (a) ESN, (b) wESN, and (c) fESN models for long-term forecasting of the Philippines dengue cases.}
  \label{fig:phillipines_conf}
\end{figure}

\begin{figure}[t]
  \centering

  \centering
  \includegraphics[width=\linewidth]{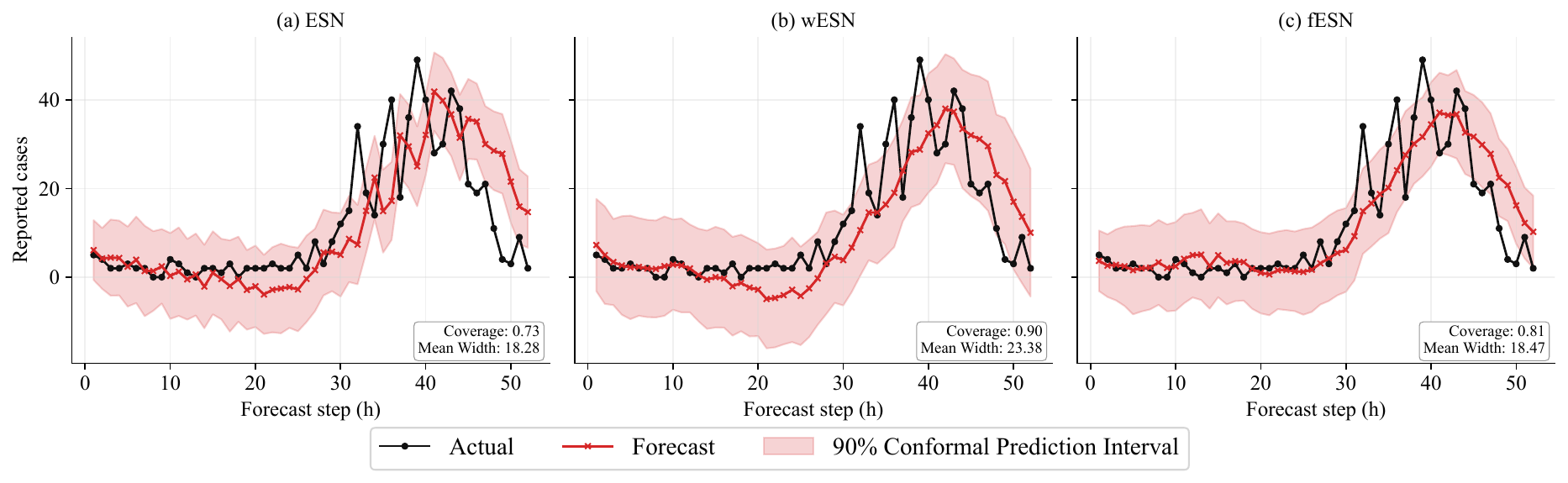}

  \caption{Conformal prediction interval of the (a) ESN, (b) wESN, and (c) fESN models for long-term forecasting of Ahmedabad dengue cases.}
  \label{fig:ahmbedabad_conf}
\end{figure}

\begin{table}[htbp]
\centering
\caption{Long-horizon conformal prediction performance, reported as Coverage (Mean Interval Width, Winkler Score). For each dataset, the minimum Winkler Score across models is shown in \textbf{bold}.}
\label{tab:summary_long}
\begin{tabular}{llll}
\toprule
Models ($\rightarrow$) &                  \multirow{2}{*}{ESN} &                          \multirow{2}{*}{wESN}   &                      \multirow{2}{*}{fESN}  \\
Datasets ($\downarrow$)     &                           &                                 &                                   \\
\midrule
Ahmedabad   &       0.73 (18.28, 61.86) &              0.90 (23.38, 36.95) &      \textbf{0.81 (18.47, 34.99)} \\
Bangkok     &  0.08 (1270.43, 34510.68) &         0.67 (2876.54, 13423.73) &  \textbf{0.67 (1331.79, 6742.02)} \\
Colombia    &  0.44 (1717.80, 13336.81) &         0.44 (1596.98, 14249.60) &  \textbf{0.83 (1670.89, 2646.53)} \\
Hong Kong   &        0.42 (9.41, 37.52) &              0.92 (13.90, 18.32) &       \textbf{0.92 (9.57, 15.31)} \\
Iquitos     &       0.62 (24.13, 51.30) &              0.88 (23.11, 33.26) &      \textbf{0.88 (25.10, 29.45)} \\
Phillipines &     0.75 (100.77, 256.25) &            0.92 (154.44, 181.42) &     \textbf{0.92 (80.46, 124.88)} \\
Sanjuan     &     0.50 (83.87, 1146.84) &   \textbf{0.67 (193.40, 703.35)} &              0.52 (89.29, 756.44) \\
Singapore   &    0.62 (234.12, 1203.66) &  \textbf{0.88 (662.91, 1065.98)} &            0.75 (241.60, 1206.81) \\
Venezuela   &   0.58 (1829.38, 4176.09) &           0.44 (766.75, 8589.17) &  \textbf{0.88 (1857.82, 2253.76)} \\
\bottomrule
\end{tabular}
\end{table}

\section{Simulation Study} \label{sec:simulation}

The purpose of this simulation study is to provide a controlled diagnostic of the memory mechanism theoretically established in Section~\ref{sec:asymptotics}. The preceding results show that a standard ESN produces exponentially decaying lag-response coefficients and therefore behaves as a short-memory system, whereas the proposed memory-augmented architecture can retain past input effects through a polynomially decaying memory pathway. We therefore examine whether this difference is visible at the level of the impulse response. 

This experiment isolates the dynamical effect of the memory reservoir by measuring how a unit shock at time $t = 0$ propagates across future lags. We estimate the lag-response coefficients $\{A_k\}_{k=1}^{T_{\max}}$ by direct impulse-response simulation of two linear dynamical systems: a standard ESN and a memory-augmented ESN. We construct an impulse input of length $T_{\mathrm{imp}}=T_{\max}+1$ by setting $u(0)=1$ and $u(t)=0$ for all $t\in\{1,\dots,T_{\max}\}$. We then simulate each model forward from a zero initial state, with $\mathbf{x}(0)=\mathbf{0}_p$ for the standard ESN and $\mathbf{m}(0)=\mathbf{0}_q$ for the memory-augmented ESN; the fractional-differencing buffer is also initialized with zeros. For the standard linear ESN, the recursion is
$\mathbf{x}(t)=\mathbf{W}_{\!xx}\mathbf{x}(t-1)+\mathbf{W}_{\!xu}u(t) + \eta_x(t)$
and
$y(t)=\mathbf{W}_{\!\mathrm{out},x}\mathbf{x}(t) + \varepsilon_x(t)$.
For the memory-augmented ESN, we first compute the strictly causal filtered input $f(t)$, then update
$\mathbf{m}(t)=\mathbf{W}_{\!mm}\mathbf{m}(t-1)+\mathbf{W}_{\!mf}f(t) + \eta_m(t)$
and compute
$y(t)=\mathbf{W}_{\!\mathrm{out},m}\mathbf{m}(t) + \varepsilon_m(t)$.

Under linearity and time invariance, the impulse response is equivalent to the input--output lag kernel. Hence, for $k\geq 1$, the coefficient $A_k$ multiplying $u(t-k)$ in the expansion
$y(t)=\sum_{k\geq 1}A_k u(t-k)$
is equal to the output at time $k$ when the system is driven by the unit impulse at time $0$. Accordingly, we set $A_k=y(k)$ for $k\in\{1,\dots,T_{\max}\}$, implemented as the slice $\{y(t)\}_{t=1}^{T_{\max}}$. Figure~\ref{fig:impulse_effect} plots the magnitude decay $|A_k|$ against lag $k$. The vanilla ESN response decays rapidly, consistent with short-memory behavior (Theorem~\ref{thm:vanilla-no-lm}), whereas the memory-augmented ESN (fESN) response decays more slowly, consistent with the polynomial-tail behavior derived in Section~\ref{sec:asymptotics} (Theorem~\ref{thm:fesn-lm}). We do not report a separate simulation for wESN because its long-memory mechanism is inherited directly from fESN after a preliminary wavelet-smoothing step. Since this finite-length wavelet transformation only modifies the input signal before the same fractional memory filter is applied, it does not alter the polynomial tail behavior established for fESN.

\begin{figure}
    \centering
    \includegraphics[width=0.95\linewidth]{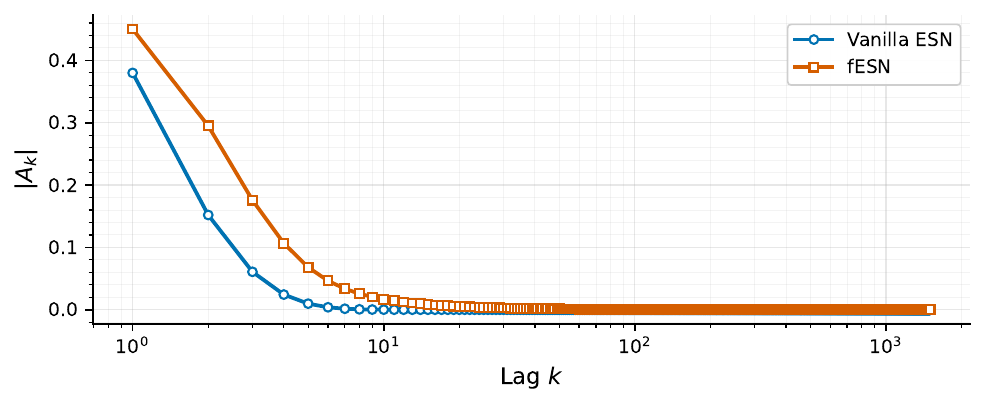}
    \caption{Impulse-response magnitude $|A_k|$ for the vanilla ESN and the fESN. The vanilla ESN response decays rapidly with lag, whereas fESN retains non-negligible lag effects over a longer horizon.}
    \label{fig:impulse_effect}
\end{figure}

As an additional diagnostic of the memory-state dynamics, Figure~\ref{fig:irreducibility} presents the first two principal components of the memory states generated from ten distinct initial conditions. The overlapping trajectories suggest that the simulated memory states do not separate into visually distinct regions of the projected state space. This diagnostic is consistent with the theoretical analysis presented in Section~\ref{sec:asymptotics}.

\begin{figure}
    \centering
    \includegraphics[width=0.90\linewidth]{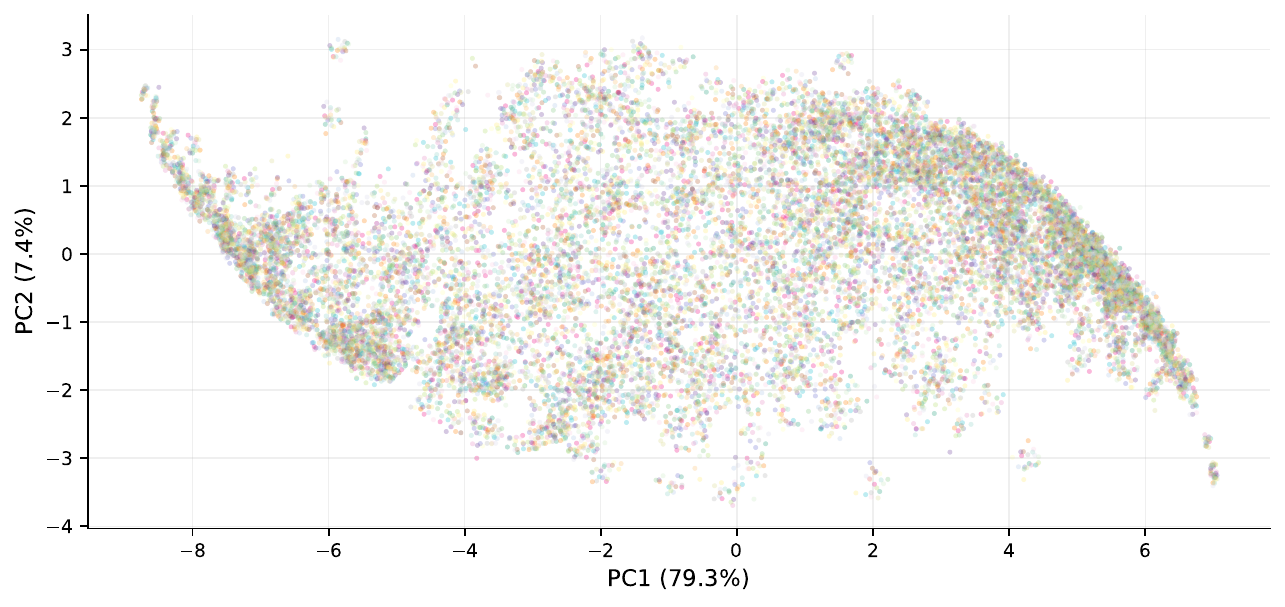}
    \caption{Two-dimensional PCA projection of memory-state trajectories generated from ten distinct initial conditions. PC1 explains most of the memory-state variation (79.3\%), while PC2 captures a smaller secondary mode (7.4\%), indicating that the trajectories are largely organized along one dominant low-dimensional direction.}
    \label{fig:irreducibility}
\end{figure}

\section{Conclusion and Discussion} \label{sec:discussion}
This paper introduced a long-memory reservoir computing framework for data-scarce dengue forecasting. ESNs are computationally efficient and well-suited to small-sample forecasting; their closed-loop generative dynamics lack statistical long memory under standard stability conditions. We show that conventional closed-loop ESNs are geometrically ergodic and therefore induce short-memory processes. In contrast, the proposed long-memory reservoir architectures generate polynomially decaying dependence, consistent with the statistical definition of long memory. This theoretical distinction clarifies why explicit memory augmentation is necessary for dengue incidence series, where persistence, non-stationarity, nonlinearity, and limited sample sizes arise simultaneously. We proposed two ESN-based long-memory architectures: the Fractional ESN (fESN) and the Wavelet ESN (wESN). Both models combine a standard short-memory reservoir, which captures local nonlinear epidemic dynamics, with a dedicated long-memory reservoir, which encodes slowly decaying temporal dependence. In fESN, fractional filtering is introduced directly into the memory pathway, allowing the model to represent persistent dependence through fractional-differencing dynamics. In wESN, wavelet smoothing first extracts a stable low-frequency structure, which is then passed through the memory-aware reservoir. Thus, the two models provide complementary mechanisms for combining the statistical strength of long-memory modeling with the nonlinear representation power and computational efficiency of reservoir computing.

The empirical results confirm the value of this design. Across nine dengue incidence datasets and multiple forecasting horizons, fESN and wESN achieve competitive or superior performance relative to statistical, recurrent, attention-based, memory-augmented, and reservoir-based benchmarks. The results also reveal an important trade-off: while memory-augmented deep recurrent models can encode long-range dependence in principle, their larger number of trainable parameters can make them difficult to estimate reliably from short epidemiological series. Conversely, ARFIMA is data-efficient and effective for long-memory structures, but remains linear. The proposed fESN and wESN models bridge these two regimes by combining explicit long-memory filtering with fixed nonlinear reservoirs and a lightweight ridge-regression readout. Diebold--Mariano tests further support the statistical significance of the forecasting gains. Further, we integrate fESN and wESN with conformal prediction, which is model-agnostic and distribution-free. This provides calibrated prediction intervals without imposing parametric assumptions on dengue forecasting errors. In particular, fESN often achieves a favorable balance between empirical coverage, interval width, and Winkler score, producing compact intervals while maintaining coverage close to the nominal level. The wESN model also improves coverage in several forecasting tasks, although its interval width varies more across datasets and horizons. Finally, the ablation analysis shows that wESN is relatively robust to the choice of wavelet family and decomposition level, supporting the use of a simple Haar wavelet with one decomposition level in the main experiments. The simulation study further verifies the theoretical memory claims by contrasting the exponential correlation decay of standard ESNs with the polynomial decay induced by the proposed long-memory reservoirs.

\subsection*{Limitations and Future Directions}

Despite these encouraging results, the proposed framework has some limitations. The present study focuses on univariate dengue incidence forecasting. This design allows us to isolate the role of long-memory reservoir dynamics, but it does not account for important epidemiological and environmental drivers, such as rainfall, humidity, temperature, population mobility, vector abundance, and intervention measures \citep{wearing2006ecological, nakhaie2026dengue}. Extending fESN and wESN to multivariate forecasting is therefore a natural direction for future research. However, such an extension is nontrivial because different covariates may exhibit different memory strengths. Imposing a common memory parameter across all variables may be too restrictive, and future work could instead estimate variable-specific memory parameters or construct separate memory pathways for epidemiological, climatic, and mobility-related inputs. Another direction for future work is to develop graph-based or spatiotemporal versions of fESN and wESN, in which each location has its own short- and long-memory reservoirs, with spatial interactions encoded via adjacency, mobility, climatic similarity, or learned graph structures \citep{pathak2026deep}. 

In the current formulation, long memory is introduced primarily through the external memory input. A promising extension would be to apply fractional filtering directly to the reservoir states, allowing persistence to operate across all reservoir dimensions rather than only through the memory pathway \citep{geweke1983estimation, carroll2022optimizing}. This may lead to richer long-range representations and could strengthen the theoretical connection between reservoir dynamics and long-memory network processes. Finally, although the proposed models are computationally efficient and effective in small samples, further validation is needed across additional countries, finer spatial resolutions, and other real-time forecasting settings.

\subsection*{Broader Impact Statement}
For public health officials, forecasting dengue incidences represents a real-world modeling challenge that demands robust, generalizable, and computationally efficient approaches capable of anticipating the impact of this vector-borne disease. This work supports public health preparedness by providing data-efficient dengue forecasting tools for settings where long historical records and computational resources may be limited. Our proposed framework can support early warning in regions where large deep learning models are impractical. The use of conformal prediction adds operational value by reporting calibrated uncertainty intervals, helping public health teams judge when forecasts are reliable or risky. However, the current models use dengue incidence alone and may miss changes caused by rainfall, temperature, mobility, reporting delays, or interventions. Therefore, the forecasts should complement, not replace, local epidemiological expertise and surveillance-based decision-making.

\section*{Code Availability Statement}
All datasets used in this study are publicly available at \url{https://github.com/mad-stat/Epicasting}. Implementation code is publicly available at \url{https://github.com/yuvrajiro/memory-esn}, as well as through our Python package `\href{https://pypi.org/project/memory-esn/}{memory-esn}'.

\section*{Acknowledgement}
We sincerely thank Ms. Donia Besher and Mr. Rajdeep Pathak of Sorbonne University for their constructive suggestions, which greatly improved the presentation of this work.

% \subsubsection*{Author Contributions}
% If you'd like to, you may include a section for author contributions as is done
% in many journals. This is optional and at the discretion of the authors. Only add
% this information once your submission is accepted and deanonymized. 

% \subsubsection*{Acknowledgments}
% Use unnumbered third level headings for the acknowledgments. All
% acknowledgments, including those to funding agencies, go at the end of the paper.
% Only add this information once your submission is accepted and deanonymized. 
\clearpage
\bibliography{main}
\bibliographystyle{tmlr}
\clearpage
\appendix
\section*{Appendix}
\section{Proofs for Section~\ref{sec:asymptotics}}
\label{app:proofs}

\subsection{Proof of Lemma~\ref{lem:block-spectrum}}

\begin{proof}
For $Y=(y,\mathbf{x}^\top)^\top\in\mathbb R^{1+p}$ with $y\in\mathbb R$, $\mathbf{x}\in\mathbb R^p$, set
\begin{equation*}
   Z=
\begin{bmatrix}\mathbf{x}\\[2pt] y - \mathbf W_{\mathrm{out}}\mathbf{x}\end{bmatrix}
= S\,Y,
\quad
S=
\begin{bmatrix}\mathbf 0; & I_p\\ 1; &-\mathbf W_{\mathrm{out}}\end{bmatrix},
\quad
S^{-1}=
\begin{bmatrix}\mathbf W_{\mathrm{out}};&1\\ I_p;&\mathbf 0\end{bmatrix}. 
\end{equation*}
Writing $\mathbf{a}=\mathbf W_{\mathrm{out}}\mathbf W_{xy}$, $\mathbf{b}=\mathbf W_{\mathrm{out}}\mathbf W_{xx}$, $\mathbf{c}=\mathbf W_{xy}$ and $\mathbf{d}=\mathbf W_{xx}$,
we compute
\begin{equation*}
    \mathbf M_{\mathrm{ESN}}S^{-1}
=\begin{bmatrix}\mathbf{a}\mathbf W_{\mathrm{out}}+\mathbf{b}; & \mathbf{a}\\ \mathbf{c}\mathbf W_{\mathrm{out}}+\mathbf{d}; & \mathbf{c}\end{bmatrix},
\quad
S\mathbf M_{\mathrm{ESN}}S^{-1}
=\begin{bmatrix}
\mathbf{c}\mathbf W_{\mathrm{out}}+\mathbf{d}; \quad & \textbf{c}\\
\mathbf{a}\mathbf W_{\mathrm{out}}+\mathbf{b}-\mathbf W_{\mathrm{out}}(\mathbf{c}\mathbf W_{\mathrm{out}}+\mathbf{d}); \quad & \mathbf{a}-\mathbf W_{\mathrm{out}}\mathbf{c}
\end{bmatrix}.
\end{equation*}
Since $\mathbf{a}=\mathbf W_{\mathrm{out}}\mathbf{c}$ and $\textbf{b}=\mathbf W_{\mathrm{out}}\mathbf{d}$, the lower block row is zero, and with $\mathbf F=\mathbf{d}+\mathbf{c}\mathbf W_{\mathrm{out}}$, we get $ S\mathbf M_{\mathrm{ESN}}S^{-1}
=\begin{bmatrix}\mathbf F & \mathbf W_{xy}\\ \mathbf 0 & 0\end{bmatrix}$. For a block upper-triangular matrix, the characteristic polynomial factors as
\begin{equation*}
  \det\!\big(\lambda I_{p+1}-S\mathbf M_{\mathrm{ESN}}S^{-1}\big)
= \det(\lambda I_p-\mathbf F)\cdot \lambda,  
\end{equation*}
so the eigenvalues of $\mathbf M_{\mathrm{ESN}}$ are exactly those of $\mathbf F$ together with $0$. The spectral radius is therefore $\rho(\mathbf M_{\mathrm{ESN}})=\rho(\mathbf F)$.
\end{proof}

\subsection{Proof of Lemma~\ref{lem:esp-iff}}
\begin{proof}
Let two trajectories $(\mathbf{x}_1,y_1)$ and $(\mathbf{x}_2,y_2)$ be driven by the same $\{\varepsilon(t)\}$ and $\{\eta(t)\}$. Subtracting the state and readout equations gives
\begin{equation*}
   \Delta\mathbf{x}(t)=\mathbf{W}_{xx}\Delta\mathbf{x}(t-1)+\mathbf{W}_{xy}\Delta y(t-1),
\qquad
\Delta y(t)=\mathbf{W}_{\mathrm{out}}\,\Delta\mathbf{x}(t), 
\end{equation*}
since the noise cancels in the readout difference. Eliminating $\Delta y(t-1)$ via the readout at time $t-1$ yields
\begin{equation*}
    \Delta\mathbf{x}(t)=\bigl(\mathbf{W}_{xx}+\mathbf{W}_{xy}\mathbf{W}_{\mathrm{out}}\bigr)\Delta\mathbf{x}(t-1)
=\mathbf{F}\,\Delta\mathbf{x}(t-1),
\end{equation*}
and therefore by induction,
\begin{equation*}
    \Delta\mathbf{x}(t)=\mathbf{F}^{\,t}\Delta\mathbf{x}(0),\qquad
\Delta y(t)=\mathbf{W}_{\mathrm{out}}\,\Delta\mathbf{x}(t).
\end{equation*}
If $\rho(\mathbf{F})<1$, then $\mathbf{F}^{\,t}\to \mathbf{0}$ as $t\to\infty$. Hence $\Delta\mathbf{x}(t)=\mathbf{F}^{\,t}\Delta\mathbf{x}(0)\to \mathbf{0}$ for every initial difference, and consequently $\Delta y(t)=\mathbf{W}_{\mathrm{out}}\,\Delta\mathbf{x}(t)\to 0$. Thus, ESP holds.

Conversely, if $\rho(\mathbf{F})\ge 1$, there exists a nonzero vector $\mathbf{v}$ in a generalized eigenspace associated with some eigenvalue $\lambda$ with $|\lambda|\ge 1$ such that $\|\mathbf{F}^{\,t}\mathbf{v}\|\not\to 0$ (in fact it either stays bounded away from $0$ or grows at most polynomially times $|\lambda|^t$). Choosing $\Delta\mathbf{x}(0)=\mathbf{v}$ produces a pair of trajectories driven by the same noise for which $\Delta\mathbf{x}(t)$ does not converge to $\mathbf{0}$, contradicting ESP. Therefore, $\rho(\mathbf{F})<1$.

\medskip
The two implications prove (i) $\Leftrightarrow$ (ii). Finally, Lemma~\ref{lem:block-spectrum} gives $\rho(\mathbf{M}_{\mathrm{ESN}})=\rho(\mathbf{F})$, so the ESP criterion can be equivalently checked on $\mathbf{M}_{\mathrm{ESN}}$.
\end{proof}

\subsection{Proof of \ref{thm:esn-geo-ergodic}}

\begin{proof}
Let the Markov chain be $\{\mathbf{Y}(t)\}$, where $\mathbf{Y}(t)=\big[y(t);\,\mathbf{x}(t)^\top\big]^\top \in \mathbb{R}^{1+p}$. Its state space is $(\mathbb{R}^{1+p}, \mathcal{B}^{1+p})$, where $\mathcal{B}^{1+p}$ is the class of Borel sets. Under the linear setting, the ESN model is defined in Eqn.~\ref{eq:esn-linear-block} can be written as
\begin{equation}\label{eq:Y_update_proof}
\mathbf{Y}(t) = \mathbf{M}_{\mathrm{ESN}}\,\mathbf{Y}(t-1) + \mathbf{E}(t),
\end{equation}
where $\mathbf{Y}(t) \in \mathbb{R}^{1+p}$, $\mathbf{M}_{\mathrm{ESN}} \in \mathbb{R}^{(1+p) \times (1+p)}$, and $\mathbf{E}(t) = [\varepsilon_t; \eta(t)^\top]^\top \in \mathbb{R}^{1+p}$ is the noise vector. %$\mathbf{E}(t) = \begin{bmatrix}\varepsilon(t); \mathbf{0}_p\end{bmatrix}^\top$.
The transition probability 
\begin{equation*}
    \label{eq:transition}
    P(\mathbf{y}, A) = \Pr(\mathbf{Y}(t) \in A \mid \mathbf{Y}(t-1) = \mathbf{y}) = \int_Af(z - \mathbf{M}_{ESN}\mathbf{y})dz
\end{equation*}
is defined via this update. Under Assumption~\ref{ass:network1}, $\{\mathbf{Y}(t)\}$ is $\nu_{1+p}$-irreducible as the joint density is positive, where $\nu_{1+p}$ is the Lebesgue measure, following \citet{zhao2020rnn}.

We suppose $\rho(\mathbf{M}_{\mathrm{ESN}}) < 1$ implying that there exists an integer $s$ such that $\|\mathbf{M}_{\mathrm{ESN}}^s\| < 1$. In the following, we prove that the $s$-step Markov chain $\{\mathbf{Y}{(ts)}\}$ satisfies Tweedie's drift criterion \citep{tweedie1983criteria}, i.e., there exists a small set $G$ with $\nu_{1+p}(G) > 0$ and a non-negative continuous function $\psi(\mathbf{y})$ such that
\begin{equation*}
\mathbb{E}\{\psi(\mathbf{Y}{(ts)}) | \mathbf{Y}{((t-1)s)} = \mathbf{y}\} \le (1 - \epsilon)\psi(\mathbf{y}), \quad \mathbf{y} \notin G,
\end{equation*}
and
\begin{equation*}
\mathbb{E}\{\psi(\mathbf{Y}{(ts)}) | \mathbf{Y}{((t-1)s)} = \mathbf{y}\} \le M, \quad \mathbf{y} \in G,
\end{equation*}
for some constant $0 < \epsilon < 1$ and $0 < M < \infty$.

Iterating the model defined in Eqn.~\ref{eq:Y_update_proof} $s$ times, we obtain
\begin{equation*}
\mathbf{Y}{(ts)} = \mathbf{M}_{\mathrm{ESN}}^s \mathbf{Y}{((t-1)s)} + \left( \mathbf{E}{(ts)} + \sum_{j=1}^{s-1} \mathbf{M}_{\mathrm{ESN}}^j \mathbf{E}{(ts-j)} \right).
\end{equation*}
Let $\psi(\mathbf{y}) = 1 + \|\mathbf{y}\|^\kappa$, where $\kappa \ge 2$ is from Assumption~\ref{ass:network1}. Using Minkowski's inequality, it follows that
\begin{align*}
\mathbb{E}\{\psi(\mathbf{Y}{(ts)}) | \mathbf{Y}{((t-1)s)} = \mathbf{y}\}
   &= 1 + \mathbb{E}\left\| \mathbf{M}_{\mathrm{ESN}}^s \; \mathbf{y} + \left( \mathbf{E}{(ts)} + \sum_{j=1}^{s-1} \mathbf{M}_{\mathrm{ESN}}^j \mathbf{E}{(ts-j)} \right) \right\|^\kappa \\
   &\le 1 + \left( \|\mathbf{M}_{\mathrm{ESN}}^s\| \|\mathbf{y}\| + \left(\mathbb{E}\left\| \mathbf{E}{(ts)} + \sum_{j=1}^{s-1} \mathbf{M}_{\mathrm{ESN}}^j \mathbf{E}{(ts-j)} \right\|^\kappa \right)^{1/\kappa} \right)^\kappa \\
   &\le 1 + 2^{\kappa-1}\left( \|\mathbf{M}_{\mathrm{ESN}}^s\|^\kappa \|\mathbf{y}\|^\kappa + \mathbb{E}\left\| \mathbf{E}{(ts)} + \sum_{j=1}^{s-1} \mathbf{M}_{\mathrm{ESN}}^j \mathbf{E}{(ts-j)} \right\|^\kappa \right) \\
   &\le 2^{\kappa-1}\psi(\mathbf{y}) \|\mathbf{M}_{\mathrm{ESN}}^s\|^\kappa + C,
\end{align*}
where $C = 1 + 2^{\kappa-1}\left(\mathbb{E}\left\| \mathbf{E}{(ts)} + \sum_{j=1}^{s-1} \mathbf{M}_{\mathrm{ESN}}^j \mathbf{E}{(ts-j)} \right\|^\kappa - \|\mathbf{M}_{\mathrm{ESN}}^s\|^\kappa\right) < \infty$. The finiteness of $C$ is guaranteed by Assumption~\ref{ass:network1}.

Note that $\|\mathbf{M}_{\mathrm{ESN}}^s\|^\kappa < 1$. Then there exists $L > 0$, such that for some $\epsilon > 0$,
\begin{equation*}
\mathbb{E}\{\psi(\mathbf{Y}{(ts)}) | \mathbf{Y}{((t-1)s)} = \mathbf{y}\} \le (1 - \epsilon)\psi(\mathbf{y}), \quad \forall \|\mathbf{y}\| > L,
\end{equation*}
and
\begin{equation*}
\mathbb{E}\{\psi(\mathbf{Y}{(ts)}) | \mathbf{Y}{((t-1)s)} = \mathbf{y}\} \le M < \infty, \quad \forall \|\mathbf{y}\| \le L.
\end{equation*}
We define the compact set $G = \{\mathbf{y} : \|\mathbf{y}\| \le L\}$, which has $\nu_{1+p}(G) > 0$.

Moreover, for any bounded continuous function $\phi(\cdot)$, the expectation $E\{\phi(\mathbf{Y}{(ts)}) | \mathbf{Y}{((t-1)s)} = \mathbf{y}\}$ is continuous with respect to $\mathbf{y}$. Thus, $\{\mathbf{Y}{(ts)}\}$ is a Feller chain. Since it is also $\nu_{1+p}$-irreducible, this implies that $G$ is a small set \citep{feigin1985random}. By Theorem 4(ii) in \citet{tweedie1983criteria}, the $s$-step Markov chain $\{\mathbf{Y}^{(ts)}\}$ is geometrically ergodic with a unique strictly stationary solution. By Lemma 3.1 of \citet{tjostheim1990non}, the original chain $\{\mathbf{Y}^{(t)}\}$ is also geometrically ergodic.

Conversely (Necessity), suppose the model defined in Eqn.~\ref{eq:Y_update_proof} is geometrically ergodic. Then there exists a strictly stationary solution $\{\mathbf{Y}(t)\}$ \citep{feigin1985random}. This stationary solution has a distribution $\pi(\cdot)$ from which we can generate $\mathbf{Y}^{(0)}$ and iteratively obtain the sequence.

From the transition probability, it holds that $\Pr(\mathbf{Y}(t) \in A \mid \mathbf{Y}(t-1) = \mathbf{y}) = P(\mathbf{y}, A) > 0$ if $\nu_{1+p}(A) > 0$, as established by the irreducibility. Let $H$ be any affine invariant subspace of $\mathbb{R}^{1+p}$ under the model in Eqn.~\ref{eq:Y_update_proof}, i.e., $\{\mathbf{M}_{\mathrm{ESN}}\mathbf{y} + \mathbf{E}(t) : \mathbf{Y} \in H\} \subseteq H$ with probability one. If $\nu_{1+p}(\mathbb{R}^{1+p} - H) \ne 0$, then for any $\mathbf{y} \in H$, $\Pr(\mathbf{M}_{\mathrm{ESN}}\mathbf{y} + \mathbf{E}(t) \in H) < 1$. As a result, $\mathbb{R}^{1+p}$ is the unique affine invariant subspace, and hence model in Eqn.~\ref{eq:Y_update_proof} is irreducible. Thus, by Theorem 2.5 in \citet{bougerol1992strict}, the top Lyapunov exponent is strictly negative, and thus the spectral radius $\rho(\mathbf{M}_{\mathrm{ESN}}) = \lim_{s\to\infty} \|\mathbf{M}_{\mathrm{ESN}}^s\|^{1/s} < 1$. This completes the proof.
\end{proof}

\subsection{Proof of Theorem~\ref{thm:esn_nonlinear}}
\begin{proof}
Let $\mathbf{Y}(t)=[y(t);\mathbf{x}(t)]^\top\in\mathbb{R}^{1+p}$ with $y(t)\in\mathbb{R}$ and $\mathbf{x}(t)\in\mathbb{R}^p$. The ESN recursion can be written as
\begin{equation*}
  \mathbf{Y}(t)=\mathcal{M}\!\bigl(\mathbf{Y}(t-1)\bigr)+\mathbf{E}(t),\qquad
\mathbf{E}(t)=\bigl[\varepsilon(t);\eta(t)^\top\bigr]^\top,  
\end{equation*}
with
\begin{equation*}
    \mathcal{M}(y,\mathbf{x})
=
\begin{bmatrix}
g\!\big(\,\mathbf{W}_{\mathrm{out}}\,\tau(\mathbf{W}_{xx}\mathbf{x}+\mathbf{W}_{xy}y)\,\big)\\[2pt]
\tau(\mathbf{W}_{xx}\mathbf{x}+\mathbf{W}_{xy}y)
\end{bmatrix},
\end{equation*}
where $\mathbf{W}_{xx}\in\mathbb{R}^{p\times p}$, $\mathbf{W}_{xy}\in\mathbb{R}^{p\times 1}$, $\mathbf{W}_{\mathrm{out}}\in\mathbb{R}^{1\times p}$. We need to show that there always exist real numbers $a<1$ and $b$ such that
\begin{equation*}
\bigl\|\mathcal{M}(y,\mathbf{x})\bigr\|_{\ell_1} \;\le\; a\,\bigl\|(y,\mathbf{x})^\top\bigr\|_{\ell_1} + b
\end{equation*}

Since $g(\cdot)$ and $\tau(\cdot)$ are bounded, there exist positive constants $M_1$ and $M_2$ such that
\begin{equation*}
g\!\big(\,\mathbf{W}_{\mathrm{out}}\,\tau(\mathbf{W}_{xx}\mathbf{x}+\mathbf{W}_{xy}y)\,\big)\;\le\; M_1,
\qquad
\tau(\mathbf{W}_{xx}\mathbf{x}+\mathbf{W}_{xy}y) \;\le\; M_2
\end{equation*}
for any $y\in\mathbb{R}$, $\mathbf{x}\in\mathbb{R}^p$. Let $a=a_0\in(0,1)$ and $b=M_1+M_2$. We have
\begin{equation*}
\bigl\|\mathcal{M}(y,\mathbf{x})\bigr\|_{\ell_1}
- a_0\,\bigl\|(y,\mathbf{x}^\top)^\top\bigr\|_{\ell_1}
\;\le\; M_1+M_2 - a_0|y| - a_0\|\mathbf{x}\|_{\ell_1}
\;\le\; M_1+M_2 \;=\; b.
\end{equation*}
By Theorem~1 of \citet{zhao2020rnn}, ESN with bounded and continuous output and activation functions is geometrically ergodic and exhibits short memory.
\end{proof}

\subsection{Proof of Theorem~\ref{thm:vanilla-no-lm}}
\begin{proof}
For simplicity of the proof, we consider the (open-loop) linear ESN with input $u(t)$ and output $z(t)$ without the noise terms:
\begin{equation}\label{eq:open_loop_esn}
\mathbf{x}(t)=\mathbf{W}_{xx}\,\mathbf{x}(t-1)+\mathbf{W}_{xu}\,u(t),\qquad
 z(t)=\mathbf{W}_{\mathrm{out}}\,\mathbf{x}(t),
\end{equation}
with $\mathbf{W}_{xx}\in\mathbb{R}^{p\times p}$, $\mathbf{W}_{xu}\in\mathbb{R}^{p\times 1}$, $\mathbf{W}_{\mathrm{out}}\in\mathbb{R}^{1\times p}$, $\mathbf{x}(t)\in\mathbb{R}^p$, and scalar $u(t)\in\mathbb{R}$. Unrolling Eqn.~\ref{eq:open_loop_esn} gives, for any $t\ge1$,
\[
\mathbf{x}(t)=\mathbf{W}_{xx}^{\,t}\mathbf{x}(0)+\sum_{k=0}^{t-1}\mathbf{W}_{xx}^{\,k}\,\mathbf{W}_{xu}\,u(t-k).
\]
Since $\rho(\mathbf{W}_{xx})<1$, there exist $M>0$ and $r\in(0,1)$ such that $\|\mathbf{W}_{xx}^{\,k}\|\le Mr^{k}$ for all $k\ge0$. Hence, the transient $\mathbf{W}_{xx}^{\,t}\mathbf{x}(0)\to\mathbf{0}$ exponentially fast and, as $t\to\infty$, the input-output map is the linear time-invariant filter
\begin{equation*}\label{eq:conv-form}
z(t)=\sum_{k=0}^{\infty} A_k\,u(t-k),
\qquad
A_k = \mathbf{W}_{\mathrm{out}}\,\mathbf{W}_{xx}^{\,k}\,\mathbf{W}_{xu}\in\mathbb{R}.
\end{equation*}
Moreover, since $0 < r <1$, then 
\[
|A_k|\;\le\;\|\mathbf{W}_{\mathrm{out}}\|\,\|\mathbf{W}_{xu}\|\,\|\mathbf{W}_{xx}^{\,k}\|
\;\le\;\|\mathbf{W}_{\mathrm{out}}\|\,\|\mathbf{W}_{xu}\|\,M\,r^{k},
\]
so $\sum_{k=0}^{\infty}|A_k|<\infty$ and the impulse response \(\{A_k\}\) decays exponentially.

% Let $u(t)$ be square-integrable (e.g., second-order stationary). Then \eqref{eq:conv-form} defines a bounded LTI filter with absolutely summable coefficients. Such filters produce outputs with \emph{short memory}: for instance, if $u(t)$ is white noise with variance $\sigma^2$, the autocovariance of $\hat y(t)$ is
% \[
% \gamma_{\hat y}(h)=\sigma^2\sum_{k=0}^{\infty} A_k A_{k+|h|},
% \]
% and the geometric bound on $\{A_k\}$ implies $\sum_{h=-\infty}^{\infty}|\gamma_{\hat y}(h)|<\infty$. Long memory requires a non-summable (typically polynomially decaying) impulse response or, equivalently, non-summable autocovariances, which cannot occur here. Thus a vanilla ESN with $\rho(\mathbf{W}_{xx})<1$ cannot generate a long-memory process.
\end{proof}

\subsection{Proof of Theorem~\ref{thm:fesn-lm}}
\begin{proof}
Unrolling the reservoir state of fESN,
\begin{equation*}
   \mathbf{x}(t)=\mathbf{W}_{xx}^{\,t}\mathbf{x}(0)+\sum_{k=0}^{t-1}\mathbf{W}_{xx}^{k}\mathbf{W}_{xu}\,u(t-k). 
\end{equation*}
Since $\rho(\mathbf{W}_{xx})<1$, there exist $M>0$ and $\alpha\in(0,1)$ with $\|\mathbf{W}_{xx}^{\,k}\|\le M\alpha^k \  \forall k \geq 0$; the transient $\mathbf{W}_{xx}^{\,t}\mathbf{x}(0)\to\mathbf{0}$ exponentially. Thus, the reservoir contribution to the impulse response is
\begin{equation*}
   C_k=\mathbf{W}_{\mathrm{out},x}\,\mathbf{W}_{xx}^{\,k}\,\mathbf{W}_{xu},\qquad \|C_k\|=O(\alpha^k). 
\end{equation*}
% Write $(1-B)^d-1=\sum_{r\ge1}\omega_d(r)B^r$ with
% \[
% \omega_d(r)=(-1)^r\binom{d}{r}=\frac{\Gamma(d+1)}{\Gamma(r+1)\Gamma(d-r+1)}\sim \frac{1}{\Gamma(-d)}\,r^{-(d+1)}\quad(r\to\infty).
% \]
Unrolling the memory state gives us
\begin{equation*}
   \mathbf{m}(t)=\mathbf{W}_{\!xx}^{\,t}\mathbf{m}(0)+\sum_{k=0}^{t-1}\mathbf{W}_{\!mm}^{k}\mathbf{W}_{mf}((1 - B)^d - 1)u(t-k). 
\end{equation*}
Taking
\begin{equation*}
    D_k = \mathbf{W}_{\mathrm{out},m}\,\mathbf{W}_{mm}^{\,k}\,\mathbf{W}_{mf}((1 - B)^d - 1),
\end{equation*}
we see that $z(t) = \sum_{k \geq 0} C_k u(t-k) + \sum_{k \geq 0} D_k u(t-k) + \varepsilon(t) = \sum_{k \geq 0} A_k u(t-k) + \varepsilon(t)$, where $A_k = C_k + D_k$. Since $C_k$ decays exponentially but $D_k$ is dominated by the polynomially decaying weights ($\sim k^{-d-1}$), the entries of $A_k$ decay at a polynomial rate $k^{-\tilde{d}-1}$. Hence, by Definition~\ref{def:long_memory}, fESN is a long memory network process.

% Because $\rho(\mathbf{W}_{mm})<1$, there exist $M'>0$ and $\beta\in(0,1)$ with $\|\mathbf{W}_{mm}^{\,q}\|\le M'\beta^q$, hence
% \[
% \|D_k\|\ \le\ \sum_{q=0}^{k-1}\|\mathbf{W}_{\mathrm{out},m}\|\,\|\mathbf{W}_{mf}\|\,C'\beta^q\,|\omega_d(k-q)|
% \ =\ O\!\Big(\sum_{q=0}^{k-1}\beta^q\,(k-q)^{-(d+1)}\Big).
% \]
% Split the sum at $q_0=\lfloor k/2\rfloor$. For $0\le q\le q_0$, $(k-q)\asymp k$, giving $O(k^{-(d+1)}\sum_{q\le q_0}\beta^q)=O(k^{-(d+1)})$. For $q_0<q\le k-1$, $(k-q)\le k/2$ so $(k-q)^{-(d+1)}\le (k/2)^{-(d+1)}$, and $\sum_{q>q_0}\beta^q=O(\beta^{k/2})$, hence this tail is $O(k^{-(d+1)}\beta^{k/2})$. 
% Therefore $\|D_k\|=O(k^{-(d+1)})$.
% Combining the two branches yields $A_k=C_k+D_k$ with $\|C_k\|=O(\alpha^k)$ and $\|D_k\|=O(k^{-(d+1)})$, so the overall impulse tail is polynomial. By the paper’s convention, this qualifies as a \emph{long memory network process}.
\end{proof}

% \begin{remark}[Optional stronger asymptotics under non-degeneracy]
% If $\mathbf{W}_{\mathrm{out},m}\,(\mathbf{I}_q-\mathbf{W}_{mm})^{-1}\mathbf{W}_{mf}\neq 0$, then
% \[
% D_k\ \sim\ \frac{1}{\Gamma(-d)}\,\mathbf{W}_{\mathrm{out},m}\,(\mathbf{I}_q-\mathbf{W}_{mm})^{-1}\mathbf{W}_{mf}\ \cdot\ k^{-(d+1)},
% \]
% and $A_k\sim D_k$ since $C_k$ is exponentially small. Without this non-degeneracy, the universal statement is the big-$O$ bound above.
% \end{remark}

\subsection{Proof of Theorem~\ref{thm:wesn-lm} (wESN)}
Let the implemented level-1 MODWT Haar smoother be
\[
s(t)=\sum_{\ell=0}^{L-1}\tilde g_1[\ell]\;u\bigl(t-\ell\bigr),
\qquad
\tilde g_1[\ell]=\frac{g[\ell]}{\sqrt{2}},
\]
with base low-pass taps $g[\ell]$. Initial transients $W_{xx}^t x(0)$ and $W_{mm}^t m(0)$
decay exponentially and are omitted.

The model is
\begin{align*}
\mathbf{z}(t) &= \mathbf{W}_{\!\mathrm{out},x}\,\mathbf{x}(t) + \mathbf{W}_{\!\mathrm{out},m}\,\mathbf{m}(t) + \varepsilon(t),\\
\mathbf{x}(t) &= \mathbf{W}_{\!xx}\,\mathbf{x}(t-1) + \mathbf{W}_{xu}\,u(t),\\
\mathbf{m}(t) &= \mathbf{W}_{\!mm}\,\mathbf{m}(t-1) + \mathbf{W}_{\!mf}\,\big((1-B)^d - 1\big)\,s(t).
\end{align*}

Solving the recursions:
\begin{align*}
\mathbf{x}(t) &= (I-W_{xx}B)^{-1}\mathbf{W}_{\!xu}\,u(t)
     = \sum_{p=0}^{\infty}\mathbf{W}_{\!xx}^{\,p}B^p\mathbf{W}_{\!xu}\,u(t)
     = \sum_{p=0}^{\infty} \mathbf{W}_{\!xx}^{\,p}\mathbf{W}_{\!xu}\,u(t-p),\\[2mm]
\mathbf{m}(t) &= (I-\mathbf{W}_{\!mm}B)^{-1}\mathbf{W}_{\!mf}\big(1-B)^d-1\big)s(t) \\
     &= \sum_{q=0}^{\infty}\sum_{r=1}^{\infty}\sum_{\ell=0}^{L-1}
        \mathbf{W}_{\!mm}^{\,q}\mathbf{W}_{\!mf}\,\omega_r(d)\,\tilde g_1[\ell]\;
        u\bigl(t-q-r-\ell\bigr).
\end{align*}

Therefore,
\[
\mathbf{z}(t)=\sum_{k=0}^{\infty}\mathbf{A}_k\,u(t-k)+\varepsilon(t),\qquad \mathbf{A}_k=\mathbf{C}_k+\mathbf{D}_k^{(1)},
\]
with
\begin{align*}
C_k &= \mathbf{W}_{\!\mathrm{out},x}\,\mathbf{W}_{\!xx}^{\,k}\,\mathbf{W}_{\!xu} \quad \text{and}\\
D_k^{(1)} &= \sum_{\substack{\ell=0,\dots,L-1\\ \ell\le k-1}}
            \ \sum_{q=0}^{\,k-\ell-1}
            \mathbf{W}_{\!\mathrm{out},m}\,\mathbf{W}_{\!mm}^{\,q}\,\mathbf{W}_{mf}\,
            \omega_{k-\ell-q}(d)\,\tilde g_1[\ell],
\quad D_0^{(1)}=0.
\end{align*}

Since $\|\mathbf{W}_{xx}^{\,k}\|\le C_1\alpha^k$ and $\|\mathbf{W}_{mm}^{\,k}\|\le C_2\beta^k$ for some $0<\alpha,\beta<1$ by Assumption~\ref{ass:esp}.
\begin{equation*}
  \|C_k\|=\mathcal{O}(\alpha^{k}),\qquad
\|D_k^{(1)}\| \le \sum_{\ell}\sum_{q=0}^{k-\ell-1}\mathcal{O}(\beta^{q})\,
                 \mathcal{O}\big((k-\ell-q)^{-(d+1)}\big)
           = \mathcal{O}\big(k^{-(d+1)}\big),  
\end{equation*}
so $\|A_k\|=\mathcal{O}\big(k^{-(\tilde{d}+1)}\big)$. The smoother 
does not change the polynomial tail exponent of the impulse response.

\section{LOOCV Trick} \label{app:loocv}
Let $X\in\mathbb{R}^{n\times p}$ denote the feature matrix and $Y\in\mathbb{R}^{n\times d}$ the target matrix. 
We estimate the coefficient matrix $\beta\in\mathbb{R}^{p\times d}$ by ridge regression
\begin{equation*}
\label{eq:ridge_obj}
\min_{\beta}\;\; \|Y - X\beta\|_F^2 \;+\; \lambda \,\|\beta\|_F^2,
\end{equation*}
where $\lambda\ge 0$ is the ridge penalty.\footnote{When an intercept is needed, a standard approach is to center the columns of $X$ and $Y$; the intercept is then recovered as the column means of $Y$ minus the means of $X$ times $\beta$. The intercept is not penalized.} 
The solution is the familiar closed form
\begin{equation*}
\label{eq:ridge_sol}
\beta(\lambda) \;=\; (X^\top X + \lambda I_p)^{-1} X^\top Y.
\end{equation*}
Define the fitted values and the \emph{smoother} (hat) matrix
\begin{equation*}
\widehat{Y} \;=\; X\beta(\lambda) \;=\; S_\lambda\,Y,\qquad 
S_\lambda \;=\; X\,(X^\top X + \lambda I_p)^{-1} X^\top.
\end{equation*}
Leave-one-out cross-validation fits the model $n$ times, each time holding out the $i$th observation. 
For any linear smoother $\widehat{Y}=S_\lambda Y$, the leave-one-out residuals admit an exact, one-fit formula:
\begin{equation}
\label{eq:loo_res}
E \;=\; Y - \widehat{Y},\qquad 
E^{(-i)}_{i\cdot} \;=\; \frac{E_{i\cdot}}{\,1 - (S_\lambda)_{ii}\,}\quad\text{for }i=1,\ldots,n,
\end{equation}
where $E_{i\cdot}\in\mathbb{R}^{1\times d}$ is the residual row for sample $i$. 
Thus the LOOCV score for a given $\lambda$ is
\begin{equation}
\label{eq:loo_score}
\text{LOOCV}(\lambda) \;=\; \frac{1}{n}\sum_{i=1}^n \bigl\|E^{(-i)}_{i\cdot}\bigr\|_2^2 
\;=\; \frac{1}{n}\sum_{i=1}^n \frac{\|E_{i\cdot}\|_2^2}{\bigl(1 - (S_\lambda)_{ii}\bigr)^2}.
\end{equation}
No refitting is required; only the diagonal of $S_\lambda$ is needed. 
For multioutput targets $Y$, the matrix $S_\lambda$ is the same for all outputs, so Eqn.~\ref{eq:loo_res}-\ref{eq:loo_score} apply by simply accumulating squared norms across columns of $Y$. Form the $p\times p$ Gram matrix $G_\lambda = X^\top X + \lambda I_p$ and compute its Cholesky factorization $G_\lambda = R^\top R$.
Then
\begin{align*}
\beta 
&= (X^\top X + \lambda I_p)^{-1} X^\top Y 
= R^{-1}\bigl(R^{-\top} X^\top Y\bigr),\\
S_\lambda 
&= X\,(X^\top X + \lambda I_p)^{-1} X^\top 
= XR^{-1}R^{-\top}X^\top 
= CC^\top,\qquad C \equiv XR^{-1}.
\end{align*}
Hence the diagonal of $S_\lambda$ is available without forming $S_\lambda$ explicitly:
\begin{equation*}
\label{eq:diagS}
(S_\lambda)_{ii} \;=\; \|C_{i\cdot}\|_2^2,\qquad i=1,\ldots,n.
\end{equation*}
This requires one $p\times p$ Cholesky factorization and triangular solves; the cost scales with $p$ rather than the number of outputs $d$.
A common strategy is to choose $\lambda$ by minimizing the LOOCV score \ref{eq:loo_score}.

\section{Empirical Evaluation} \label{app:results}
This appendix presents the extended empirical evaluation results conducted in this study. In Subsection~\ref{subsec:metric}, we outline the mathematical formulation of the performance evaluation metrics. Subsection~\ref{subsec:hyper} details the hyperparameter selection procedure, with the search spaces outlined in Table~\ref{tab:search_space_all}. The complete set of empirical results is summarized in Subsection~\ref{subsec:results}, with long-, medium-, and short-term performances being reported in Tables~\ref{tab:long_res}, \ref{tab:medium_res}, and \ref{tab:short_res}, respectively. Furthermore, the results of the Diebold-Mariano tests, evaluating the statistical significance of performance improvements for fESN, are given in Tables~\ref{tab:dm_hmedium_fesn} and \ref{tab:dm_hshort_fesn}, while for wESN the results are discussed in Tables~\ref{tab:dm_hmedium_wesn} and \ref{tab:dm_hshort_wesn}, respectively. Additionally, 
Subsection~\ref{subsec:wesn_extended} conducts an ablation study on the choice of the wavelet filters for the wESN framework for different forecast horizons. Finally, uncertainty quantification via the conformal prediction approach with calibration and coverage is visualized in Figures~\ref{fig:cal_long}, \ref{fig:cal_medium}, and \ref{fig:cal_short} of Subsection~\ref{subsec:conformal}.

% This appendix contains the remaining experimental details and results. We first describe the evaluation metrics in Subsection~\ref{subsec:metric}. Subsection~\ref{subsec:hyper} then details the hyperparameter selection procedure, with the search spaces summarized in Table~\ref{tab:search_space_all}. The main results appear in Subsection~\ref{subsec:results}, with long-, medium-, and short-horizon outcomes reported in Tables~\ref{tab:long_res}, \ref{tab:medium_res}, and \ref{tab:short_res}. To assess statistical significance, we conduct Diebold-Mariano tests: results for fESN are given in Tables~\ref{tab:dm_hmedium_fesn} and \ref{tab:dm_hshort_fesn}, and for wESN in Tables~\ref{tab:dm_hmedium_wesn} and \ref{tab:dm_hshort_wesn}. Predictive uncertainty is evaluated via conformal prediction, with calibration and coverage visualized in Figures~\ref{fig:cal_long}, \ref{fig:cal_medium}, and \ref{fig:cal_short} in Subsection~\ref{subsec:conformal}. Finally, Subsection~\ref{subsec:wesn_extended} reports an extended analysis of wESN using different wavelet families, with performance summarized in Tables~\ref{tab:long_wesn}, \ref{tab:medium_wesn}, and \ref{tab:short_wesn} for the long, medium, and short horizons.

\subsection{Performance Metrics} \label{subsec:metric}
In this study, we consider four key performance indicators, namely RMSE, MAE, sMAPE, and MASE. The mathematical formulation of these evaluation metrics is discussed below:

% To quantify and compare the accuracy of our epidemic forecasts, we employ four complementary error measures: the root-mean-square error (RMSE), mean absolute error (MAE), symmetric mean absolute percentage error (sMAPE) and mean absolute scaled error (MASE).  Together they capture both the absolute magnitude of residuals and their relative behavior against naïve benchmarks. The root-mean-square error over a test set of size $T$ is defined by
$$
  \mathrm{RMSE}
  \;=\;
  \sqrt{\frac{1}{H}\sum_{t=1}^H\bigl(u(t) - \hat u(t)\bigr)^{2}}, \; \; \; \mathrm{MAE}
  \;=\;
  \frac{1}{H}\sum_{t=1}^H\bigl\lvert u(t) - \hat u(t)\bigr\rvert,
$$

$$
  \mathrm{sMAPE} \;=\;
  \frac{100\%}{H}\sum_{t=1}^H
    \frac{\bigl\lvert u(t) - \hat u(t)\bigr\rvert}
         {\bigl(\,\lvert u(t)\rvert + \lvert \hat u(t)\rvert\,\bigr)/2}, \; \text{ and } \; 
 \mathrm{MASE} \;=\;
  \frac{\displaystyle \sum_{t=1}^H\lvert u(t) - \hat u(t)\rvert}
       {\displaystyle \tfrac{H}{T-1}\sum_{t=2}^T\lvert u(t) - u({t-1})\rvert},
$$
where $\hat u(t)$ denotes the forecast at time $t$ corresponding to the ground truth observation $u(t)$, $H$ is the forecast horizon, and $T$ is the number of training samples. By definition, lower values of these metrics indicate better model performance.  

\subsection{Bayesian Optimization for Hyperparameter Tuning} \label{subsec:hyper}

Bayesian optimization (BO) is a state‐of‐the‐art approach for selecting hyperparameters by building a probabilistic surrogate model of the objective function, i.e., the held-out validation performance \citep{bergstra2012random}. Unlike exhaustive grid or random search, BO reuses information from all past evaluations to guide its next query and thus avoids redundant assessments of poorly performing configurations. At each iteration, BO updates its surrogate model, applies an acquisition function to balance exploration and exploitation, and proposes the next hyperparameter set to evaluate. The BO procedure maintains a surrogate model of the validation error (RMSE), \(f_o(\mathbf{v})\) using the Tree‐structured Parzen Estimator (TPE) \citep{bergstra2011algorithms}, which approximates the conditional density \(P(f_o \mid \mathbf{v})\). Subsequently, we define the search space \(\mathcal{H}_{\mathbf{v}}\) to encompass all tunable hyperparameters of our model and employ the acquisition function of expected improvement (EI),
$$
  \mathrm{EI}(\mathbf{v}) \;=\; \mathbb{E}\bigl[\max\{0, f_o(\mathbf{v}) - f_o^{\star}\}\bigr],
$$
where \(f_o^{\star}\) denotes the best observed validation error to date. EI thus guides the trade‐off between exploring unexplored regions of \(\mathcal{H}_{\mathbf{v}}\) and exploiting regions known to yield high accuracy. Thus the our objective function \(f_o\) is precisely the predictive accuracy measured on a held‐out validation set. Finally, we maintain an evaluation history \(\mathcal{D}\),  all past pairs \(\{(\mathbf{v}_i, f_o(\mathbf{v}_i))\}\), which informs both surrogate updates and the maximization of the acquisition function at each iteration. The overall TPE-based BO loop is summarized in Algorithm~\ref{alg:tpe}.

\begin{algorithm}[t]
\caption{TPE Method for Hyperparameter Tuning}
\label{alg:tpe}
\KwInput{objective \(f_o\), TPE model \(\mathcal{M}\), search space \(\mathcal{H}_{\mathbf{v}}\),
acquisition function \(\mathcal{A}\), budget \(N\).}

Initialize history \(\mathcal{D} \gets \emptyset\)\;
\For{\(i = 1,\ldots,N\)}{
  Fit surrogate \(P(f_o \mid \mathbf{v})\) to \(\mathcal{D}\) using \(\mathcal{M}\)\;
  \(\mathbf{v}_i \gets \displaystyle \arg\min_{\mathbf{v}\in\mathcal{H}_{\mathbf{v}}} \mathcal{A}(\mathbf{v})\)\;
  \(y_i \gets f_o(\mathbf{v}_i)\)\;
  \(\mathcal{D} \gets \mathcal{D} \cup \{(\mathbf{v}_i, y_i)\}\)\;
}
\(\mathbf{v}^\star \gets \displaystyle \arg\min_{(\mathbf{v},y)\in\mathcal{D}} y\)\;
\textbf{return} \(\mathbf{v}^\star\)\;
\end{algorithm}

\noindent\textbf{Hyperparameter search spaces.} All models were tuned with TPE‐based Bayesian optimization over model‐specific hyperparameter domains. Rather than partitioning by architecture family, we consolidated every model’s grid into a single reference (Table~\ref{tab:search_space_all}), which lists each model alongside its tunable parameters and their candidate ranges. The best hyperparametes are selected based upon the lowest RMSE on validation set.
\begin{table}[ht]
\centering
\caption{Hyperparameter search spaces and their descriptions for all models}
\label{tab:search_space_all}
\begin{adjustbox}{max width=\textwidth} % Use max width=\textwidth to be more responsive
\begin{tabular}{@{}llp{5cm}p{4.5cm}@{}}
\toprule
\textbf{Model} & \textbf{Hyperparameter} & \textbf{Description} & \textbf{Values} \\
\midrule

\multirow{1}{*}{ARFIMA} 
  & $p \text{ and } q$        & Order of the AR and MA  & $\{1,2,\ldots 5\}$ \\
\midrule

\multirow{2}{*}{Recurrent Networks} 
  & h   & Hidden layer dimension & $\{4,8,16,32,64,128\}$ \\
  & lr & Step size for the optimizer & $\{0.001,0.01,0.1\}$    \\
\midrule

\multirow{2}{*}{Transformer} 
  & $d_{\text{model}}$    & Embedding size & $\{16,32,64\}$         \\
  & $n_{\text{head}}$     & Number of attention head & $\{2,4\}$              \\
\midrule

\multirow{3}{*}{N-Beats} 
  & $n_{block}$    & Number of blocks per stack & $\{3,5,7\}$           \\ 
  & $n_{layers}$   & Number of layers within each block & $\{3,5,7\}$           \\ 
  & layer widths  & Width of each hidden layer in a block & $\{4,8,16\}$          \\
\midrule

\multirow{3}{*}{TFT} 
  & lstm\_layers          & Number of layers  & $\{1,2,3\}$           \\ 
  & num\_attention\_heads & Number of attention heads in the model & $\{1,2,3\}$           \\ 
  & h (hidden\_size)      & Dimensionality of the hidden state & $\{32,64,128\}$       \\
\midrule
\multirow{6}{*}{ESN family (Reservoir)} 
  & $\rho_{\mathrm{x}} / \rho_{\mathrm{m}}$& Spectral radius of the reservoir & $[0.5, \dots, 0.9]$ \\
  & $\varphi_x / \varphi_m $                  & Sparsity of  reservoir & $(0.5,0.95)$    \\
  & $\zeta$  & Washout ratio & $(0.0,0.5)$ \\
  & $\psi_x / \psi_m$   & Scaling factor for the input weights & $(0,1)$     \\
  & $\sigma_x / \sigma_m $   & Isometric noise for reservoir & $(0,0.5)$ \\
\midrule
\multirow{2}{*}{Memory-augmented models} 
  & $d$            & Differencing Parameter & $(0,0.5)$      \\ 
  & $K$            & Truncation of Fractional Integration & $\{2,3,\dots,200\}$ \\
\bottomrule
\end{tabular}
\end{adjustbox}
\end{table}

\subsection{Experimental Results} \label{subsec:results}

The forecasting performance of the proposed models and baseline architectures across all dengue datasets is summarized in Tables~\ref{tab:long_res}, \ref{tab:medium_res}, and \ref{tab:short_res}, corresponding to long-, medium-, and short-term horizons, respectively. Each table reports the mean and standard deviation of all four evaluation metrics, along with the average rank of each model computed across datasets. As evident from the results, the fESN and wESN frameworks consistently achieve the lowest average ranks across horizons, demonstrating their ability to capture the underlying temporal structure of the dengue incidence series. We further observe that statistical approaches such as ARFIMA and ETS perform competitively on several datasets; in particular, ARFIMA’s strong performance aligns with its capacity to model long-range dependence and its robustness in limited-data scenarios.

% Results for the long, medium, and short horizons appear in Tables~\ref{tab:long_res}, \ref{tab:medium_res}, and \ref{tab:short_res}, respectively. In each table, the bottom block reports the average rank across all datasets for each metric. Across all three horizons, fESN or wESN attains the top average rank. We also note that simpler baselines such as ARFIMA and ETS perform well on several datasets; the competitiveness of ARFIMA is consistent with its ability to capture long memory and its effectiveness in limited data settings.

\subsubsection{Performance on very small sample size dataset} \label{para:results:para_limited_data}

The Bangkok dengue dataset consists of monthly incidence counts recorded from 2003 to 2017. In our empirical setup, data from 2003-2014 are used for training, observations from 2015-2016 serve as the validation set for tuning model hyperparameters, and the long-term (12-month) forecasts for 2017 are evaluated against the ground truth. As shown in Figure~\ref{fig:bangkok_long}, the historical series exhibits an abrupt and unprecedented surge in dengue cases at the end of 2015, with the magnitude of this spike reaching nearly four times the previous peak observed around the same month in 2013. This extreme event directly affects the behavior of the washout ratio $\zeta$ in the ESN framework. The unusually large observations in 2015 substantially inflate the validation RMSE, making the validation window less representative of the broader historical dynamics. As a result, the hyperparameter search favors a large washout ratio ($\zeta \approx 0.46$), which discards a substantial portion of the early low-incidence years (2003-2007). Consequently, the output weights $W_{\text{out}}$ are estimated using a reduced historical observations, thereby limiting generalization. In contrast, the Philippines dataset, despite having a similar number of historical data, yields a much smaller washout ratio ($\zeta \approx 0.02$), retaining roughly 105 effective samples and producing better long-term forecasting performance. This comparison demonstrates that reservoir computing remains robust even when relatively few observations are available for estimating the output weights. Therefore, the weaker long-term performance of fESN for Bangkok is plausibly attributed to the extreme surge within the validation period, which distorts the hyperparameter tuning criterion, drives $\zeta$ upward, and significantly reduces the amount of usable historical data. This interpretation is further supported by the medium-term forecasting results, where the surge falls outside the validation window, the washout ratio remains stable, and fESN achieves its best overall performance.

\begin{figure}
    \centering
    \includegraphics[width=0.85\linewidth]{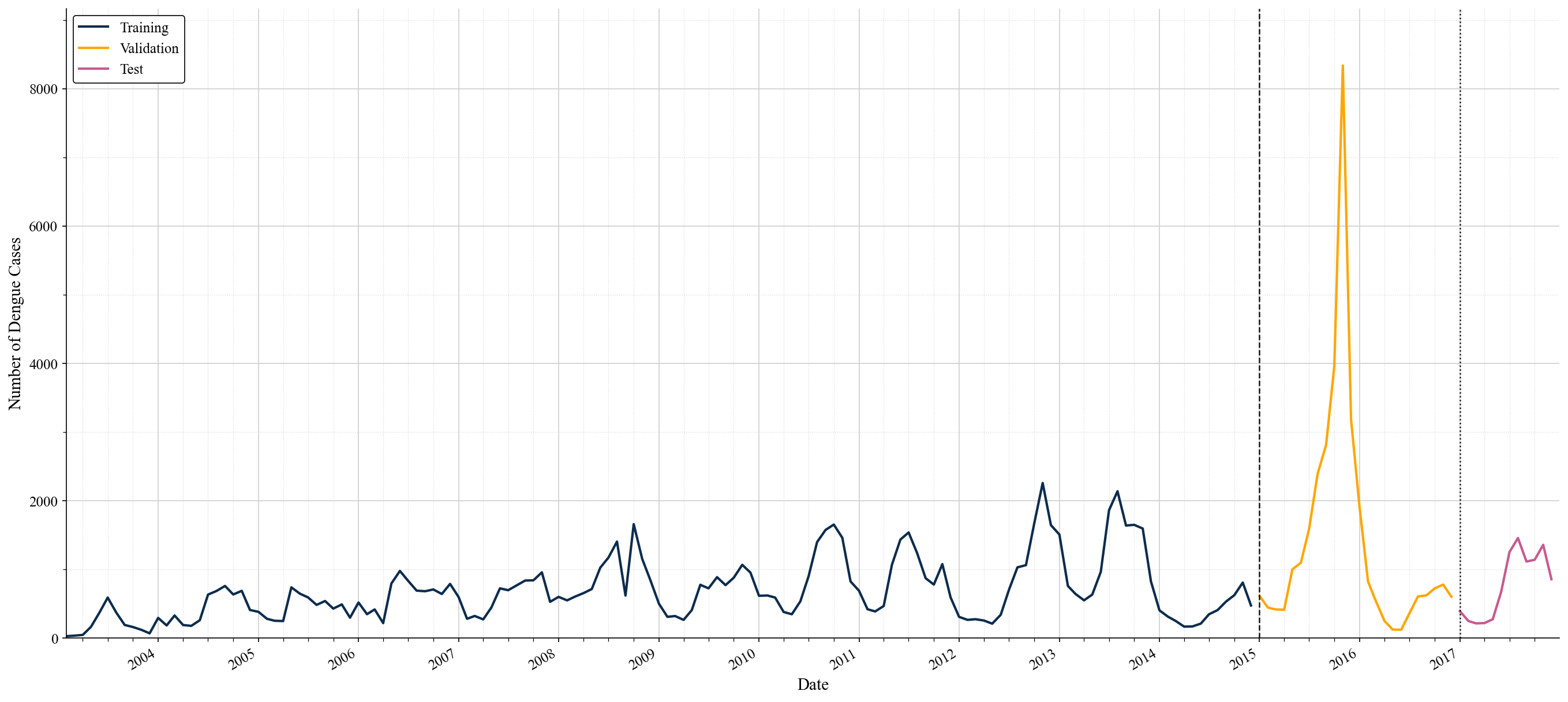}
    \caption{Train, validation, and test split of the Bangkok dengue incidence dataset.}
    \label{fig:bangkok_long}
\end{figure}

\begin{table}[ht]
\centering
\caption{Average long-term forecast performance along with standard deviations (SDs) of baseline models and memory-augmented ESN approaches for different dengue datasets. The best results are made \textbf{bold}.}
\label{tab:long_res}
\begin{adjustbox}{max width=\textwidth}
% [inline block 0: 540 envs, 32145 chars -> data_tex | \begin{tabular}{llccccccccccccccc} \toprule...]
 \\
\midrule
\multirow{4}{*}{\rotatebox[origin=c]{90}{Rank}} & MAE & 6.78 & 5.33 & 8.67 & 8.22 & 9.67 & 9.44 & 8.11 & 9.00 & 8.44 & 12.22 & 7.11 & 8.33 & 11.67 & 3.67 & \textbf{3.33} \\
& MASE & 6.78 & 5.33 & 8.67 & 8.22 & 9.67 & 9.44 & 8.11 & 9.00 & 8.44 & 12.22 & 7.11 & 8.33 & 11.67 & 3.67 & \textbf{3.33} \\
& RMSE & 7.00 & 5.67 & 7.33 & 8.33 & 10.00 & 9.11 & 7.56 & 8.22 & 9.11 & 12.33 & 7.89 & 8.44 & 12.11 & \textbf{3.11} & 3.78 \\
& sMAPE & 6.89 & 4.78 & 9.22 & 8.89 & 8.89 & 9.67 & 8.33 & 8.22 & 8.44 & 11.56 & 6.22 & 9.11 & 11.44 & \textbf{4.00} & 4.33 \\
\midrule
\bottomrule
\end{tabular}
\end{adjustbox}
\end{table}

\begin{table}[ht]
\centering
\caption{Average medium-term forecast performance along with standard deviations (SDs) of baseline models and memory-augmented ESN approaches for different dengue datasets. The best results are made \textbf{bold}.}
\label{tab:medium_res}
\begin{adjustbox}{max width=\textwidth}
% [inline block 1: 540 envs, 32161 chars -> data_tex | \begin{tabular}{llccccccccccccccc} \toprule...]
} \\
\midrule
\multirow{4}{*}{\rotatebox[origin=c]{90}{Rank}} & MAE & 5.44 & 4.67 & 9.78 & 8.89 & 8.44 & 9.33 & 9.00 & 10.22 & 8.22 & 11.56 & 7.89 & 9.89 & 10.78 & 3.56 & \textbf{2.33} \\
& MASE & 5.44 & 4.67 & 9.78 & 8.89 & 8.44 & 9.33 & 9.00 & 10.22 & 8.22 & 11.56 & 7.89 & 9.89 & 10.78 & 3.56 & \textbf{2.33} \\
& RMSE & 5.33 & 5.00 & 9.78 & 8.67 & 8.56 & 9.22 & 9.22 & 10.22 & 8.00 & 12.22 & 8.00 & 9.56 & 10.78 & 3.44 & \textbf{2.00} \\
& sMAPE & 5.22 & 4.56 & 9.89 & 9.00 & 8.11 & 10.22 & 9.56 & 9.44 & 7.67 & 11.67 & 8.00 & 10.00 & 10.44 & 4.00 & \textbf{2.22} \\
\midrule
\bottomrule
\end{tabular}
\end{adjustbox}
\end{table}

\begin{table}[ht]
\centering
\caption{Average short-term forecast performance along with standard deviations (SDs) of baseline models and memory-augmented ESN approaches for different dengue datasets. The best results are made \textbf{bold}.}
\label{tab:short_res}
\begin{adjustbox}{max width=\textwidth}
% [inline block 2: 540 envs, 32131 chars -> data_tex | \begin{tabular}{llccccccccccccccc} \toprule...]
 \\
\midrule
\multirow{4}{*}{\rotatebox[origin=c]{90}{Rank}} & MAE & 7.78 & 5.22 & 11.00 & 10.33 & 8.89 & 10.00 & 10.78 & 9.67 & 10.11 & 9.44 & 4.89 & 7.33 & 9.56 & 2.78 & \textbf{2.22} \\
& MASE & 7.78 & 5.22 & 11.00 & 10.33 & 8.89 & 10.00 & 10.78 & 9.67 & 10.11 & 9.44 & 4.89 & 7.33 & 9.56 & 2.78 & \textbf{2.22} \\
& RMSE & 7.89 & 4.89 & 10.89 & 10.44 & 8.78 & 10.67 & 10.78 & 9.22 & 10.33 & 10.00 & 4.67 & 7.56 & 9.00 & 2.78 & \textbf{2.11} \\
& sMAPE & 7.56 & 4.78 & 10.67 & 9.89 & 8.89 & 10.11 & 9.78 & 9.00 & 11.22 & 9.67 & 4.78 & 8.11 & 9.78 & \textbf{2.89} & \textbf{2.89} \\
\midrule
\bottomrule
\end{tabular}
\end{adjustbox}
\end{table}

% ==============================================================================
% HORIZON-WISE Diebold-Mariano test results (h=medium) for fESN.
% Saved to dm_test_tables_horizon_hmedium_fesn.tex
% Required LaTeX packages: \usepackage{booktabs}, \usepackage{multirow}, \usepackage{adjustbox}, \usepackage{amsmath}
% ==============================================================================

\begin{table}[ht!]
  \centering
  \caption{Diebold-Mariano p-values comparing \textbf{fESN} against competitors across datasets for medium horizon. Significance levels: \textbf{p<0.05}, \underline{p<0.10}.}
  \label{tab:dm_hmedium_fesn}
  \begin{adjustbox}{width=\textwidth,center}
  \begin{tabular}{@{\extracolsep{5pt}}l|ccccccccccccc}
  \toprule
  Dataset & ETS & ARFIMA & RNN & MRNN & LSTM & MLSTM & GRU & MGRU & wTransformer & TFT & TSMixer & N-BEATS & ESN \\
  \midrule
Ahmedabad & \textbf{\textless 0.001} & \textbf{\textless 0.012} & \textbf{\textless 0.016} & \textbf{\textless 0.028} & \textbf{0.003} & \underline{0.058} & \textbf{\textless 0.049} & \textbf{\textless 0.021} & \textbf{\textless 0.001} & \textbf{\textless 0.001} & \textbf{\textless 0.037} & \textbf{\textless 0.012} & 0.121 \\
Bangkok & \textbf{\textless 0.001} & \textbf{\textless 0.012} & \textbf{\textless 0.001} & \textbf{\textless 0.001} & \textbf{\textless 0.001} & \textbf{\textless 0.001} & \textbf{\textless 0.001} & \textbf{\textless 0.001} & \textbf{\textless 0.001} & \textbf{\textless 0.001} & \textbf{\textless 0.001} & \textbf{0.002} & 0.108 \\
Columbia & \textbf{\textless 0.001} & \textbf{\textless 0.001} & \textbf{\textless 0.001} & \textbf{\textless 0.001} & \textbf{\textless 0.001} & \textbf{\textless 0.001} & \textbf{\textless 0.001} & \textbf{\textless 0.001} & \textbf{\textless 0.001} & \textbf{\textless 0.001} & \textbf{\textless 0.001} & \textbf{\textless 0.001} & \textbf{\textless 0.001} \\
Hong Kong & 0.172 & 0.495 & \textbf{\textless 0.011} & 0.125 & 0.148 & 0.124 & \textbf{0.004} & 0.106 & \textbf{\textless 0.039} & \textbf{0.002} & 0.230 & 0.276 & \textbf{0.007} \\
Iquitos & \textbf{\textless 0.001} & \textbf{0.001} & \textbf{\textless 0.001} & \textbf{\textless 0.001} & \textbf{\textless 0.001} & \textbf{\textless 0.001} & \textbf{\textless 0.001} & \textbf{\textless 0.001} & 0.216 & \textbf{0.008} & 0.410 & \textbf{\textless 0.001} & \textbf{\textless 0.001} \\
Phillipines & 0.874 & 0.657 & \textbf{0.004} & \textbf{\textless 0.011} & \textbf{\textless 0.017} & \textbf{0.004} & \textbf{0.002} & \textbf{\textless 0.019} & \textbf{\textless 0.001} & 0.209 & \textbf{\textless 0.043} & \textbf{\textless 0.001} & 0.114 \\
Sanjuan & 0.878 & 0.236 & \textbf{\textless 0.001} & \textbf{\textless 0.001} & \textbf{\textless 0.001} & \textbf{\textless 0.001} & \textbf{\textless 0.001} & \textbf{\textless 0.001} & \textbf{\textless 0.001} & \textbf{\textless 0.001} & \textbf{\textless 0.001} & \textbf{\textless 0.001} & \textbf{\textless 0.001} \\
Singapore & \textbf{\textless 0.001} & 0.241 & 0.102 & \textbf{\textless 0.049} & \textbf{\textless 0.001} & \textbf{\textless 0.001} & \textbf{\textless 0.001} & 0.141 & 0.314 & \textbf{\textless 0.001} & \textbf{\textless 0.001} & 0.146 & \textbf{\textless 0.001} \\
Venezuela & 0.431 & 0.333 & \textbf{\textless 0.001} & \textbf{\textless 0.001} & \textbf{\textless 0.001} & \textbf{\textless 0.001} & \textbf{\textless 0.001} & \textbf{\textless 0.001} & 0.402 & \textbf{\textless 0.001} & \textbf{\textless 0.001} & \textbf{\textless 0.001} & \textbf{\textless 0.001} \\
  \bottomrule
  \end{tabular}
  \end{adjustbox}
\end{table}

% ==============================================================================
% HORIZON-WISE Diebold-Mariano test results (h=medium) for wESN.
% Saved to dm_test_tables_horizon_hmedium_wesn.tex
% Required LaTeX packages: \usepackage{booktabs}, \usepackage{multirow}, \usepackage{adjustbox}, \usepackage{amsmath}
% ==============================================================================

\begin{table}[ht!]
  \centering
  \caption{Diebold-Mariano p-values comparing \textbf{wESN} against competitors across datasets for medium horizon. Significance levels: \textbf{p<0.05}, \underline{p<0.10}.}
  \label{tab:dm_hmedium_wesn}
  \begin{adjustbox}{width=\textwidth,center}
  \begin{tabular}{@{\extracolsep{5pt}}l|ccccccccccccc}
  \toprule
  Dataset & ETS & ARFIMA & RNN & MRNN & LSTM & MLSTM & GRU & MGRU & wTransformer & TFT & TSMixer & N-BEATS & ESN \\
  \midrule
Ahmedabad & \textbf{\textless 0.001} & \textbf{\textless 0.015} & \underline{0.095} & 0.148 & \textbf{\textless 0.019} & 0.402 & 0.255 & 0.125 & \textbf{\textless 0.001} & \textbf{0.003} & 0.210 & \underline{0.063} & 0.118 \\
Bangkok & \textbf{\textless 0.001} & \underline{0.055} & \textbf{\textless 0.001} & \textbf{\textless 0.001} & \textbf{\textless 0.001} & \textbf{\textless 0.001} & \textbf{\textless 0.001} & \textbf{\textless 0.001} & \textbf{0.002} & \textbf{\textless 0.001} & \textbf{\textless 0.001} & \textbf{0.002} & \underline{0.059} \\
Columbia & \textbf{\textless 0.001} & \textbf{\textless 0.001} & \textbf{\textless 0.001} & \textbf{\textless 0.001} & \textbf{\textless 0.001} & \textbf{\textless 0.001} & \textbf{\textless 0.001} & \textbf{\textless 0.001} & \textbf{\textless 0.001} & \textbf{\textless 0.001} & \textbf{\textless 0.001} & \textbf{\textless 0.001} & \textbf{\textless 0.001} \\
Hong Kong & 0.144 & \underline{0.097} & 0.289 & \underline{0.098} & 0.873 & 0.528 & 0.130 & 0.339 & 0.799 & \underline{0.054} & 0.254 & 0.370 & 0.794 \\
Iquitos & \textbf{\textless 0.001} & \textbf{0.003} & \textbf{\textless 0.001} & \textbf{\textless 0.001} & \textbf{\textless 0.001} & \textbf{\textless 0.001} & \textbf{\textless 0.001} & \textbf{\textless 0.001} & \textbf{\textless 0.001} & \textbf{\textless 0.001} & \textbf{\textless 0.001} & \textbf{\textless 0.001} & \textbf{\textless 0.001} \\
Phillipines & 0.844 & 0.515 & \textbf{0.005} & \textbf{\textless 0.013} & \textbf{\textless 0.001} & \textbf{0.004} & \underline{0.016} & \textbf{\textless 0.027} & \textbf{\textless 0.001} & 0.177 & \underline{0.056} & \textbf{\textless 0.001} & 0.499 \\
Sanjuan & 0.786 & \textbf{\textless 0.025} & \textbf{\textless 0.001} & \textbf{\textless 0.001} & \textbf{\textless 0.001} & \textbf{\textless 0.001} & \textbf{\textless 0.001} & \textbf{\textless 0.001} & \textbf{\textless 0.001} & \textbf{\textless 0.001} & \textbf{\textless 0.001} & \textbf{\textless 0.001} & \textbf{\textless 0.001} \\
Singapore & \textbf{\textless 0.001} & \textbf{\textless 0.023} & \textbf{\textless 0.020} & 0.289 & \textbf{\textless 0.001} & \textbf{\textless 0.001} & \textbf{\textless 0.001} & \textbf{\textless 0.011} & 0.312 & \textbf{\textless 0.001} & \textbf{\textless 0.001} & \textbf{\textless 0.034} & \textbf{\textless 0.001} \\
Venezuela & 0.317 & 0.932 & \textbf{\textless 0.001} & \textbf{\textless 0.001} & \textbf{\textless 0.071} & \textbf{\textless 0.001} & \textbf{\textless 0.001} & \textbf{\textless 0.001} & 0.817 & \textbf{\textless 0.001} & \textbf{\textless 0.001} & \textbf{\textless 0.001} & \underline{0.051} \\
  \bottomrule
  \end{tabular}
  \end{adjustbox}
\end{table}
% ==============================================================================
% HORIZON-WISE Diebold-Mariano test results (h=short) for fESN.
% Saved to dm_test_tables_horizon_hshort_fesn.tex
% Required LaTeX packages: \usepackage{booktabs}, \usepackage{multirow}, \usepackage{adjustbox}, \usepackage{amsmath}
% ==============================================================================

\begin{table}[ht!]
  \centering
  \caption{Diebold-Mariano p-values comparing \textbf{fESN} against competitors across datasets for short horizon. Significance levels: \textbf{p<0.05}, \underline{p<0.10}.}
  \label{tab:dm_hshort_fesn}
  \begin{adjustbox}{width=\textwidth,center}
  \begin{tabular}{@{\extracolsep{5pt}}l|ccccccccccccc}
  \toprule
  Dataset & ETS & ARFIMA & RNN & MRNN & LSTM & MLSTM & GRU & MGRU & wTransformer & TFT & TSMixer & N-BEATS & ESN \\
  \midrule
Ahmedabad & \textbf{\textless 0.001} & \textbf{\textless 0.010} & \textbf{\textless 0.017} & \underline{0.059} & 0.702 & 0.785 & 0.182 & 0.256 & \textbf{\textless 0.001} & \textbf{0.003} & 0.691 & 0.192 & \textbf{\textless 0.015} \\
Bangkok & 0.885 & 0.892 & 0.130 & 0.388 & 0.192 & 0.198 & 0.190 & 0.205 & 0.113 & \textbf{\textless 0.045} & 0.706 & \underline{0.066} & 0.588 \\
Columbia & \textbf{\textless 0.001} & \textbf{0.004} & \textbf{\textless 0.001} & \textbf{\textless 0.001} & \textbf{\textless 0.001} & \textbf{\textless 0.001} & \textbf{\textless 0.001} & \textbf{\textless 0.001} & \textbf{\textless 0.001} & \underline{0.073} & \textbf{\textless 0.001} & \textbf{\textless 0.001} & \textbf{\textless 0.001} \\
Hong Kong & 0.510 & 0.238 & \textbf{\textless 0.022} & 0.182 & \textbf{\textless 0.001} & \textbf{\textless 0.001} & 0.210 & \textbf{\textless 0.001} & 0.352 & 0.170 & 0.612 & 0.139 & 0.276 \\
Iquitos & \textbf{\textless 0.001} & 0.234 & \textbf{\textless 0.001} & \textbf{\textless 0.001} & \textbf{0.003} & \textbf{\textless 0.001} & \textbf{0.003} & \textbf{0.006} & 0.416 & 0.199 & 0.986 & \underline{0.053} & \underline{0.082} \\
Phillipines & 0.612 & 0.689 & 0.487 & \underline{0.089} & 0.503 & 0.582 & 0.406 & 0.320 & 0.129 & 0.319 & 0.393 & 0.383 & 0.740 \\
Sanjuan & \textbf{\textless 0.001} & \textbf{\textless 0.001} & \textbf{\textless 0.001} & \textbf{\textless 0.001} & \textbf{\textless 0.001} & \textbf{\textless 0.001} & \textbf{\textless 0.001} & \textbf{\textless 0.001} & \textbf{\textless 0.001} & \textbf{\textless 0.047} & 0.439 & \textbf{\textless 0.001} & \textbf{\textless 0.001} \\
Singapore & \textbf{0.001} & \underline{0.068} & \textbf{\textless 0.001} & \textbf{0.005} & \textbf{\textless 0.001} & \textbf{0.006} & \textbf{\textless 0.049} & \textbf{\textless 0.001} & \textbf{\textless 0.001} & 0.635 & \textbf{0.004} & \underline{0.074} & \textbf{\textless 0.001} \\
Venezuela & \textbf{0.001} & \textbf{\textless 0.025} & \textbf{\textless 0.001} & \textbf{\textless 0.015} & \textbf{\textless 0.001} & 0.286 & \textbf{\textless 0.001} & 0.445 & \textbf{\textless 0.001} & 0.110 & \textbf{\textless 0.001} & \textbf{\textless 0.001} & \textbf{\textless 0.001} \\
  \bottomrule
  \end{tabular}
  \end{adjustbox}
\end{table}

% ==============================================================================
% HORIZON-WISE Diebold-Mariano test results (h=short) for wESN.
% Saved to dm_test_tables_horizon_hshort_wesn.tex
% Required LaTeX packages: \usepackage{booktabs}, \usepackage{multirow}, \usepackage{adjustbox}, \usepackage{amsmath}
% ==============================================================================

\begin{table}[ht!]
  \centering
  \caption{Diebold-Mariano p-values comparing \textbf{wESN} against competitors across datasets for short horizon. Significance levels: \textbf{p<0.05}, \underline{p<0.10}.}
  \label{tab:dm_hshort_wesn}
  \begin{adjustbox}{width=\textwidth,center}
  \begin{tabular}{@{\extracolsep{5pt}}l|ccccccccccccc}
  \toprule
  Dataset & ETS & ARFIMA & RNN & MRNN & LSTM & MLSTM & GRU & MGRU & wTransformer & TFT & TSMixer & N-BEATS & ESN \\
  \midrule
Ahmedabad & \textbf{\textless 0.001} & \textbf{0.007} & 0.150 & 0.442 & 0.970 & 0.989 & 0.489 & 0.807 & \textbf{\textless 0.001} & \textbf{\textless 0.032} & 0.787 & 0.639 & 0.860 \\
Bangkok & 0.882 & 0.436 & 0.232 & 0.314 & 0.269 & 0.320 & 0.280 & 0.188 & 0.127 & \textbf{\textless 0.046} & 0.768 & 0.114 & 0.669 \\
Columbia & \textbf{\textless 0.001} & \textbf{\textless 0.022} & \textbf{\textless 0.001} & \textbf{\textless 0.001} & \textbf{\textless 0.001} & \textbf{\textless 0.001} & \textbf{\textless 0.001} & \textbf{\textless 0.001} & \textbf{\textless 0.001} & \underline{0.095} & \textbf{0.005} & \textbf{0.001} & \textbf{\textless 0.001} \\
Hong Kong & 0.498 & \underline{0.068} & \underline{0.090} & 0.239 & \textbf{\textless 0.001} & \textbf{\textless 0.001} & 0.141 & \textbf{\textless 0.001} & 0.315 & 0.193 & 0.789 & 0.175 & 0.992 \\
Iquitos & \textbf{\textless 0.001} & 0.432 & \textbf{0.003} & \textbf{\textless 0.001} & \textbf{\textless 0.040} & \textbf{\textless 0.001} & \textbf{\textless 0.015} & \textbf{\textless 0.026} & 0.666 & 0.406 & 0.319 & 0.172 & \underline{0.072} \\
Phillipines & 0.691 & \textbf{\textless 0.047} & 0.535 & 0.108 & 0.556 & 0.635 & 0.470 & 0.338 & 0.334 & 0.380 & 0.545 & 0.459 & 0.552 \\
Sanjuan & \textbf{\textless 0.001} & \textbf{\textless 0.001} & \textbf{\textless 0.001} & \textbf{\textless 0.001} & \textbf{\textless 0.001} & \textbf{\textless 0.001} & \textbf{\textless 0.001} & \textbf{\textless 0.001} & \textbf{\textless 0.001} & \underline{0.061} & 0.983 & \textbf{\textless 0.001} & \textbf{\textless 0.001} \\
Singapore & \textbf{\textless 0.001} & \textbf{\textless 0.011} & \textbf{\textless 0.001} & \textbf{\textless 0.001} & \textbf{\textless 0.001} & \textbf{\textless 0.001} & \textbf{\textless 0.001} & \textbf{\textless 0.001} & \textbf{\textless 0.001} & \underline{0.092} & \textbf{\textless 0.001} & \textbf{\textless 0.001} & \textbf{\textless 0.001} \\
Venezuela & \textbf{\textless 0.001} & \textbf{\textless 0.037} & \textbf{\textless 0.001} & \textbf{\textless 0.001} & \textbf{0.007} & \textbf{0.006} & \textbf{\textless 0.001} & \textbf{0.008} & \textbf{\textless 0.001} & \textbf{0.001} & \textbf{\textless 0.001} & \textbf{\textless 0.001} & \textbf{\textless 0.001} \\
  \bottomrule
  \end{tabular}
  \end{adjustbox}
\end{table}

\subsection{Ablation Study} \label{subsec:wesn_extended}

In this section, we present the results of various wavelets applied to wESN to examine its robustness to the choice of wavelet filter. We utilize Biorthogonal (Bior), Symlets (Sym), and Daubechies filters of order 1 (Haar) and order 2 (Db2). All wavelet families are applied at three different decomposition levels. The forecasting results reported in Tables~\ref{tab:long_wesn}, \ref{tab:medium_wesn}, and \ref{tab:short_wesn} for long, medium, and short-term horizons indicate that wESN achieves consistent performance across all filter-level combinations. To verify whether any specific configuration yields a statistically significant performance improvement, we apply the nonparametric Friedman test \citep{friedman1937use}. Based on the p-values (RMSE: 0.3718, MAE: 0.1552, sMAPE: 0.2363, MASE: 0.1552), we fail to reject the null hypothesis of identical model performance. Therefore, the WESN model is robust to the choice of wavelet transform. 

% In this section, we present the results of various wavelets applied to wESN. We have selected the Bior, Db2, Haar, and Sym wavelets, each evaluated at three levels. The results are detailed in Tables~\ref{tab:long_wesn}, ~\ref{tab:medium_wesn}, and ~\ref{tab:short_wesn} for Long, Medium, and Short Horizons, respectively. 

% To assess whether any combination of wavelet and level showed significant differences, we conducted a Friedman test. However, we did not find any wavelet that demonstrated a significant advantage; the performance across all four metrics was identical. Specifically, the p-values for the performance metrics were as follows: RMSE 0.3718, MAE 0.1552, sMAPE 0.2363, and MASE 0.1552. Therefore, we fail to reject the null hypothesis, concluding that the performances are identical.
\begin{table}[htbp]
\centering
\caption{Average long-term forecasting performance (with standard deviation) of wESN across different wavelet filters and decomposition levels. For example, wESN-Bior-1 denotes the use of a Biorthogonal filter with one decomposition level. Best results are highlighted in \textbf{bold}.}
\label{tab:long_wesn}
\begin{adjustbox}{max width=\textwidth}
% [inline block 3: 1296 envs, 78120 chars in 3 pieces, piece 1 here, a bare % at each other -> data_tex | \begin{tabular}{llcccccccccccc} \toprule...]

\end{adjustbox}
\end{table}

\begin{table}[htbp]
\centering
\caption{Average medium-term forecasting performance (with standard deviation) of wESN across different wavelet filters and decomposition levels. For example, wESN-Bior-1 denotes the use of a Biorthogonal filter with one decomposition level. Best results are highlighted in \textbf{bold}.}
\label{tab:medium_wesn}
\begin{adjustbox}{max width=\textwidth}
%
\end{adjustbox}
\end{table}

\begin{table}[htbp]
\centering
\caption{Average short-term forecasting performance (with standard deviation) of wESN across different wavelet filters and decomposition levels. For example, wESN-Bior-1 denotes the use of a Biorthogonal filter with one decomposition level. Best results are highlighted in \textbf{bold}.}
\label{tab:short_wesn}
\begin{adjustbox}{width=\textwidth,center}
%
} \\
\bottomrule
\end{tabular}
\end{adjustbox}
\end{table}

\subsection{Conformal Prediction Intervals} \label{subsec:conformal}
To quantify the uncertainty associated with the proposed models, we utilize the conformal prediction approach. This distribution-free framework provides a set of model-agnostic techniques to convert point forecasts into prediction intervals that are guaranteed to contain the true value with a user-specified confidence. The construction of these conformal prediction intervals relies on a calibration (validation) set to ensure no data leakage. After training a forecasting model on the historical data, the model's residuals are computed on a separate validation window that immediately precedes the test period. These residuals, when scaled using the predictions of an uncertainty model, form conformal scores whose empirical distribution captures the model’s prediction errors. The prediction interval for each future time point is then obtained by selecting the appropriate quantile from this distribution, chosen to achieve the desired coverage level. In our empirical setup, %we partition our dataset into disjoint subsets $(\mathcal{D}{\mathrm{train}}, \mathcal{D}{\mathrm{val}}, \text{ and } \mathcal{D}_{\mathrm{test}})$. 
given the lagged training data $\tilde{u}{(t-1)}$, this framework fits the forecasting model (FM) and an uncertainty model (UM) to generate a scalar notion of uncertainty. The conformal score $S_t$ is then computed as:
$$S_t = \frac{|u(t) - \operatorname{FM}(\tilde{u}({t-1}))|}{\operatorname{UM}(\tilde{u}({t-1}))}.$$
Using the temporal dependencies of the time series, the conformal quantile can be calculated with a fixed $\kappa$-sized window $\gamma_{t^{\prime}}=\mathbf{1}\left\{t^{\prime} \geq t-\kappa\right), \forall t^{\prime}<t$ as
$$
\mathbb{Q}_t=\inf \left\{g: \frac{1}{\min \left(\kappa, t^{\prime}-1\right)+1} \sum_{t^{\prime}=1}^{t-1} S_{t^{\prime}} \gamma_{t^{\prime}} \geq 1-\alpha\right\}.
$$
These weighted conformal quantiles are used to compute the prediction interval at time $t$ as $$\mathbb{CQ}_t = \left[\operatorname{FM}\left(\tilde{u}({t-1})\right)\pm\mathbb{Q}_t \operatorname{UM}\left(\tilde{u}({t-1})\right)\right].$$ The conformal prediction bands with a 90\% target coverage, alongside the point forecast of ESN, fESN, and wESN models with the ground truth observations, are depicted in Figures \ref{fig:cal_long}, \ref{fig:cal_medium}, and \ref{fig:cal_short} for long, medium, and short-term forecast horizons, respectively. As evident from the plots, the proposed memory-augmented ESN frameworks can consistently capture the complex dynamics of the dengue incidence cases for most of the dataset.

Additionally, to comprehensively evaluate the performance of the conformal prediction intervals during the test period, we compute coverage, mean width, and the Winkler score. Coverage determines reliability by measuring the proportion of ground-truth observations that fall within the prediction intervals, while the mean width quantifies its precision. The proper scoring rule-based Winkler Score \citep{gneiting2007strictly} offers a trade-off measure quantifying both reliability and precision of these intervals. Tables \ref{tab:summary_medium} and \ref{tab:summary_short} summarize these metrics for medium and short-term forecast horizons. Overall, fESN achieves the lowest Winkler score for most datasets, indicating the best balance between coverage and mean width, except for the short-term Bangkok forecast. The medium-term horizons show a decline in calibration with increasing lead time. All three models fail to provide higher coverage to the San Juan dengue dataset due to its abrupt regime changes. This suggests that systematic forecast errors cannot be fully corrected by marginal conformalization. For the short-term forecast horizon, the fESN model provides a higher coverage with more precise intervals than for the medium-term horizon. These empirical results highlight that the proposed memory-augmented ESN models, particularly fESN, provide reliable and informative conformal prediction intervals, effectively balancing uncertainty quantification with point forecast accuracy.

\begin{figure}
    \centering
    
    \includegraphics[width=0.9\linewidth, height=0.9\textheight]{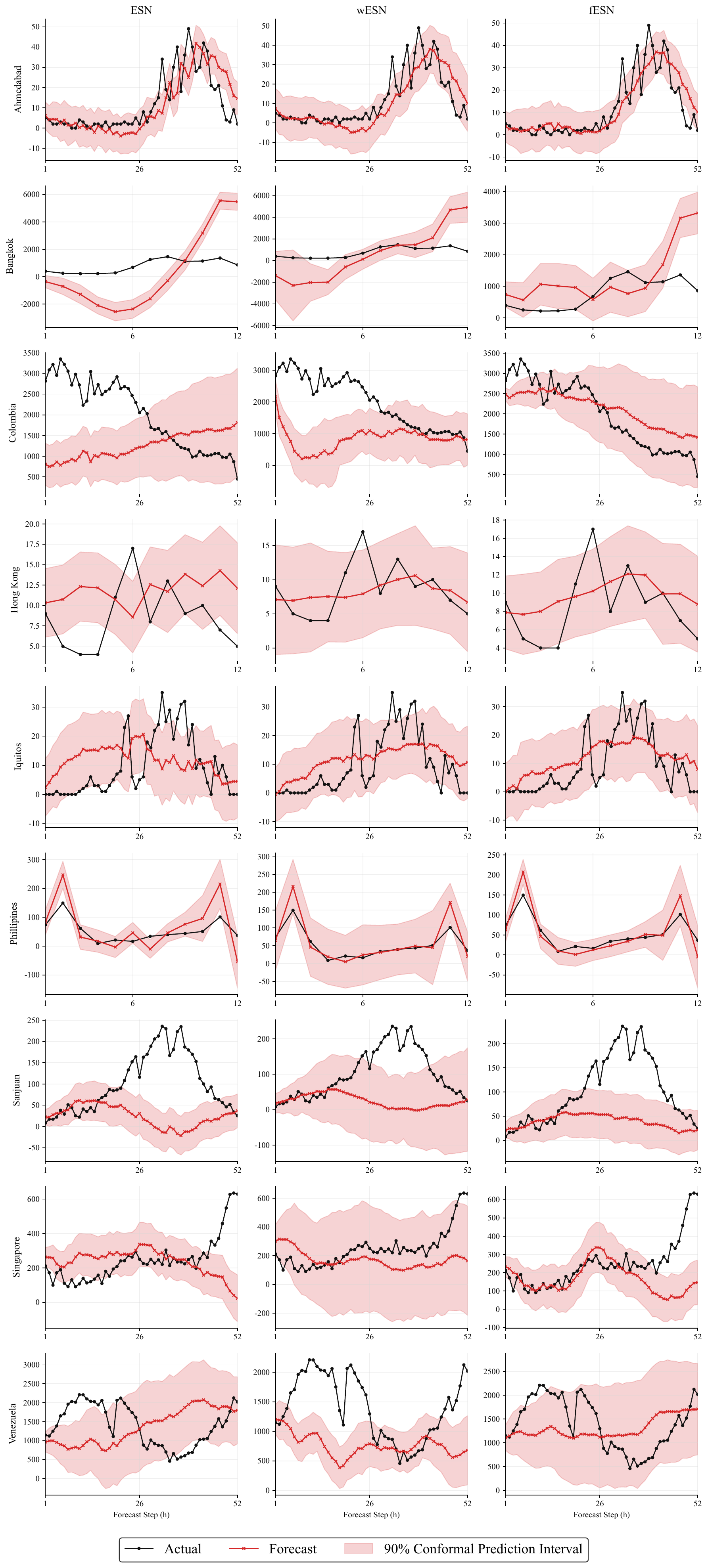}
    \caption{Conformal prediction interval (shaded region) of ESN, wESN, and fESN along with the point forecasts (red) and ground truth (black) observations for long-term horizon.}
    \label{fig:cal_long}
\end{figure}

\begin{figure}
    \centering
    \includegraphics[width=0.9\linewidth, height=0.9\textheight]{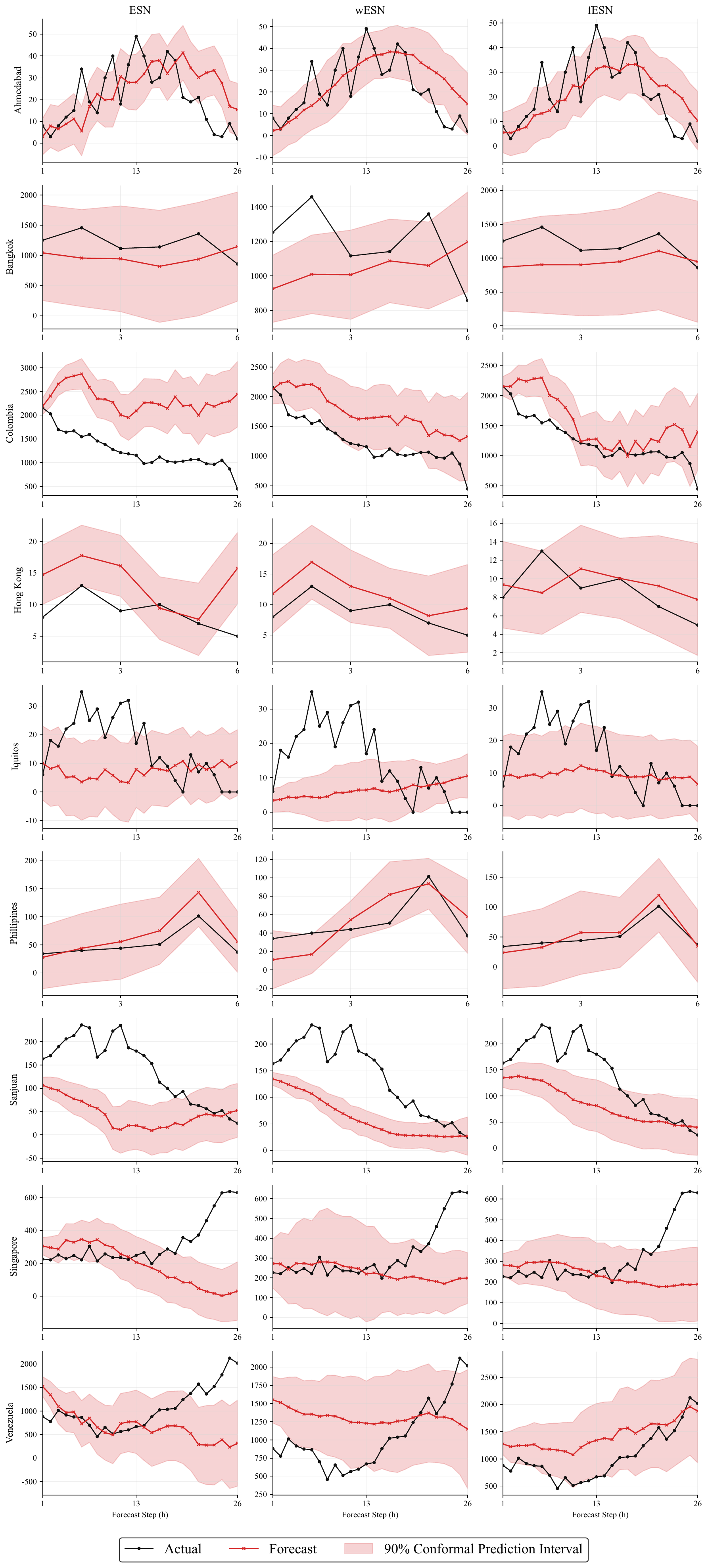}
    \caption{Conformal prediction interval (shaded region) of ESN, wESN, and fESN along with the point forecasts (red) and ground truth (black) observations for medium-term horizon.}
    \label{fig:cal_medium}
\end{figure}

\begin{figure}
    \centering
    \includegraphics[width=0.9\linewidth, height=0.9\textheight]{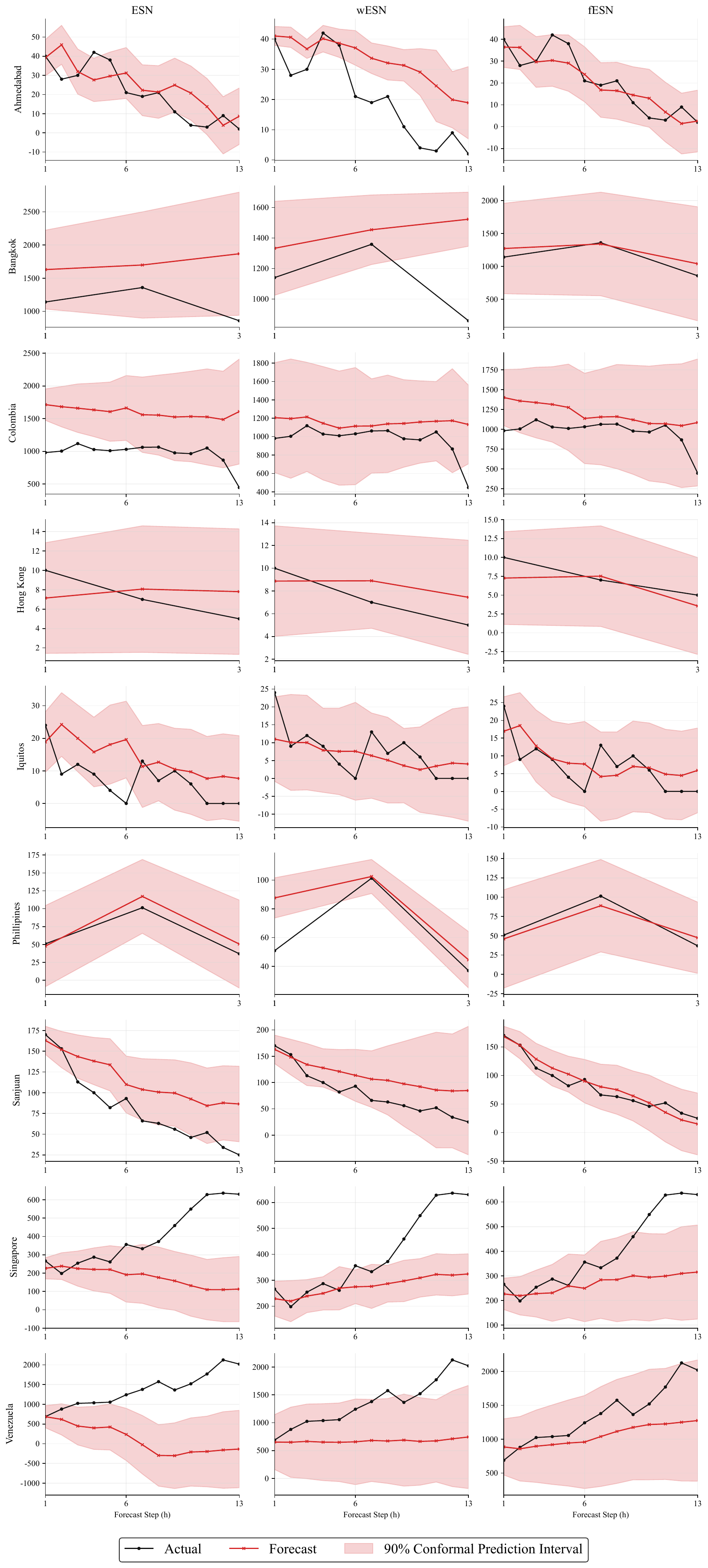}
    \caption{Conformal prediction interval (shaded region) of ESN, wESN, and fESN along with the point forecasts (red) and ground truth (black) observations for short-term horizon.}
    \label{fig:cal_short}
\end{figure}

\begin{table}[htbp]
\centering
\caption{Medium-horizon conformal prediction performance, reported as Coverage (Mean Interval Width, Winkler Score). For each dataset, the minimum Winkler Score across models is shown in \textbf{bold}.}
\label{tab:summary_medium}
\begin{adjustbox}{}
%{max width=\textwidth}
% --- CONTENT FROM PANDAS ---
\begin{tabular}{llll}
\toprule
Model ($\rightarrow$) &                  \multirow{2}{*}{ESN} &                          \multirow{2}{*}{wESN}   &                      \multirow{2}{*}{fESN}  \\
Dataset ($\downarrow$)     &                           &                                 &                                   \\
\midrule
Ahmedabad   &      0.65 (23.54, 90.38) &               0.77 (25.25, 49.80) &      \textbf{0.77 (22.58, 48.91)} \\
Bangkok     &  1.00 (1747.88, 1747.88) &            0.33 (487.24, 1991.14) &  \textbf{1.00 (1557.43, 1557.43)} \\
Colombia    &  0.04 (944.07, 13038.67) &  \textbf{0.46 (1011.86, 2125.41)} &            0.69 (892.65, 2389.77) \\
Hong Kong   &      0.50 (10.22, 41.64) &               1.00 (12.33, 12.33) &       \textbf{0.83 (9.91, 10.02)} \\
Iquitos     &      0.65 (25.54, 90.60) &              0.35 (13.52, 148.66) &      \textbf{0.65 (25.08, 56.30)} \\
Phillipines &    1.00 (119.85, 119.85) &      \textbf{0.83 (58.50, 65.85)} &             1.00 (124.99, 124.99) \\
Sanjuan     &    0.27 (93.17, 1333.59) &             0.12 (47.28, 1181.42) &     \textbf{0.42 (88.82, 626.16)} \\
Singapore   &   0.50 (274.76, 2141.85) &            0.54 (392.37, 2287.54) &   \textbf{0.69 (284.69, 1160.68)} \\
Venezuela   &  0.65 (1289.62, 4564.18) &           0.46 (1171.94, 2683.15) &  \textbf{0.73 (1276.45, 1777.16)} \\
\bottomrule
\end{tabular}

% --- END CONTENT ---
\end{adjustbox}
\end{table}

\begin{table}[htbp]
\centering
\caption{Short-horizon conformal prediction performance, reported as Coverage (Mean Interval Width, Winkler Score). For each dataset, the minimum Winkler Score across models is shown in \textbf{bold}.}
\label{tab:summary_short}
\begin{adjustbox}{}
%{max width=\textwidth}
% --- CONTENT FROM PANDAS ---
\begin{tabular}{llll}
\toprule
Model ($\rightarrow$) &                  \multirow{2}{*}{ESN} &                          \multirow{2}{*}{wESN}   &                      \multirow{2}{*}{fESN}  \\
Dataset ($\downarrow$)     &                           &                                 &                                   \\
\midrule
Ahmedabad   &       0.77 (25.85, 46.56) &        0.23 (12.50, 146.27) &      \textbf{1.00 (24.90, 24.90)} \\
Bangkok     &   0.67 (1548.99, 2095.49) &      0.67 (475.35, 3729.58) &  \textbf{1.00 (1559.14, 1559.14)} \\
Colombia    &   0.46 (1092.49, 3968.84) &     0.92 (1093.26, 1491.23) &  \textbf{0.92 (1197.94, 1298.42)} \\
Hong Kong   &       1.00 (12.47, 12.47) &  \textbf{1.00 (9.37, 9.37)} &               1.00 (12.83, 12.83) \\
Iquitos     &       0.77 (23.56, 47.00) &         0.92 (25.73, 27.53) &      \textbf{0.92 (23.06, 23.45)} \\
Phillipines &     1.00 (113.27, 113.27) &        0.67 (30.36, 181.87) &    \textbf{1.00 (113.16, 113.16)} \\
Sanjuan     &      0.38 (70.12, 171.90) &       1.00 (132.93, 132.93) &      \textbf{1.00 (77.66, 77.66)} \\
Singapore   &    0.46 (276.00, 2556.36) &      0.31 (149.31, 3060.77) &   \textbf{0.69 (285.00, 1046.63)} \\
Venezuela   &  0.15 (1388.07, 13064.93) &     0.62 (1475.14, 3749.08) &  \textbf{0.92 (1386.93, 1398.85)} \\
\bottomrule
\end{tabular}

% --- END CONTENT ---
\end{adjustbox}
\end{table}

%% or include bibliography directly:
% \begin{thebibliography}{}
% \bibitem[\protect\citeauthoryear{???}{???}]{b1}
% \end{thebibliography}
\end{document}